\newcommand{\forloop}[5][1] { \setcounter{#2}{#3} \ifthenelse{#4} { #5 \addtocounter{#2}{#1} \forloop[#1]{#2}{\value{#2}}{#4}{#5} }  Else { } }
\DeclareMathAlphabet{\mathpzc}{OT1}{pzc}{m}{it}
\newtheorem{cor}{Corollary}
\newtheorem{lem}{Lemma}
\newtheorem{prop}{Proposition}
\newtheorem{defn}{Definition}
\newtheorem{rem}{Remark}
\newtheorem{notn}{Notation}
\newcommand{\Gnet}{\mathds{G}_{\textrm{\sffamily\textsc{N}}}}
\newcommand{\cur}{\textrm{}}
\newcommand{\Sn}{\textrm{\sffamily\textsc{Tgt}}}
\newcommand{\Crd}{\textrm{\sffamily\textsc{Card}}}
\newcommand{\eg}{\geqq_{\textbf{\texttt{(Elementwise)}}}}
\newcommand{\Q}{\mathscr{P}}
\newcommand{\Pitilde}{\widetilde{\Pi}}
\newcommand{\mudble}[2]{\widehat{\nu}^\infty_{\left(#1,#2\right)}}
\newcommand{\nuinf}[2][\theta]{\widehat{\nu}^\infty_{#1} #2}
\newcommand{\ones}{\mathbf{e}}
\newcommand{\myar}{\ar@[|(2.5)]}
\newcommand{\myarT}{\ar@[|(3.5)]}
\newcommand{\myarL}{\ar@[|(1.5)]}
\newcommand{\upd}{\mathds{U}}
\newcommand{\Tc}{\mathds{T}_c}
\newcommand{\cgather}[2][0pt]{\begingroup\setlength\abovedisplayskip{#1}\setlength\belowdisplayskip{#1}\begin{gather} #2 \end{gather}\endgroup}
\newcommand{\cgathers}[2][0pt]{\begingroup\setlength\abovedisplayskip{#1}\setlength\belowdisplayskip{#1}\begin{gather*} #2 \end{gather*}\endgroup}
\newcommand{\calign}[2][0pt]{\begingroup\setlength\abovedisplayskip{#1}\setlength\belowdisplayskip{#1}\begin{align} #2 \end{align}\endgroup}
\newcommand{\Mblue}{\color{MidnightBlue}}
\newcommand{\Nblue}{\color{DodgerBlue}}
\newcommand{\Dgreen}{\color{DarkGreen}}
\newcommand{\BRed}{\color{DarkRed}}
\newcommand{\red}{\color{Red2}}
\newcommand{\black}{\color{black}}
\title{
 Distributed Self-Organization Of Swarms  To Find Globally $\epsilon$-Optimal Routes To Locally Sensed Targets
}
\author{ 
Ishanu~Chattopadhyay
\thanks{
Corresponding Author, email: ixc128@psu.edu, Mechanical Engineering, The Pennsylvania State University,  USA (Supported in part by the U.S.  Army Research Office (W911NF-07-1-0376) and  the Office of Naval Research (N00014-09-1-0688)) 
} 
} 
\begin{document}
\maketitle
\allowdisplaybreaks{

\begin{abstract}The problem of near-optimal distributed path planning to locally sensed targets is investigated in the context of  large swarms.
The proposed algorithm uses only information that can be locally queried, and 
rigorous theoretical results on convergence, robustness, scalability are established, and effect of system parameters such as the agent-level communication radius and agent velocities on global performance is analyzed.
The fundamental philosophy of the proposed approach is to percolate local information across the swarm, enabling agents to indirectly access the global context. A gradient emerges, reflecting the performance of agents, computed in a distributed manner via local information exchange between neighboring agents. It is shown that to follow near-optimal routes to a target which can be only sensed locally, and whose location is not known a priori, the agents need to simply move towards its ``best'' neighbor, where the notion of ``best'' is obtained by computing the state-specific language measure of an underlying probabilistic finite state automata. The theoretical results are validated in high-fidelity simulation experiments, with excess of $10^4$ agents.
\end{abstract}
\begin{keywords}
 Optimization; Swarms; Probabilistic  State Machines
\end{keywords}

\begin{AMS}\end{AMS}

\pagestyle{myheadings}
\thispagestyle{plain} 
\markboth{I. Chattopadhyay}{ Distributed Self-Organization Of Swarms  To Find  $\epsilon$-Optimal Routes }
\section{Introduction \& Motivation}
Path planning in a co-operative  environment is a problem of great interest in multi-agent robotics.   Recent developments in micro machining and MEMs have opened up the possibility of engineering extremely small and cheap robotic platforms in large numbers. Limited in size, on-board computational resources and power, such robots nevertheless can potentially exploit co-operation  to accomplish complex tasks \cite{DG89,Gage1992,Holl99,Unsal1994} including surveillance,
reconnaissance, path finding and collaborative payload conveyance.
However, coordinating such engineered swarms unveils new challenges not encountered in the operation of one or a few robots~\cite{Gage93,LB92,Payton01}. Coordination schemes  requiring unique identities
for each robot, explicit routing of point-to-point communication between robots, or centralized
representations of the state of an entire swarm are no longer viable. Thus, any approach to  effectively control swarms must be intrinsically scalable, and must only use information that is locally available. The immediate question for the control theorist is whether such algorithms are able to guarantee any level of global performance.
This is precisely the problem that is investigated in this paper, with an affirmative answer; a distributed scalable control algorithm is proposed  that allows very large swarms (simulation results obtained with $10^4$ agents) to self-organize and find near-global-optimal routes to locally known targets.
The proposed algorithm uses only information that can be locally queried, and 
rigorous theoretical results on convergence, robustness, scalability are established, and effect of system parameters such as the agent-level communication radius and agent velocities on global performance is analyzed.

The fundamental philosophy of the proposed approach is to percolate local information across the swarm, enabling agents to indirectly access the global context. A gradient emerges, reflecting the performance of agents, computed in a distributed manner via local information exchange between neighboring agents. It is shown that to follow near-optimal routes to a target which can be only sensed locally, and whose location is not known a priori, the agents need to simply move towards its ``best'' neighbor, where the notion of ``best'' is obtained by computing the state-specific language measure~\cite{CR07} of an underlying probabilistic finite state automata~\cite{CR08}.

Gradient based method in swarm control are not new~\cite{Payton01,STMRK09,NCD08,HW09}.
Majority of reported work following this direction draw
inspiration from swarming phenomena observed in nature, where self-organized exploration strategies emerge at the collective level as a result of simple rules
followed by individual agents. To produce the global behavior, individuals interact by using simple  and mostly local
communication protocols. Social insects are a good biological example of
organisms collectively exploring an unknown environment, and they have
 served as a source of direct inspiration for research on self-organized
cooperative robotic exploration and path formation in groups of robots~\cite{Svennebring04,WL99}.
%
The standard engineering approach to analyze desired global patterns and  break
them down into a set of simple rules governing individual agents is seldom applicable to
large populations aspiring to accomplish complex tasks. Nevertheless some progress have been made in this direction~\cite{Couzin2005}. 
\begin{quote}{\itshape 
``We now know that such synchronized group behavior (of flocking birds) is mediated through sensory modalities such as vision, sound, pressure and odor detection. Individuals tend to maintain a personal space by avoiding those too close to themselves; group cohesion results from a longer-range attraction to others; and animals often align their direction of travel with that of nearby neighbors. These responses can account for many of the group structures we see in nature, including insect swarms and the dramatic vortex-like mills formed by some species of fish and bat. By adjusting their motion in response to that of near neighbors, individuals in groups both generate, and are influenced by, their social context — there is no centralized controller.}'' Collective Minds, D. Couzin~\cite{Couzin2007}
\end{quote}
On the other hand, the
Evolutionary Robotics (ER) methodology~\cite{Nolfi00} allows
for an implementation of a top-down approach, where  reinforcement learning via evolutionary optimization
techniques allows assessment of the system’s overall performance, and sequentially improve control laws.
While such heuristic techniques have been shown to yield robust and scalable systems, 
assuring global performance has remained an elusive challenge. The present paper aims to 
fill this gap by proposing a simple control approach with provable guarantees on global performance. The key difference with the reported gradient based techniques lies in the 
formal model that is developed, and the associated theoretical results that show that the algorithm  achieves near-global optimality.
%

To the best of the author's knowledge, such an approach has not been previously investigated, primarily due to the complexity spike encountered in deriving optimal solutions in a decentralized environment.
Recent investigations~\cite{Bn00,BGIZ02} into the solution complexity of  decentralized Markov decision processes  have shown that the problem is exceptionally hard  even for two agents;  illustrating a fundamental
divide between centralized and decentralized control of MDP. In contrast
to the centralized approach, the decentralized case provably does not admit
polynomial-time algorithms. Furthermore, assuming $\textrm{EXP} = \textrm{NEXP}$, the problems require super-exponential time to solve in the worst case.
Furthermore, since distributed systems with access to only local information can be mapped to partially observable MDPs, it follows from \cite{LGM01} that such problems are non-approximable, negating the possibility of obtaining optimal solutions to approximate representations.

Such negative results do not preclude the possibility of obtaining {\itshape near-optimal} solutions efficiently, when the set of models considered is a strictly smaller subset of general MDPs.  This is precisely what we achieve in this paper; casting the path planning problem as a performance maximization problem for an underlying probabilistic finite state automata (PFSA). In spite of similar Markovian assumptions, the PFSA model is distinct from the general MDPS (See Section~\ref{subsecDP}), and admits decentralized manipulation, such that the control policy, on convergence, is within an $\epsilon$ bound of the global optimal. Furthermore, one can freely choose the error bound $\epsilon$ (and make it as small as one wishes), with the caveat that the convergence time 
increases (with no finite upper bound) with decreasing $\epsilon$.
%

The present work is also distinct from Potential Field-based  methodologies (PFM) widely studied in the centralized single-or-few robot scenarios~\cite{KGZ07,SHK08}. Early PFM implementations had substantial shortcomings~\cite{BK91-1} suffering from trap situations, instability in narrow passages $etc$.
Some of these shortcomings have been addressed recently, leading to globally convergent potential planners~\cite{Volpe1990,Connolly1997,Vascak2007,KK93,S93,AKH08,WM03}. These approaches are computationally hard for single-or-few robots, and 
thus not applicable in the current context. Some variations of the latter approaches have attempted to reduce the complexity by
 combining  search algorithms and
potential fields \cite{Shar1993,Barraquand1992, Barraquand1991}, virtual obstacle methods
method \cite{Park2003,Li2000}, sub-goal methods
\cite{Bell2004,Weir2006}, wall-following methods
\cite{Borenstein1989,Yun1997,Park2003,Mabrouk2008w}  etc. Nevertheless, since
heuristic strategies only based on local environment
information are usually applied, many of these methods cannot
guarantee  convergence in general.

The rest of the  paper is organized in   six sections. Section~\ref{sec2} briefly summarizes the theory of quantitative measures of probabilistic regular languages, and the pertinent approaches to centralized performance maximization of PFSA. Section~\ref{sec3} develops the PFSA model for a swarm, and  Section~\ref{sec4} presents the  theoretical development for decentralized PFSA optimization, thus solving the problem of computing $\epsilon$-optimal routes in a static or frozen swarm. Section~\ref{secMob} 
extends the results to a dynamic swarm, where route optimization and positional updates are carried out simultaneously. 
Section~\ref{sec6} validates the theoretical development with high fidelity simulation results. The paper concludes in Section~\ref{sec7} with recommendations for future work.
\section{Background: Language Measure Theory}\label{sec2}
This section summarizes the concept of signed real measure of probabilistic regular
languages, and its application in performance optimization of 
probabilistic finite state automata (PFSA)~\cite{CR07}.
A string over an alphabet ($i.e.$ a non-empty finite set) $\Sigma$ is a finite-length sequence of symbols from $\Sigma$~\cite{HMU01}. 
The Kleene closure of $\Sigma$, denoted by $\Sigma^*$, is the set of all finite-length strings of symbols including the null string $\epsilon$.
$xy$ is the  concatenation of strings $x$ and $y$, and the null string $\epsilon$ is the identity element of the concatenative monoid.
\begin{defn}[PFSA]\label{defPFSA}
A PFSA $G$ over an alphabet $\Sigma$ is a sextuple $(Q,\Sigma,\delta,\widetilde{\Pi},\chi,$ $\mathscr{C})$, where $Q$ is a set of states, $\delta:Q\times\Sigma^\star \rightarrow Q$ is
the (possibly partial) transition map; $\widetilde{\Pi}: Q\times \Sigma \rightarrow [0,1]$ is an output mapping or  the probability morph function that specifies the state-specific symbol generation probabilities, satisfying
$\forall q_i \in Q, \sigma \in \Sigma, \widetilde{\Pi}(q_i,\sigma) \geqq 0$, and $ \sum_{\sigma\in\Sigma}\widetilde{\Pi}(q_i,\sigma)=1$, the state characteristic function $\chi:Q \rightarrow [-1, 1]$ 
assigns a signed real weight to each state reflecting the immediate pay-off from visiting that state, and $\mathscr{C}$ is the set of controllable transitions that can be disabled (See Definition~\ref{defcontapp}) by an imposed control policy.
\end{defn}
\begin{defn}[Control Philosophy]\label{defcontapp}
If $\delta(q_i,\sigma) = q_k$, then the \textit{disabling}
of  $\sigma$ at 
$q_i$ prevents the state transition from $q_i$ to $q_k$.  Thus, disabling a transition $\sigma$ at a  state $q$ replaces 
the original transition with a  self-loop  with identical occurrence probability, $i.e.$ we now have $\delta(q_i,\sigma) = q_i$. 
Transitions that can be so disabled are 
\textit{controllable}, and belong to the set $\mathscr{C}$.
\end{defn}\vspace{0pt}
\begin{defn}\label{Lgen}
The language $L(q_i)$ generated by a PFSA $G$ initialized at the
state $q_i\in Q$ is defined as:
$    L(q_i) = \{s \in \Sigma^* \ | \ \delta(q_i, s) \in Q \}$
Similarly, for every $q_j\in Q$,  $L(q_i, q_j)$ denotes the set of all
strings that, starting from the state $q_i$, terminate at the
state $q_j$, i.e.,
$L(q_i,q_j) = \{ s \in \Sigma^* \ | \ \delta(q_i, s) = q_j \in Q \}$
\end{defn}
\begin{defn}[State Transition Matrix]\label{pifn}The state transition probability matrix $\Pi \in [0,1]^{\Crd(Q) \times \Crd(Q)}$,
for a given PFSA is defined as:
$\forall q_i, q_j \in Q, \Pi_{ij} =
     \sum_{\sigma\in\Sigma \ \mathrm{s.t.} \  \delta(q_i,\sigma)=q_j } \widetilde{\Pi}(\sigma, q_i)$
Note that $\Pi$ is a square non-negative stochastic matrix~\cite{BR97}, where $\Pi_{ij}$ is the probability of transitioning from  $q_i$ to $q_j$.
\end{defn}\vspace{0pt}
\begin{notn}
We use matrix notations interchangeably for the morph function $\widetilde{\Pi}$. In particular,
$\widetilde{\Pi}_{ij} = \widetilde{\Pi}(q_i,\sigma_j)$ with $ q_i \in Q, \sigma_j \in \Sigma$.
Note that $\widetilde{\Pi} \in [0,1]^{\Crd(Q) \times \Crd(\Sigma)}$ is not necessarily square, but each row sums up to unity.
\end{notn}
A signed real measure~\cite{R88} $\nu^i:{2^{L(q_i)}} \rightarrow
\mathbb{R}\equiv(-\infty,+\infty)$ is constructed on the
$\sigma$-algebra $2^{L(q_i)}$~\cite{CR07}, implying that every singleton string set $\{ s \in L(q_i) \} $ is a measurable set. 
\begin{defn}[Language Measure]\label{measurefn}Let $\omega \in L(q_i, q_j)\subseteq 2^{L(q_i)}$. The signed
real measure $\nu^i_\theta$ of every singleton string set $ \{ \omega
\} $ is defined as:
$\nu^i_\theta(\{\omega \})\triangleq\theta (1-\theta)^{\vert \omega \vert}\widetilde{\Pi}(q_i,\omega)\chi(q_j)$.
For every choice of the parameter $\theta \in (0,1)$, the signed real measure of a sublanguage $L(q_i,q_j) \subseteq
L(q_i)$ is defined as:
$ \nu^i_\theta(L(q_i, q_j)) \triangleq \sum_{\omega\in
L(q_i, q_j)} \theta (1-\theta)^{\vert \omega \vert}\widetilde{\Pi}( q_i,\omega)\chi_j
$. The measure of $L(q_i)$, is defined as 
$\nu^i_\theta(L(q_i)) \triangleq \sum_{q_j \in Q}
\nu^i_\theta(L_{i,j})$.
\end{defn}
\begin{notn}
 For a given PFSA, we interpret the set of measures $\nu^i_\theta(L(q_i))$ as a real-valued vector of length $\Crd(Q)$ and  denote $\nu^i_\theta(L(q_i))$ as $\nu_\theta \vert_i$.
\end{notn}
The language measure can be expressed vectorially as (where the inverse  exists for $\theta \in (0,1]$~\cite{CR07}):
\begin{gather}\label{eqmesd}
 \nu_\theta = \theta \big [ \mathbb{I} - (1-\theta)\Pi \big ]^{-1} \chi 
\end{gather}

In the limit of $\theta \rightarrow 0^+$, the language measure of singleton strings can be interpreted to be product of the conditional generation probability of the string, and the 
characteristic weight on the terminating state. Hence, smaller the characteristic, or smaller the probability of generating the string, smaller is its measure. Thus, if the
characteristic values are chosen to represent the control specification, with more positive weights given to more desirable states, then the measure represents how \textit{good} the particular string is with respect to the given specification, and the given model. The limiting language measure $\nu_0\vert_i = \lim_{\theta \rightarrow 0^+}\theta \big [ \mathbb{I} - (1-\theta)\Pi \big ]^{-1} \chi \big \vert_i$
sums up the limiting measures of each string starting from $q_i$, and thus captures how \textit{good} $q_i$ is, based on not only its own characteristic, but on how \textit{good} are the  strings  generated in  future from $q_i$. It is thus a quantification of the impact of $q_i$, in a probabilistic sense, on future dynamical evolution~\cite{CR07}.

\begin{defn}[Supervisor]A supervisor is a control policy  disabling a specific subset of the set $\mathscr{C}$ of
controllable transitions. Hence there is a bijection
between the set of all possible supervision  policies and the
power set $2^{\mathscr{C}}$. \end{defn}

Language measure  allows quantitative comparison of
different supervision policies.
\begin{defn}[Optimal Supervision Problem]\label{pdef}
Given $G=(Q,\Sigma,\delta,\widetilde{\Pi},\chi,\mathscr{C})$, compute a supervisor disabling 
$\mathscr{D}^\star \subseteq \mathscr{C}$, s.t. $
\nu^{\star}_0 \eg \nu^{\dag}_0 \ \
\forall \mathscr{D}^{\dag} \subseteq \mathscr{C} $ where
$\nu^{\star}_0$, $\nu^{\dag}_0$ are
the limiting measure vectors of  supervised plants $G^{\star}$, $G^{\dag}$ under $\mathscr{D}^{\star}$, $\mathscr{D}^\dag$ respectively.
\end{defn}

The solution to the optimal supervision problem  is obtained
in \cite{CR07} by designing an optimal policy using $\nu_\theta$ with $\theta \in (0,1)$. To ensure
that the computed optimal policy coincides with the one for
$\theta \rightarrow 0^+$, the authors choose a \textit{small non-zero}
value for $\theta$ in each iteration step of the design
algorithm. 
To address numerical issues, algorithms reported in \cite{CR07}
computes how small a $\theta$ is actually sufficient to ensure that the optimal solution computed with this value of $\theta$ coincides with the optimal policies for any smaller value, $i.e.$,
computes the critical lower bound $\theta_\star$. (This is closely related to the notion of Blackwell optimality; See Section~\ref{subsecDP})
Moreover the solution obtained is stationary,  efficiently computable, and can be shown to be the unique  maximally permissive policy among ones with maximal performance.
Language-measure-theoretic optimization is \textit{not a search} (and has several key advantages over Dynamic Programming based approaches. See Section~\ref{subsecDP} for details); it is an iterative sequence of combinatorial manipulations, that 
monotonically improves the measures, leading to  element-wise maximization of $\nu_\theta$ (See \cite{CR07}).
It is shown in \cite{CR07} that
 $\lim_{\theta \rightarrow 0^+}  \theta \big [ \mathbb{I} - (1-\theta)\Pi \big ]^{-1} \chi = \Q \chi$,
where the $i^{th}$ row of $\Q$ (denoted as $\wp^i$) is the stationary probability vector for the PFSA initialized at state $q_i$. 
In other words, $\Q$ is the Cesaro limit of the stochastic matrix $\Pi$, satisfying
$ \Q = \lim_{k\rightarrow\infty}\frac{1}{k}\sum_{j=0}^{k-1}\Pi^j$~\cite{BR97}.
\begin{prop}[See \cite{CR07}]\label{prop1}
Since the optimization maximizes the language measure element-wise for $\theta \rightarrow 0^+$, it follows that 
for the optimally supervised plant, the standard inner product $\langle \wp^i,\chi\rangle$ is maximized, irrespective of the starting state $q_i \in Q$.
\end{prop}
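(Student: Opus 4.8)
The plan is to reduce the claim to the limiting identity $\lim_{\theta \to 0^+} \theta \big[ \mathbb{I} - (1-\theta)\Pi \big]^{-1}\chi = \Q\chi$ already recorded above, and then simply read off the element-wise maximization coordinate by coordinate. First I would write $\nu_0\vert_i = \lim_{\theta\to 0^+}\nu_\theta\vert_i = (\Q\chi)_i$, and expand the matrix-vector product against the $i^{th}$ row $\wp^i$ of $\Q$ to obtain $(\Q\chi)_i = \sum_{q_j\in Q}\Q_{ij}\chi_j = \langle \wp^i, \chi\rangle$. This exhibits the $i^{th}$ component of the limiting measure vector as \emph{exactly} the standard inner product of the stationary distribution seen from $q_i$ with the characteristic vector $\chi$, which is the quantity the proposition speaks about.

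With that identification in hand, the conclusion is almost immediate. I would invoke the optimization result of \cite{CR07}, which furnishes a supervisor disabling $\mathscr{D}^\star \subseteq \mathscr{C}$ with $\nu^\star_0 \eg \nu^\dag_0$ for every admissible $\mathscr{D}^\dag \subseteq \mathscr{C}$, the ordering being element-wise. Comparing the $i^{th}$ entries on both sides and substituting the identity derived above yields $\langle \wp^i, \chi\rangle$ under $\mathscr{D}^\star$ dominating $\langle \wp^i, \chi\rangle$ under any competing $\mathscr{D}^\dag$, for each fixed $q_i$. Since the inequality is asserted component by component, the argument is uniform over $i$ and no starting state is privileged — optimality of the inner product holds irrespective of $q_i \in Q$.

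The one point I would treat with genuine care, and which is really the only substantive step, is the legitimacy of passing the $\theta \to 0^+$ limit through the optimization: the element-wise ordering $\eg$ is produced by the design algorithm at a \emph{fixed, small} value of $\theta$, and one must argue that it persists in the limit rather than being an artifact of that particular $\theta$. This is precisely what the critical-lower-bound $\theta_\star$ machinery of \cite{CR07} supplies — for every $\theta \in (0,\theta_\star)$ the maximizing policy is constant and coincides with the $\theta \to 0^+$ optimizer — so the element-wise inequality is stable under the limit and the substitution $\nu_0\vert_i = \langle \wp^i, \chi\rangle$ may be carried out inside the ordering without loss. Once this interchange is justified, the proposition follows with no further computation.
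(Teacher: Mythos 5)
Your proposal is correct and follows essentially the same route the paper takes: it treats the claim as an immediate consequence of the identity $\lim_{\theta \rightarrow 0^+}\theta\big[\mathbb{I}-(1-\theta)\Pi\big]^{-1}\chi = \Q\chi$ (so that $\nu_0\vert_i = \langle\wp^i,\chi\rangle$) combined with the element-wise domination $\nu^\star_0 \eg \nu^\dag_0$ from the optimal supervision problem, with the $\theta_\star$ critical-lower-bound machinery of \cite{CR07} justifying the passage to the limit. Your explicit flagging of the limit-interchange step is a welcome touch of care, but it is the same justification the paper relies on, so there is nothing further to add.
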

\begin{notn}
The optimal $\theta$-dependent measure for a PFSA is denoted as $\nu^\star_\theta$
and the limiting measure  as $\nu^\star$.
\end{notn}
\begin{algorithm}[!ht]
  \SetKwData{Left}{left}
  \SetKwData{This}{this}
  \SetKwData{Up}{up}
  \SetKwFunction{Union}{Union}
  \SetKwFunction{FindCompress}{FindCompress}
  \SetKwInOut{Input}{input}
  \SetKwInOut{Output}{output}
  \SetKw{Tr}{true}
   \SetKw{Tf}{false}
  \caption{Computation of Optimal Supervisor}\label{Algorithm02}
\Input{$\mathbf{P}, \ \boldsymbol{\chi}, \ \mathscr{C}$}
\Output{Optimal set of disabled transitions
$\mathscr{D}^{\star}$} \Begin{ Set
$\mathscr{D}^{[0]}=\emptyset$ \tcc*[r]{Initial disabling set}
Set $\widetilde{\Pi}^{[0]}=\widetilde{\Pi} $ \tcc*[r]{Initial
event prob. matrix}
Set $\theta^{[0]}_{\star} = 0.99$, Set $k \ = \ 1$ , Set $
\texttt{Terminate} = $ \Tf \;
 \While{($ \texttt{Terminate}$ == \Tf)}{
 Compute $\theta_{\star}^{[k]}$\tcc*[r]{Algorithm~\ref{Algorithm01}}
 Set $\widetilde{\Pi}^{[k]} = \frac{1-\theta_{\star}^{[k]}}{1-\theta_{\star}^{[k-1]}}\widetilde{\Pi}^{[k-1]}$\;
 Compute $\boldsymbol{\nu}^{[k]}$ \;
 \For{$j = 1$ \textbf{to} $n$}{
\For{$i = 1$ \textbf{to} $n$}{
 Disable  all controllable
  $q_i \xrightarrow[]{\sigma} q_j$ s.t.
$\boldsymbol{\nu}^{[k]}_j < \boldsymbol{\nu}^{[k]}_i $ \;
 Enable all controllable   $q_i \xrightarrow[]{\sigma} q_j$
 s.t.
$\boldsymbol{\nu}^{[k]}_j \geqq \boldsymbol{\nu}^{[k]}_i $ \;
} } Collect all disabled transitions in $\mathscr{D}^{[k]}$\;
\eIf{$\mathscr{D}^{[k]} ==
\mathscr{D}^{[k-1]}$}{$\texttt{Terminate} =$ \Tr \;}{$k \ = \
k \ + \ 1$ \;}
 }
 $\mathscr{D}^{\star} \ = \ \mathscr{D}^{[k]}$
 \tcc*[r]{Optimal disabling set}
  }\vspace{0pt}
\end{algorithm}
\begin{algorithm}[!ht]
  \SetKwInOut{Input}{input}
  \SetKwInOut{Output}{output}
  \SetKw{Tr}{true}
   \SetKw{Tf}{false}
  \caption{Computation of the Critical Lower Bound $\theta_{\star}$ }\label{Algorithm01}
\Input{$\mathbf{P}, \ \boldsymbol{\chi}$}
\Output{$\theta_{\star}$} \Begin{ Set $\theta_{\star} =
1$, Set $\theta_{curr} = 0$\; Compute $\Q$ , $M_0$ ,
 $M_1 $,
  $M_2 $\;
 \For{$j = 1$ \textbf{to} $n$}{
\For{$i = 1$ \textbf{to} $n$}{ \eIf{$\left (\Q
\boldsymbol{\chi} \right )_i - \left (\Q \boldsymbol{\chi}
\right )_j \neq 0$}{$\theta_{curr} = \frac{1}{8M_2}\big \vert
\left (\Q \boldsymbol{\chi} \right )_i - \left (\Q
\boldsymbol{\chi} \right )_j \big \vert $}{
\For {$r=0$ \textbf{to} $n$}{ \eIf{$\left
(M_0\boldsymbol{\chi} \right )_i \neq \left
(M_0\boldsymbol{\chi} \right )_j$}{\textbf{Break}\;}{\If
{$\left ( M_0 M_1^r \boldsymbol{\chi} \right )_i \neq \left (
M_0 M_1^r \boldsymbol{\chi} \right )_j$}{\textbf{Break}\;} } }
\eIf{$r==0$}{$\theta_{curr} = \frac{\vert \left \{ (M_0 -\Q
)\boldsymbol{\chi} \right \}_i - \left \{ (M_0 -\Q
)\boldsymbol{\chi} \right \}_j \vert}{8M_2} $\;} {\eIf { $r >
0$
 \textbf{AND} $ r \leq n $ }{ $ \theta_{curr} = \frac{\vert
\left (M_0 M_1\boldsymbol{\chi} \right )_i - \left (M_0 M_1
\boldsymbol{\chi} \right )_j \vert }{2^{r+3}M_2} $ \;}{ $
\theta_{curr} = 1 $ \;}}}
$\theta_{\star}$ = $\mathrm{min} (
\theta_{\star} , \theta_{curr} ) $ \;
 } } }
\end{algorithm}

For  completeness, the key algorithms are included as Algorithms~\ref{Algorithm02} and \ref{Algorithm01}.
\subsection{Relation Of The Centralized Approach To Dynamic Programming}\label{subsecDP}
In spite of underlying Markovian assumptions, the PFSA model is distinct from the standard formalism of (finite state) Controlled Markov Decision Processes (CMDP)~\cite{FS02,Bertsekas1978,Bertsekas1987}.  In  the latter, control actions are not probabilistic; the associated control function maps  states to unique actions in a  deterministic manner, and  the control problem is to decide which of the available control actions should be executed in each state. On the other hand, in the PFSA formalism, control is exerted by selectively disabling controllable probabilistic state transitions, and is thus a probabilistic generalization of supervisory control theory~\cite{RW87}. {\itshape Note that while in the MDP framework, one specifies which control action to take at a given state, in the PFSA formalism one specifies which of the available control actions are \underline{not} allowed at the current state, and that any of the remaining can be executed in accordance to their generation probabilities.}
Denoting the set of controllable transitions at state $q_k$ as $\Sigma_C$, and  $\phi:Q \rightarrow \Sigma_C$ as the control policy  mapping the current state $q_k\in Q$ to the controllable move $\phi(q_k) \in \Sigma_C$ (and assuming that the control action is to dictate the agent to execute a specific controllable move and is not supervisory in nature), one can formulate an analogous optimization problem that admits solution within the DP framework. The transition probabilities for a stationary  policy $\phi$ is given by:
\begin{gather}
\forall q_i,q_j \in  Q, \sigma_r \in \Sigma_C,
     Prob(q_j \big \vert q_i, \phi(q_i) = \sigma_r) = \sum_{\mathclap{ \sigma \textrm{ s.t. }\left(\sigma \in \Sigma\setminus\Sigma_C \cup \{\sigma_r\} \right )\bigwedge \left (\delta(q_i,\sigma) = q_j\right)}}\tilde{\pi}(q_i, \sigma) 
\end{gather}
Immediate rewards, in DP terminology, 
can be  related to the state characteristic $\chi$:
\begin{gather}
     g(q_i,\phi(q_i)) = \chi(q_i)
\end{gather}
We note that the problem at hand must be solved over an infinite horizon, since the total number of transitions ($i.e.$ the path length) is not bounded.
Identifying $(1-\theta)$ as the discount factor,  the cost-to-go (to be maximized) for the infinite horizon discounted cost (DC) problem is given by:
\begin{gather}
     J(q_0,\phi)=\lim_{N\rightarrow \infty} \mathbf{E}\left ( \sum_{i=0}^N (1-\theta)^i g\big(q_i,\phi(q_i)\big) \right )
\end{gather}
and for the infinite horizon Average Cost per stage (AC):
\begin{gather}
     J(q_0)=\lim_{N\rightarrow \infty} \frac{1}{N} \mathbf{E}\left ( \sum_{i=0}^N  g\big(q_i,\phi(q_i)\big) \right )
\end{gather}
AC is more appropriate, since there is no reason to "discount"  events in future. In the PFSA formalism, we solve the analogous discounted problem at sufficiently small $\theta$, and  guarantee that the solution is simultaneously average cost optimal, primarily  due to the following identity~\cite{CR07}:
\begin{gather}
     \lim_{\theta \rightarrow 0^+} \theta\sum_{k=0}^\infty (1-\theta)^k\Pi^k\chi = \lim_{N\rightarrow \infty} \frac{1}{N} \sum_{j=0}^{N-1} \Pi^j \chi = \Q\chi
\end{gather}
where $\Q$ is the Cesaro limit of the stochastic matrix $\Pi$. Thus, the proposed technique can solve the problem by maximizing  
\begin{gather}
\nu_\theta=\lim_{\theta \rightarrow 0^+} \theta\sum_{k=0}^\infty (1-\theta)^k\Pi^k\chi=\lim_{\theta \rightarrow 0^+} \theta \big [ \mathbb{I} - (1-\theta)\Pi \big ]^{-1} \chi
\end{gather}
$i.e.$, the language measure, and achieve maximization of $\Q\chi$  (guaranteeing the probability of reaching the goal is maximized, while simultaneously minimizing collision probability). In any case, the formulated  DP problem can  be solved using standard solution methodologies such as Value Iteration (VI) or Policy Iteration (PI)~\cite{Put90,GS75}. We note that for VI, we need to search for the control action that maximizes the value update over all possible control actions, in each iteration. 
On the other hand, PI involves two steps in each iteration: (1) policy evaluation,
which is very similar to the measure computation step in each iteration for the language-measure-theoretic technique and (2) policy improvement, which involves 
searching for a improved action over possible control actions for each state (which involves at least one product between a $\Crd(Q)\times \Crd(Q)$ matrix of transition probabilities and the current cost vector of length $\Crd(Q)$ per state). The disabling/enabling of controllable transitions  is significantly simpler compared to the search steps that both VI and PI require, and the 
improvement in complexity (for PI which is closer to the proposed algorithm in the centralized case, since the latter proceeds via computing a sequence of monotonically improving policies) is by at least an asymptotic factor of $O(\Crd(Q)^2)$ per iteration (for dense matrices, and $O(\Crd(Q))$ per iteration in the sparse case), which is significant for large problems. 
\begin{rem}
     The number of iterations is not expected to be comparable for the PFSA framework, VI and PI techniques; simulations indicate the measure-theoretic approach converges faster, and detailed investigations in this direction is a topic of future work.
\end{rem}

Another key advantage of the PFSA-based solution methodology  is guaranteed Blackwell optimality~\cite{ABEFG93,FS02}. It is well recognized that the average cost criterion is underselective, namely the finite time behavior is completely ignored. The condition of Blackwell optimality attempts to correct this by demanding the computed controller be optimal for a continuous range of discount factors in the interval $(d_0,1)$, $i.e.$ for $\theta \in (0, 1-d_0)$, where $0< d_0 < 1$. Since the PFSA-based approach maximizes the language measure for some $\theta = \theta_{min}$, such that the optimal policy is guaranteed to be identical for all values of $\theta$ in the range $[\theta_{min}, 0]$, the solution satisfies the Blackwell condition. It is possible to obtain such  Blackwell optimal policies within the DP framework as well, but the approach(es) are significantly more involved (See \cite{FS02}, Chapter 8).
The ability to adapt $\theta$ at each iteration (See Algorithm~\ref{Algorithm01}) leads to  a novel adaptive discounting scheme in the technique proposed, which solves the infinite horizon problem efficiently while using a non-zero $\theta \geqq \theta_{min}$ at all iteration steps.
\section{The Swarm Model}\label{sec3}
 \begin{figure}[t]
\centering 
\includegraphics[width=4.75in]{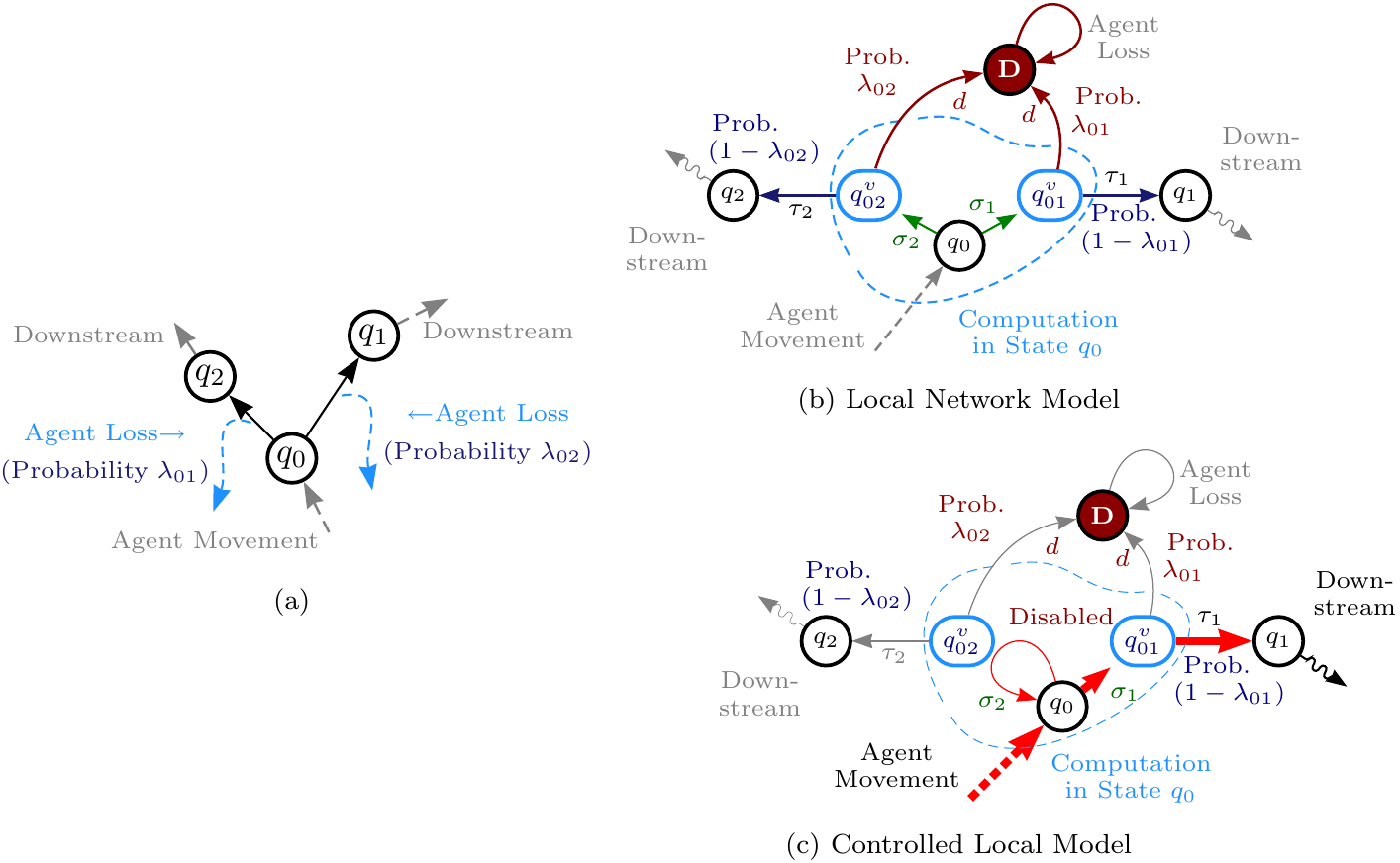}
\caption{Agent-centric local decision-making with non-zero failure probability}\label{figsimT2}
\end{figure}
We consider  ad-hoc mobile network of communicating agents endowed with limited computational resources. For simplicity of exposition, 
we develop the theoretical results under the assumption of a single target, or goal. This is not a serious  restriction and can be easily relaxed.
The location and identity of the target is not known a priori to the individual agents, only ones which are within the communication radius of the target can sense its presence.
The communication radii are assumed to be constant throughout.
 Inter-agent communication links  are assumed to be perfect, which
again can be generalized easily, within our framework. We  assume agents can efficiently gather the following information:
\begin{enumerate}
 \item \textit{(Set of Neighboring agents:)} Number and unique id. of agents to which it can successfully send data via a 1-hop direct line-of-sight link. The communication radius $R_c$ is assumed to be identical for each agent. The set of neighbors for each agent varies 
with time as the swarm evolves.
\item \textit{(Local Navigation Properties:)} Navigation is assumed to occur by moving  towards  a chosen neighbor with constant velocity, the magnitude of which is assumed to be identical for each agent. 
In general, there a non-zero probability of agent failure in the course of execution this maneuver, which is assumed to be either known or learnable by the agents. However the explicit learning of these 
local costs is not addressed in this paper.
\end{enumerate}
The local network model, along with the decision-making philosophy, is illustrated in Figure~\ref{figsimT2}.
We will talk about a \textit{frozen} swarm, which denotes a particular spatial configuration of the agents assumed to be fixed in time. Unless explicitly mentioned, the agents 
are assumed to be updating their positions in continuous time (moving with constant velocity), while changing their headings in discrete time as dictated via on-board decision-making based on locally available information, with the objective of reaching the target with minimum end-to-end probability of agent failure.
We further assume that the failure probabilities  mentioned above are functions of the agent locations, and possibly vary in a smooth non-increasing manner with increasing inter-agent distances. However no time-dependence is assumed, $i.e.$ the failure probabilities remain  constant for a frozen swarm, and change (due to the positional updates) for a mobile one.  The agent velocities are assumed to be significantly slower compared to the 
time required for the convergence of the optimization algorithm for each frozen configuration. The implications of the last assumption will be discussed in the sequel.
First we formalize  a failure-prone ad-hoc network of frozen communicating agents as a probabilistic finite state automata.
\begin{defn}[Neighbor Map For A Frozen Swarm]\label{defneighbor}
If $Q$ is the set of all agents in the network, then the neighbor map  $\mathcal{N}:Q \rightarrow 2^Q$ specifies, for each agent $q_i \in Q$, the set of agents $\mathcal{N}(q_i) \subset Q$ (excluding $q_i$) to which $q_i$ can communicate via a single hop direct link.
\end{defn}
\begin{defn}[Failure Probability]\label{deffailureprob}
The  failure probability $\lambda_{ij} \in [0,1]$ is defined to be the probability of unrecoverable loss of agent $q_i$ in the course of moving towards  agent $q_j$.
\end{defn}

Thus, $\lambda_{ij}$ reflects local or immediate navigation costs, and estimated risks and therefore  varies with the positional coordinates of the agents $q_i$ and $q_j$.
These quantities are not constrained to be symmetric in general, $i.e.$,   $\lambda_{ij} \neq \lambda_{ji}$. We assume the agent-based estimation of these ratios to converge fast enough, in the scenario where such parameters are learned on-line. Since we are more concerned with  decision optimization in this paper, we ignore the parameter estimation problem of learning the failure probabilities, which is at least intuitively justified by the  existence of separated policies in large classes of similar problems. 

We  visualize the local network around a agent $q_0$ in a manner illustrated in Figure~\ref{figsimT2}(a) (shown for two neighbors $q_1$ and $q_2$). In particular,  agent $q_0$ attempting to move  towards the current position of $q_1$ experiences  a failure probability $\lambda_{01}$, while the moving towards $q_2$ has a failure probability $\lambda_{02}$. To correctly represent this information, we require the notion of \textit{virtual states} ($q^v_{01},q^v_{02}$ in Figure~\ref{figsimT2}(b)). 
\begin{rem}[Necessity Of Virtual States]
 The virtual states are required to model the physical situation within the PFSA framework, in which transitions do not emerge from other transitions. As illustrated in Figure~\ref{figsimT2}(a), the 
failure events do actually occur in the course of the attempted maneuver; hence necessitating the notion of the virtual states.
\end{rem}
\begin{defn}[Virtual State]\label{defvirtualagent}
Given a agent $q_i$, and a  neighbor $q_j \in \mathcal{N}(q_i)$ with 
a specified failure probability $\lambda_{ij}$, any attempted move towards $q_j$  is assumed to be first routed to a virtual state $q^v_{ij}$, upon which there is either an automatic ($i.e.$ uncontrollable) forwarding to  $q_j$ with probability $1-\lambda_{ij}$, or a failure with probability $\lambda_{ij}$. 
The set of all virtual states in a network of $Q$ agents is denoted by $Q^v$ in the sequel.\end{defn}
Hence, the total number of virtual states is given by:
\begin{gather}
 \Crd(Q^v) = \sum_{i:q_i \in Q} \mathcal{N}(q_i)
\end{gather}
And the cardinality of the set of virtual states satisfies:
\begin{gather}\label{eqbnd1}
 0 \leqq \Crd(Q^v) \leqq \Crd(Q)^2 - \Crd(Q)
\end{gather}
\begin{figure}[t]
\centering
 \includegraphics[width=3.25in]{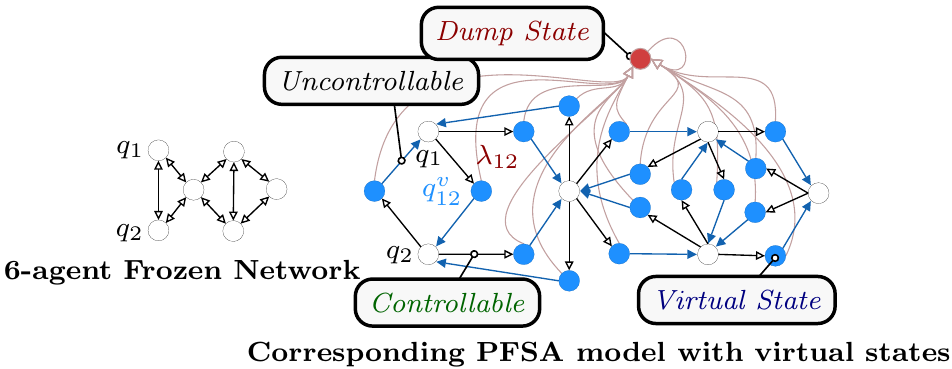}
\caption{6-agent network \&  23 state PFSA   (16 virtual states, 6 agents, 1 dump state)}\label{fignetpfsa}
\end{figure}
We assume that there is a static agent at the  target or the goal, which we denote as $q_\Sn$. The local communication with this agent-at-target can be visualized as the process of sensing the target by the mobile agents.
We are  ready to  model  an ad-hoc communicating network of frozen agents as a PFSA, whose states correspond to either agents, the virtual states, or the state reflecting agent failures. 
\begin{defn}[PFSA Model of Frozen Network]\label{defpfsanetwork}
 For a given set of agents $Q$, the function $\mathcal{N}:Q \rightarrow 2^Q$,  the link specific failure probabilities $\lambda_{ij}$ for any agent $q_i$ and a neighbor $q_j \in \mathcal{N}(q_i)$, and a specified target $q_\Sn \in Q$, the PFSA $\Gnet = (Q^N,\Sigma,\delta,\widetilde{\Pi},\chi,\mathscr{C})$ is defined to be a model of the network, where (denoting $\Crd(\mathcal{N}(q_i))=m$):
\begin{subequations}
\begin{align}
&\textrm{\scshape \sffamily \footnotesize $\circ$ States:}
  &&Q^N = Q \bigcup Q^v \bigcup \big \{q_D\big \}
\intertext{where $Q^v$ is the set of virtual states, and $q_D$ is a dump state which models loss of agent due to failure. 
}
&\textrm{\scshape \sffamily \footnotesize $\circ$ Alphabet:}
 &&\Sigma = \bigcup_{i:q_i \in Q} \left ( \bigcup_{j: q_j \in \mathcal{N}(q_i)} \sigma_{ij} \right )\bigcup \big \{\sigma_D\big \} 
\intertext{where $\sigma_{ij}$ denotes  navigation (attempted or actual)
$\sigma_D$ denotes agent failure.}
&\txt{\scshape \sffamily \footnotesize $\circ$ Transition Map: } \ 
&&
\delta(q,\sigma) = \left \{ \begin{array}{cl}
       q^v_{ij} & \textrm{if } q = q_i , \sigma = \sigma_{ij} \\
q_j & \textrm{if } q = q^v_{ij},\sigma = \sigma_{ij}\\
q_D & \textrm{if } q = q^v_{ij},\sigma = \sigma_D \\
q_D & \textrm{if } q = q_D,\sigma = \sigma_D \\
- & \textrm{undefined otherwise}
                            \end{array}
\right.\\
&
\txt{\scshape \sffamily \footnotesize $\circ$ Probability  Morph  Matrix:}
&& \widetilde{\Pi}(q,\sigma) = \left \{ \begin{array}{cl}
       \frac{1}{m} & \textrm{if } q = q_i , \sigma = \sigma_{ij} \\
1-\lambda_{ij} & \textrm{if } q = q^v_{ij},\sigma = \sigma_{ij}\\
\lambda_{ij} & \textrm{if } q = q^v_{ij},\sigma = \sigma_D \\
1 & \textrm{if } q = q_D,\sigma = \sigma_D \\
0 & \textrm{otherwise}
                            \end{array}
\right.\\
&\txt{\scshape \sffamily \footnotesize $\circ$ Characteristic   Weights:}
&&\chi_i = \left \{ \begin{array}{cl}
       1 & \textrm{if } q_i = q_\Sn            \\
0 & \textrm{otherwise}
                  \end{array}
\right.\\
&\txt{\scshape \sffamily \footnotesize $\circ$  Controllable  Transitions:}
 && \forall q_i \in Q, q_j \in \mathcal{N}(q_i), q_i \xrightarrow{\sigma_{ij}} q^v_{ij} \in \mathscr{C}
\end{align}
\end{subequations}
\end{defn}
We note  that for a network of $Q$ agents, the PFSA model may have (almost always has, see Figure~\ref{fignetpfsa}) a significantly larger number of states. Using Eq.~\eqref{eqbnd1}:
\begin{gather}
\Crd(Q) + 1 \leqq \Crd(Q^N) \leqq \Crd(Q)^2 + 1
\end{gather}
This state-explosion will not be a problem for the distributed approach developed in the sequel, since we use the complete model $\Gnet$ only for the purpose of deriving theoretical guarantees.
Note, that Definition~\ref{defpfsanetwork} generates a PFSA model which can be optimized in a straightforward manner using the language-measure-theoretic technique described in Section~\ref{sec2} (See \cite{CR07}) for details). This would yield the optimal routing policy in terms of the disabling decisions at each agent that minimize source-to-target failure probabilities (from every agent in the network).   To see this explicitly, note that the measure-theoretic approach elementwise maximizes 
 $\lim_{\theta \rightarrow 0^+}  \theta \big [ \mathbb{I} - (1-\theta)\Pi \big ]^{-1} \chi = \Q \chi$, 
where the $i^{th}$ row of $\Q$ (denoted as $\wp^i$) is the stationary probability vector for the PFSA initialized at state $q_i$ (See Proposition~\ref{prop1}). Since, the dump state has characteristic $-1$, the target has characteristic $1$, and all other agents have characteristic $0$, it follows that this optimization maximizes the quantity $\wp^i_\Sn - \wp^i_{\textrm{\scshape Dump}}$, for every source state or agent $q_i$ in the network. Note that $\wp^i_\Sn, \wp^i_{\textrm{\scshape Dump}}$ are the stationary probabilities of reaching the target and incurring an agent loss to dump respectively, from a given source $q_i$.
Thus, maximizing $\wp^i_\Sn - \wp^i_{\textrm{\scshape Dump}}$ for every $q_i \in Q$ guarantees that the computed routing policy is indeed optimal in the stated sense.
However, the procedure in \cite{CR07}  requires  centralized computations, which is precisely what we wish to avoid. 
The key technical contribution in this paper is to  develop a distributed approach to language-measure-theoretic PFSA optimization. In effect, the theoretical development in the next section allows us to carry out the language-measure-theoretic optimization of a given PFSA, in situations where we do not have access to the complete $\Pi$ matrix, or the $\chi$ vector at any particular agent ($i.e.$ each agent has a limited local view of the network), and are restricted to communicate only with immediate neighbors.
We are interested in not just computing the measure vector in a distributed manner, but 
optimizing the PFSA via selected disabling of controllable transitions (See Section~\ref{sec2}).
This is  accomplished by Algorithm~\ref{AlgorithmOPT}.
\subsection{Control Approach For Mobile Agents}
For the mobile network, $\Gnet(t)$ varies as a function of operation time $t$. For a particular instant $t=t_0$, the globally optimized model $\Gnet^\star(t_0)$ yields the 
local decisions for the agent maneuvers. As stated before, this global optimization can be carried out in a distributed manner, and the agents update their headings towards the neighbor which has the highest measure among all neighbors, provided it is in fact higher than the self-measure. Transition towards 
any neighbor with a better or equal measure compared to self via randomized choice is also acceptable, but we use the former approach for the theoretical development in the sequel. The movement however modifies the PFSA model to $\Gnet(t+\Delta t)$, and a re-optimization is required.
We assume, as stated before, that the agent velocities are slow enough so that they do not interfere with this computation. A crucial point is the time complexity of convergence of the distributed algorithm, which in our case, is small enough to allow this procedure to be carried out efficiently. Also, note that since the complete model is never assembled, the modifications to $\Gnet$ is also
a local affair, $e.g.$ updating the set of neighbors or the failure probabilities, and such local effects are felt by the remote agent via percolated information involving an unavoidable delay, which goes to ensure that the effect of all local changes are not felt simultaneously across the network.
\subsection{Possibility Of Different Local Models}
The local model, as described above and illustrated in Figure~\ref{figsimT2}, assumes that errors are non-recoverable; hence the possibility of transitioning to the dump state, from which no outward transition is defined. Alternatively, we could eliminate the dump state, and simply add the transition as a self-loop, or even redistribute the probability among the remaining transitions defined at a state. It is intuitively clear that the  adopted model avoids errors most aggressively.
\section{Decentralized PFSA Optimization For The Frozen Swarm}\label{sec4}
\setlength{\algomargin}{0em} 
\begin{algorithm}[t]
\fontsize{8}{8}\selectfont
 \SetLine
\linesnotnumbered
\dontprintsemicolon
  \SetKwInOut{Input}{input}
  \SetKwInOut{Output}{output}
 \SetKw{Tr}{true}
   \SetKw{Tf}{false}
  \caption{{  Distributed Update of Agent Measures In Frozen Swarm}}\label{AlgorithmOPT}
\Input{$\Gnet=(Q,\Sigma,\delta,\widetilde{\Pi},\chi,\mathscr{C})$, $\theta$}
\Begin{
Initialize $\forall q_i \in Q, \widehat{\nu}^\cur_\theta \vert_i = 0$\;
\txt{\sffamily \footnotesize \BRed $/* \ $ {Begin Infinite \uline{Asynchronous} Loop} $\ */ $ } 
\While{\Tr}{
\For{each agent $q_i \in Q$}{
\If{ $\mathcal{N}(q_i) \neq \varnothing $}{

$m = \Crd(\mathcal{N}(q_i))$\;
\For {each agent $q_j \in \mathcal{N}(q_i)$}{
\BlankLine
\Nblue
\txt{\sffamily \footnotesize \color{DodgerBlue2} $/* \ $ { (a1) Inter-agent Communication} $\ */ $ \color{black}\\ $\phantom{.}$}%
\vspace{-5pt}
\colorbox{LightCyan}{
  \Mblue Query $\widehat{\nu}^\cur_\theta \vert_j$ \& Failure Prob. $\lambda_{ij}$\;} \color{black}
\BlankLine
\BlankLine
\txt{\sffamily \footnotesize \color{IndianRed4} $/* \ $ { (a2) Control Adaptation} $\ */ $ \color{black}}%
\colorbox{MistyRose}{
\begin{minipage}{2in}
\eIf{$ \widehat{\nu}^\cur_\theta \vert_j < \widehat{\nu}^\cur_\theta \vert_i$}{
$\Pi_{ii} = \Pi_{ii} + \Pi_{i(q^V_{ij})}$\;
$\Pi_{i(q^V_{ij})} = 0$\color{black}\tcc*[r]{\Dgreen  \footnotesize  {\bf \sffamily   Disable} \color{black}}
} {
\If{ $\Pi_{i(q^V_{ij})} == 0$}{
$\Pi_{i(q^V_{ij})} = \frac{1}{m}$\;
$\Pi_{ii} = \Pi_{ii} - \frac{1}{m}$ \color{black}\txt{\sffamily \footnotesize \Dgreen $/* \ $ {\bf \sffamily Enable} $\ */ $ \color{black}}
}
}
\end{minipage}
}
\vspace{2pt}
\BlankLine
\txt{ \sffamily \footnotesize \color{Gold4} $/* \ $ { (a3) Updating Virtual States} $\ */ $ \color{black}\\ $\phantom{.}$} %
\vspace{-5pt}
\Mblue
\colorbox{Beige}{
$\widehat{\nu}^\cur_\theta \vert_{(q_{ij}^V)} = (1-\theta)(1 - \lambda_{ij}) \widehat{\nu}^\cur_\theta \vert_j$}\color{black}
}
}
\txt{ \sffamily \footnotesize \color{Chartreuse4} $/* \ $ { (a4) Updating  Agent} $\ */ $ \color{black}\\ $\phantom{.}$} %
\vspace{-5pt}
\BRed
\colorbox{Honeydew}{
\txt{
$
\widehat{\nu}^\cur_\theta \vert_i = \displaystyle\mspace{-25mu} \sum_{j: q_j \in \mathcal{N}(q_i)} \mspace{-20mu}(1 -\theta) \Pi_{i(q^V_{ij})} \widehat{\nu}^\cur_\theta \vert_{(q_{ij}^V)} $\\ \hspace{80pt}$+ (1-\theta)\Pi_{ii} \widehat{\nu}^\cur_\theta \vert_i + \theta\chi \vert_i
$}
} 
\color{black}
}}
}
\end{algorithm}

 In the sequel, the current measure value, for a given $\theta$, at agent $q_i \in Q$ is denoted as $\widehat{\nu}^\cur_\theta \vert_i$, and the measure of the virtual state $q^v_{ij} \in Q^N$ is denoted as $\widehat{\nu}^\cur_\theta \vert_{(q_{ij}^V)}$. The parenthesized entry $(q_{ij}^V)$ denotes the index  of the virtual state $q^v_{ij}$ in the state set $Q^N$. Similarly, the 
transition probability from $q_i$ to $q^v_{ij}$ is denoted as $\Pi_{i(q_{ij}^V)}$. The subscript entry $i(q_{ij}^V)$ denotes the 
$ik^{th}$ element of $\Pi$, where $k = (q_{ij}^V)$.

Algorithm~\ref{AlgorithmOPT} establishes a distributed, asynchronous  procedure achieving:
\cgather[5pt]{ \forall q_i \in Q, \widehat{\nu}_\theta \vert_i  \xrightarrow[\txt{\footnotesize convergence}]{\txt{\footnotesize global }} \nu^\star_\theta \vert_i}
where $\nu^\star_\theta \vert_i$ is the optimal measure for $q_i \in Q$ that would be obtained by optimizing the PFSA $\Gnet$, for a given $\theta$, in a centralized approach (See Section~\ref{sec2}).
The optimal routing policy can then be obtained by moving towards neighboring agents which 
have a better or equal current measure value. If more than a one such neighbor is available, then one either chooses the local destination agent randomly, in an equiprobable manner; or as we use in this paper, move towards one chosen from the set of neighbors with maximal measure.
In the sequel we show that this forwarding policy  converges to the  globally optimal 
routing policy, that, for a sufficiently small $\theta$, it maximizes probability of reaching the target, while 
simultaneously minimizing the probability of end-to-end failures. Furthermore, choosing randomly between qualifying neighboring agents leads to 
significant congestion resilience. These issues would be elaborated in the sequel (Proposition~\ref{propchar}). 

Algorithm~\ref{AlgorithmOPT} has four distinct parts, marked as (a1), (a2), (a3) and (a4). Part (a1) involves 
inter-agent communication, to enable a particular agent $q_i \in Q$ to ascertain the current measure values of neighboring agents, and the 
failure probabilities $\lambda_{ij}$ on respective links. Recall, that we assume the probabilities $\lambda_{ij}$ to be more or less constant for the frozen swarm; however agents need to  
estimate these values for generalization to the mobile case. Part (a2) is the control adaptation, in which the agents decide, based on local information, the set of allowable destination agents. Part (a3) is the computation of the updated measure values for the virtual states $q^v_{ij}$ where $j: q_j \in \mathcal{N}(q_i)$. Finally, part (a4) updates the
measure of the agent $q_i$ based on the computed current measures of the virtual states. We note that Algorithm~\ref{AlgorithmOPT} only uses information that 
is either available locally, or that which can be queried from neighboring agents.

\begin{prop}[Convergence]\label{propmain}
For a  network $Q$ modeled as  $\Gnet=(Q^N,\Sigma,\delta,\widetilde{\Pi},$ $\chi, \mathscr{C})$, the distributed procedure 
in Algorithm~\ref{AlgorithmOPT} has the following properties:
 \begin{enumerate}
  \item Computed measure values for every agent $q_i \in Q$ are non-negative and bounded above by $1$, $i.e.$,
\begin{gather}
 \forall q_i \in Q^N, \forall t \in [0,\infty), \ \widehat{\nu}^t_\theta \vert_i \in [0, 1]
\end{gather}
\item For constant failure probabilities and constant neighbor map $\mathcal{N}:Q \rightarrow 2^Q$, Algorithm~\ref{AlgorithmOPT} converges in the sense:
\begin{gather}
 \forall q_i \in Q^N, \lim_{t \rightarrow \infty}\widehat{\nu}^t_\theta \vert_i =  \nu^\infty_\theta \vert_i \in [0,1]
\end{gather}
\item Convergent measure values coincide with the optimal values computed by the centralized approach:
\begin{gather}
 \forall q_i \in Q^N, \nu^\infty_\theta \vert_i =  \nu^\star_\theta \vert_i
\end{gather}

 \end{enumerate}

\end{prop}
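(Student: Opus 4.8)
The plan is to read the update blocks (a3)--(a4) of Algorithm~\ref{AlgorithmOPT} as the component-wise form of the fixed-point relation $\nu_\theta = \theta\chi + (1-\theta)\Pi\nu_\theta$ equivalent to Eq.~\eqref{eqmesd}, with the control block (a2) interleaving policy (disabling) updates between successive value updates. Part~1 I would prove by induction along the totally ordered sequence of elementary updates that the asynchronous schedule actually executes, with base case $\widehat{\nu}^0_\theta \equiv 0 \in [0,1]^{Q^N}$. Granting that all stored values lie in $[0,1]$, block (a3) gives $\widehat{\nu}_\theta\vert_{(q^v_{ij})} = (1-\theta)(1-\lambda_{ij})\widehat{\nu}_\theta\vert_j \in [0,1-\theta] \subset [0,1]$; and since (a2) merely transfers the mass $\Pi_{i(q^v_{ij})}$ into $\Pi_{ii}$ and so keeps row $i$ of $\Pi$ stochastic, block (a4) writes $\widehat{\nu}_\theta\vert_i$ as $(1-\theta)$ times a convex combination of values in $[0,1]$ plus $\theta\chi\vert_i$ with $\chi\vert_i \in \{0,1\}$, hence again in $[0,1]$.

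The heart is Part~2. For a fixed disabling set $\mathscr{D}$ I would first show the value map $L_{\mathscr{D}}(\nu) = \theta\chi + (1-\theta)\Pi_{\mathscr{D}}\nu$ is a contraction in $\|\cdot\|_\infty$ of modulus $(1-\theta)<1$, since $\Pi_{\mathscr{D}}$ stochastic gives $\|\Pi_{\mathscr{D}}(\nu-\nu')\|_\infty \leq \|\nu-\nu'\|_\infty$; thus $L_{\mathscr{D}}$ has a unique fixed point $\nu^{\mathscr{D}}_\theta \in [0,1]^{Q^N}$, the supervised measure. The contraction supplies the nested, Cartesian-product sets $X(k) = \{\nu : \|\nu - \nu^{\mathscr{D}}_\theta\|_\infty \leq (1-\theta)^k\}$ required, as boxes satisfying the synchronous convergence condition, by the standard asynchronous convergence theorem for maximum-norm contractions, so that under the standing assumptions---every agent updates infinitely often and stale neighbor values are eventually refreshed through block (a1)---the value iteration converges for a frozen policy. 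To absorb the interleaved policy changes I would invoke finiteness of $2^{\mathscr{C}}$ together with the monotone-improvement property of the enable/disable rule from \cite{CR07}: each genuine policy change strictly increases the measure element-wise, so only finitely many changes occur and the disabling set freezes at some $\mathscr{D}^\star$ after finite time, after which the contraction argument applies verbatim to $L_{\mathscr{D}^\star}$ and yields $\widehat{\nu}^t_\theta\vert_i \to \nu^\infty_\theta\vert_i \in [0,1]$.

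For Part~3 I would note that the terminal policy $\mathscr{D}^\star$ is, by the very construction of block (a2), a fixed point of the enable/disable comparison test evaluated at $\nu^\infty_\theta$, which is precisely the stopping condition of the centralized Algorithm~\ref{Algorithm02}. Since \cite{CR07} establishes that this condition characterizes the unique, maximally permissive optimal policy and its measure $\nu^\star_\theta$, and since $\nu^\infty_\theta$ is simultaneously the unique fixed point of the contraction $L_{\mathscr{D}^\star}$, uniqueness forces $\nu^\infty_\theta\vert_i = \nu^\star_\theta\vert_i$ for every $q_i \in Q^N$.

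The main obstacle is the entanglement in Part~2 between the asynchronous value updates and the interleaved policy adaptation. In the centralized setting evaluation and improvement are separate and synchronous, whereas here an agent may re-enable or disable a transition on the basis of a stale neighbor measure, so the clean monotonicity underlying the finiteness argument is not immediate. The delicate step is to rule out an infinite cycle of enable/disable reversals driven by outdated information, and thereby to guarantee that the policy provably stabilizes before the contraction argument is invoked. I expect to handle this either by choosing the boxes $X(k)$ to be simultaneously compatible with the greedy improvement, or by verifying directly that the greedy operator inherits monotonicity and the discounting (Blackwell) property, which would let one conclude convergence in one stroke without separately tracking the policy.
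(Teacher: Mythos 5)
Your Part~1 and Part~3 arguments track the paper's closely (the paper likewise proves boundedness by induction along the asynchronous update sequence, and obtains optimality by observing that the limit satisfies the fixed-point relation $\widehat{\nu}_\theta^\infty = \theta[\mathbb{I}-(1-\theta)\Pi]^{-1}\chi$ and that no further enabling/disabling can improve it). For Part~2, however, you take a genuinely different route --- contraction of $L_{\mathscr{D}}$ in $\|\cdot\|_\infty$, the asynchronous convergence theorem with nested boxes $X(k)$, and a ``finitely many policy switches'' argument --- whereas the paper never separates policy stabilization from value convergence at all. It instead proves, by induction on the update sequence, that the \emph{combined} operator (control adaptation (a2) followed by value updates (a3)--(a4)) produces a monotonically non-decreasing trajectory: if agent $q_i$'s disabling set is unchanged its new value is a convex combination of non-decreased inputs plus $\theta\chi_i$; if a transition flips, the flip occurs precisely because it is beneficial at the current values and contributes a non-negative increment $\tfrac{(1-\theta)}{\Crd(\mathcal{N}(q_i))}\big\vert\widehat{\nu}^{t'}_\theta\vert_i - \widehat{\nu}^{t'}_\theta\vert_{(q^v_{ij})}\big\vert$. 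Monotone plus bounded then gives convergence with no need for the policy to freeze first.

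This matters because the step you yourself flag as delicate is a genuine gap as written. The assertion that ``each genuine policy change strictly increases the measure element-wise, so only finitely many changes occur'' does not follow from finiteness of $2^{\mathscr{C}}$: under asynchrony a transition can be disabled, re-enabled, and disabled again on the basis of stale neighbor values, and nothing in your contraction setup by itself excludes an infinite alternation (the same disabling set can recur infinitely often without contradiction, since the values at which it recurs are different each time). You cannot invoke the contraction argument ``after the policy freezes'' until you have proved the policy freezes, and you leave that unproved. The fix is exactly the second option you float in your closing paragraph: show the greedy operator is monotone along the actual update sequence, which is what the paper does; once the iterates are non-decreasing and bounded they converge, and policy stabilization (up to ties, which the $\geqq$-enable rule handles) is a consequence rather than a prerequisite. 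If you supplied that monotonicity lemma, your contraction machinery becomes unnecessary for convergence, though it would still be the natural tool for the rate estimate in Proposition~\ref{propcomplex}.
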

\begin{proof}
\textit{(Statement 1:)} Non-negativity of the measure values is obvious. 
For establishing the upper bound, we use induction on computation time $t$. 
We note that all the measure values $\widehat{\nu}^t_\theta \vert_i$ are initialized to $0$ at time $t=0$.
The first agent to change its measure will be the target, which is updated at some time $t=t_0$:
\begin{gather}\label{eqt0}
 \widehat{\nu}^{t_0}_\theta \vert_{(q_\Sn)} = 0 + \theta\chi_{(q_\Sn)} = \theta
\end{gather}
where the first term is zero since all agents still have measure zero and the target  characteristic $\chi_{(q_\Sn)} = 1$.
Thus,  there exists a non-trivial time instant $t_0$, at which:
\begin{align}
\textrm{\sffamily {\small (Induction Basis)} }  \forall q_i \in Q^N,  \widehat{\nu}^t_\theta \vert_i \leqq 1
\end{align}
Next we assume for  time $t=t'$, we have
\begin{align*}
\textrm{\sffamily {\small (Induction Hypothesis)} }  \forall q_i \in Q^N, \forall \tau \leqq t', \widehat{\nu}^\tau_\theta \vert_i \leqq 1
\end{align*}
We consider the next updates for physical agents and virtual states separately, and denote the time instant for the next updates as $t'_+$. Note, that 
$t'_+$ actually may be different for different agents (asynchronous operation).\\
{\itshape (Virtual States)}
For any virtual state $q_i  = q^v_{kj} \in Q^N$, where $q_k,q_j \in Q$, we have:
\begin{gather}
 \widehat{\nu}^{t'_+}_\theta \vert_i = (1-\lambda_{ij})(1-\theta)\widehat{\nu}^{t'}_\theta \vert_j \leqq 1
\end{gather}
{\itshape (Physical Agents)}
For any  $q_i   \in Q$, where  set of enabled neighbors
$E_n = \big \{q_j \in \mathcal{N}(q_i) \ \textrm{s.t. } \widehat{\nu}^{t'_+}_\theta \vert_{(q_{ij}^V)} \geqq \widehat{\nu}^{t'}_\theta \vert_i\big \}$:
\begin{multline*}
 \widehat{\nu}^{\tau+}_\theta \vert_i =\frac{1}{\Crd(\mathcal{N}(q_i))}\bigg (\sum_{j:q_j \in E_n} (1-\theta)^2(1-\lambda_{ij})\widehat{\nu}^{t'}_\theta \vert_{(q_{ij}^V)} + \sum_{j: j \in \mathcal{N}(q_i) \setminus E_n}(1-\theta)\widehat{\nu}^{t'}_\theta \vert_i \bigg ) \\\leqq
\frac{1}{\Crd(\mathcal{N}(q_i))}\bigg (\sum_{j:q_j \in E_n} 1 + \sum_{j: j \in \mathcal{N}(q_i) \setminus E_n}1 \bigg )
\leqq  1
\end{multline*}
which establishes Statement 1.\\
\textit{(Statement 2:)}
We claim that for each agent $q_i \in Q^N$, the sequence of measures $\widehat{\nu}^t_\theta\vert_i$ forms a monotonically non-decreasing sequence as a function of the computation time $t$. Again, we use induction on computation time. 
Considering the time instant $t_0$ (See Eqn.~\eqref{eqt0}), we note that we have an instant up to which all measure values have indeed changed in a non-decreasing fashion, since the measure 
of $q_\Sn$ increased to $\theta$, while  other agents are still at $0$; which establishes the  basis.
For our  hypothesis, we assume that there exists  some time instant $t' > t_0$, such that all measure values  have undergone non-decreasing updates up to $t'$.
We consider the physical agent $q_i \in Q$ which is the first one to update next, say at the instant $t'_+ > t'$. Referring to Algorithm~\ref{AlgorithmOPT}, this update occurs by first updating the set of virtual states $\{q_{ij}^v : q_j \in \mathcal{N}(q_i)\}$. Since  virtual states update as:
\begin{gather}
 \widehat{\nu}^{t'_+}_\theta \vert_{(q^v_{ij}} = (1-\theta)(1-\lambda_{ij})\widehat{\nu}^{t'}_\theta \vert_j
\end{gather}
it follows from the induction hypothesis that 
\begin{gather}
 \widehat{\nu}^{t'_+}_\theta \vert_{(q^v_{ij})} \geqq \widehat{\nu}^{t'}_\theta \vert_{(q^v_{ij})}
\end{gather}
 If the connectivity ($i.e.$ the forwarding decisions) for the physical agent $q_i$  remains unchanged for the instants $t'$ and $t'_+$, and since 
the measures of any neighboring agent has not decreased (by induction hypothesis), then:
\begin{gather}
 \widehat{\nu}^{t'_+}_\theta \vert_i \geqq \widehat{\nu}^{t'}_\theta \vert_i
\end{gather}
If, on the other hand, the set of disabled transitions for $q_i$ changes ($e.g.$ for some $q_j \in \mathcal{N}(q_i)$, $q_i \xrightarrow{\sigma_{ij}}q_{ij}^v$ was disabled at $t'$ and is enabled at $t'_+$, or vice verse), the measure of agent $q_i$ is increased by the additive factor $\frac{(1-\theta)}{\Crd(\mathcal{N}(q_i))}\bigg\vert\widehat{\nu}^{t'}_\theta \vert_i - \widehat{\nu}^{t'}_\theta \vert_{(q^v_{ij})} \bigg \vert  $, which completes the inductive process and establishes our claim that the measure values form a non-decreasing sequence for each agent as a function of the computation time. Since, a non-decreasing bounded sequence in a complete space must converge to a unique limit~\cite{R88},  the convergence: 
\begin{gather}
 \forall q_i \in Q^N, \lim_{t \rightarrow \infty}\widehat{\nu}^t_\theta \vert_i =  \nu^\infty_\theta \vert_i \in [0,1]
\end{gather}
  follows from the 
existence of the upper bound established in Statement 1. This establishes Statement 2.\\
\textit{(Statement 3:)}
From the update equations in Algorithm~\ref{AlgorithmOPT}, we note that the limiting measure values satisfy:
\begin{align}
\widehat{\nu}_\theta^{\infty} \big \vert_i &= (1-\theta)\sum_{j \in \mathcal{N}(i)} \Pi_{ij} \widehat{\nu}_\theta^{\infty} \big \vert_j + \theta \chi \vert_i \notag\\
\Rightarrow \widehat{\nu}_\theta^{\infty} &= \theta \big [ \mathbb{I} - (1-\theta)\Pi \big ]^{-1}\chi
\end{align}
which implies that measure values does indeed converge to the measure vector computed in a centralized fashion (See Eq.~\eqref{eqmesd}).
Noting that any further disabling (or re-enabling) would not increase the measure values computed by Algorithm~\ref{AlgorithmOPT}, we conclude that this must be 
the optimal disabling set that would be obtained by the centralized language-measure theoretic optimization of  PFSA $\Gnet$ (Section~\ref{sec2}). This completes the proof.
\end{proof}

\begin{prop}[Initialization Independence]\label{propinitindependence}
For a  network $Q$ modeled as a PFSA $\Gnet=(Q^N,\Sigma,\delta,\widetilde{\Pi},\chi, \mathscr{C})$, convergence of Algorithm~\ref{AlgorithmOPT} is independent of the initialization of the measure values, $i.e.$, if $\widehat{\nu}_{\theta,\alpha}^t$  denotes  the measure vector at time $t$ with arbitrary initialization  $\alpha \in [0,1]^{\Crd(Q^N)}$, then:
\begin{gather}\label{eqprop1}
 \lim_{t\rightarrow \infty}\widehat{\nu}_{\theta,\alpha}^t = \lim_{t\rightarrow \infty}\widehat{\nu}_{\theta}^t
\end{gather}
where $\widehat{\nu}_{\theta,\alpha}^0= \alpha$ and $ \widehat{\nu}_{\theta}^0= [0 \cdots 0]^T$.
\end{prop}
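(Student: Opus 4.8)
The plan is to fold the three active parts (a2)--(a4) of Algorithm~\ref{AlgorithmOPT} into a single Bellman-type update operator and then exploit its contractivity. Writing $v_{ij}=(1-\theta)(1-\lambda_{ij})\widehat{\nu}_\theta\vert_j$ for the freshly recomputed virtual-state value and reading the enable/disable test exactly as the set $E_n$ in the proof of Proposition~\ref{propmain} (the forward transition survives precisely when $v_{ij}\geqq\widehat{\nu}_\theta\vert_i$, otherwise it is replaced by a self-loop), the net effect of one update at agent $q_i$ is
\begin{gather}
\widehat{\nu}_\theta\vert_i \longmapsto F_i(\widehat{\nu}_\theta)\triangleq\frac{1-\theta}{\Crd(\mathcal{N}(q_i))}\sum_{j:q_j\in\mathcal{N}(q_i)}\max\big(v_{ij},\widehat{\nu}_\theta\vert_i\big)+\theta\chi\vert_i .
\end{gather}
The dump state carries a pure self-loop and the virtual states are slaved to their neighbours, so the whole vector over $Q^N$ evolves under a map $F$ assembled from the $F_i$ and these trivially contracting coordinates; its fixed point is, by Proposition~\ref{propmain}, exactly $\nu^\star_\theta$.

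First I would re-run the boundedness argument of Statement~1 of Proposition~\ref{propmain}, the only change being that the induction basis now invokes $\alpha\in[0,1]^{\Crd(Q^N)}$ in place of the zero vector; since each $\max(v_{ij},\cdot)$ stays in $[0,1]$ when its arguments do and $\chi\vert_i\in[0,1]$, the box $[0,1]^{\Crd(Q^N)}$ is invariant under $F$. Next I would establish the contraction estimate. Using $\vert\max(a,b)-\max(c,d)\vert\leqq\max(\vert a-c\vert,\vert b-d\vert)$ and $(1-\theta)(1-\lambda_{ij})\leqq 1$, each summand of $F_i$ changes by at most $\|x-y\|_\infty$ between two inputs $x,y$, so that
\begin{gather}
\big\vert F_i(x)-F_i(y)\big\vert\leqq(1-\theta)\|x-y\|_\infty ,
\end{gather}
i.e. $F$ is an $\ell^\infty$-contraction of modulus $(1-\theta)<1$. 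By Banach's theorem its fixed point is unique, and it coincides with $\nu^\star_\theta$ because the zero-initialised run of Proposition~\ref{propmain} already converges there.

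It then remains to pass from this synchronous contraction to the asynchronous, agent-by-agent schedule that Algorithm~\ref{AlgorithmOPT} actually executes. Here I would apply the asynchronous-convergence (``box'') theorem for maximum-norm contractions: with the nested boxes $X^k=\{\nu:\|\nu-\nu^\star_\theta\|_\infty\leqq(1-\theta)^k R\}$ and $R=\|\alpha-\nu^\star_\theta\|_\infty\leqq 1$, contractivity yields $F(X^k)\subseteq X^{k+1}$, the boxes are Cartesian products so un-updated coordinates stay inside the current box, and the outer infinite loop updates every agent infinitely often; hence each coordinate is eventually and permanently trapped in the successively smaller boxes. Since $X^k\downarrow\{\nu^\star_\theta\}$, the $\alpha$-initialised iterates converge to $\nu^\star_\theta$, which is also the limit of the zero-initialised iterates, establishing Eq.~\eqref{eqprop1}.

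I expect the main obstacle to be exactly this synchronous-to-asynchronous step, compounded by the fact that the enable/disable decisions depend on the running measures, so for a poor initialisation the intermediate disabling sets need not match those of the zero-initialised run. Absorbing the control logic into the $\max$ inside $F_i$ is what dissolves this difficulty: it turns the per-agent update into a genuine contraction with no separate ``the policy has already stabilised'' hypothesis, after which stabilisation of the disabling set to $\mathscr{D}^\star$ follows a posteriori from convergence of the measures, ties being resolved consistently by the $\geqq$ in the enable rule.
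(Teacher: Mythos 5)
Your proof is correct in substance but follows a genuinely different route from the paper. The paper argues by superposition: it splits the running value as $\widehat{\nu}_{\theta,\alpha}^t = \mathcal{C}^t_\alpha(q_i)+f_i^t$, where $\mathcal{C}^t_\alpha(q_i)$ is the contribution of the initialization, and shows by induction on the minimum number of updates $k_\star^t$ that $\mathcal{C}^t_\alpha(q_i)\leqq(1-\theta)^{k_\star^t}\Vert\alpha\Vert_1\rightarrow 0$, so the $\alpha$-dependence is washed out. You instead absorb the enable/disable logic into a single Bellman-type operator $F_i(\nu)=\frac{1-\theta}{m}\sum_j\max(v_{ij},\nu_i)+\theta\chi_i$, prove it is an $\ell^\infty$-contraction of modulus $1-\theta$, identify its unique fixed point with $\nu^\star_\theta$ via Proposition~\ref{propmain}, and invoke the asynchronous convergence (box) theorem. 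What your approach buys is robustness to the nonlinearity of the update: the paper's decomposition into an $\alpha$-part and an $\alpha$-independent part leans on ``linearity of the updates,'' which is delicate because the disabling decisions themselves depend on the running measures and hence on $\alpha$; your $\max$ formulation makes the policy adaptation part of the operator and needs no such separation, at the cost of importing the asynchronous fixed-point machinery. What the paper's argument buys is self-containedness and an explicit decay rate $(1-\theta)^{k_\star^t}\Vert\alpha\Vert_1$ tied to the asynchronous schedule. One caveat you should state more prominently: your contraction estimate genuinely requires reading the enable test as comparing the virtual-state value $v_{ij}$ against $\widehat{\nu}_\theta\vert_i$ (the $E_n$ convention used in the proof of Proposition~\ref{propmain}); with the literal test in step (a2) of Algorithm~\ref{AlgorithmOPT}, which compares the neighbor's raw measure $\widehat{\nu}_\theta\vert_j$ against $\widehat{\nu}_\theta\vert_i$, the summand is not $\max(v_{ij},\nu_i)$ and the Lipschitz bound $\vert\max(a,b)-\max(c,d)\vert\leqq\max(\vert a-c\vert,\vert b-d\vert)$ no longer applies, since a flip of the comparison can change the summand by $O(\lambda_{ij})$ while $\Vert x-y\Vert_\infty$ is arbitrarily small. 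Since the paper itself adopts the $E_n$ reading in its own convergence proof, your reading is defensible, but the dependence of your argument on it should be flagged.
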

\begin{proof} The measure update equations in Algorithm~\ref{AlgorithmOPT} dictate that the measure values will have a positive contribution from $\alpha$. Denoting the contribution of $\alpha$ to the measure of agent $q_i \in Q$ at time $t$ as $\mathcal{C}^t_\alpha(q_i)$, we note that the measure can be written as $\widehat{\nu}_{\theta,\alpha}^t = \mathcal{C}^t_\alpha(q_i) + f_i^t$, where $f_i^t$ is independent of $\alpha$.
Furthermore, the linearity of the updates imply that $\mathcal{C}^t_\alpha(q_i)$ can be used to formulate an inductive argument as follows. We use $k_\star^t \in \mathbb{N}\cup \{0\}$  to denote the minimum number of updates  that every agent in the network has encountered up to  time instant $t \in [0,\infty)$. 
We claim that:
\begin{gather}\label{eqclaim1}
 \forall q_i \in Q, \forall t \in [0,\infty), \mathcal{C}^t_\alpha(q_i) \leqq (1-\theta)^{k_\star^t} \vert \vert \alpha \vert \vert_1
\end{gather}
To establish this claim, we use induction on $k_\star^t$.
For the  basis, we note that there exists a time instant $t_0$, such that $\forall \tau \leqq t_0, k_\star^\tau =0$, implying that 
\begin{gather*}
 \forall \tau \leqq t_0, \mathcal{C}^\tau_\alpha(q_i) = \alpha_i \leqq (1-\theta)^0 \sum_{q_j \in Q} \alpha_j = (1-\theta)^{k_\star^\tau} \vert \vert \alpha \vert \vert_1
\end{gather*}
We assume that if at some  $t_k$,  $k_\star^{t_k} = k \in \mathbb{N}$, then:
\begin{align*}
\textrm{\sffamily {\small (Induction Hypothesis)} }  \forall q_i \in Q, \mathcal{C}^{t_k}_\alpha(q_i) \leqq (1-\theta)^k \vert \vert \alpha \vert \vert_1
\end{align*}
Next let $q_i$ be an arbitrary physical agent, and consider the first update of $q_i$ at $t_k^+ > t_k$:
\begin{align*}
\widehat{\nu}^{t_k^+}_\theta \vert_i = & \mspace{0mu} \sum_{j: q_j \in \mathcal{N}(q_i)} \mspace{-20mu}(1 -\theta) \Pi_{i(q^v_{ij})} \widehat{\nu}^{t_k}_\theta \vert_{(q_{ij}^v)} + (1-\theta)\Pi_{ii} \widehat{\nu}^{t_k}_\theta \vert_i + \theta\chi_i \\
\Rightarrow & \mathcal{C}^{t_k^+}_\alpha (q_i)\leqq  \mspace{0mu}\sum_{j: q_j \in \mathcal{N}(q_i)} \mspace{-20mu}(1 -\theta) 
\Pi_{i(q^v_{ij})} (1-\lambda_{ij})(1-\theta)(1-\theta)^k\vert \vert \alpha \vert \vert_1 \\  & \mspace{150mu}+ (1-\theta)\Pi_{ii}(1-\theta)^k\vert \vert \alpha \vert \vert_1 + \theta\chi_i\\
\Rightarrow &\mathcal{C}^{t_k^+}_\alpha (q_i)\leqq (1-\theta)^{k+1}\vert \vert \alpha \vert \vert_1 
\end{align*}
We note that if $k_\star^{t_{k+1}} = k+1$, then every agent $q_i \in Q$ must have undergone one more update 
since  $t_k$ implying:
\begin{gather}
\forall q_i \in Q, \mathcal{C}^{t_{k+1}}_\alpha (q_i)\leqq (1-\theta)^{k+1}\vert \vert \alpha \vert \vert_1
\end{gather}
which completes the induction proving  Eq.~\eqref{eqclaim1}.
Observing that $\lim_{t\rightarrow\infty} k_\star^t = \infty$, and  $\vert \vert \alpha \vert \vert_1 < \infty$, we conclude:
\begin{gather}
 \forall q_i \in Q, \lim_{t\rightarrow\infty} \mathcal{C}^t_\alpha(q_i) =0
\end{gather}
which immediately implies Eq.~\eqref{eqprop1}.
\end{proof}

Next we investigate the performance of the proposed approach, and establish guarantees on global performance achieved via  local
decisions dictated by Algorithm~\ref{AlgorithmOPT}.
We need some technical lemmas, and the notion of 
strongly absorbing graphs, and graph powers. 
\begin{defn}[Exact Power of Graph]
For a given graph $G=(V,E)$, the exact power $G^d$, for $d \in \mathbb{N}$, is a graph $(V,E')$, such that 
 $(q_i,q_j)$ is an edge in $G^d$, only if there exists a sequence of edges of length exactly $d$ from agent $q_i$ to agent $q_j$ in $G$.
 \end{defn}
\begin{defn}[Strongly Absorbing Graph]\label{defNSA}
 A finite  directed graph $G=(V,E)$ ($V$ is the set of agents and $E \subseteq V \times V$  the set of edges) is defined to be  strongly absorbing (SA), if:
\begin{enumerate}
 \item There are one or more absorbing agents, $i.e.$, $\exists A \subsetneqq V$, s.t. every agent in $A$ (non-empty) is absorbing.
\item There exists at least one sequence of edges from any agent to one of the absorbing agents in $A$.
\item If $E^d$ denotes the set of edges for the $d^{th}$ exact power of $G$, then, for distinct agents $q_i,q_j \in V$, 
\begin{gather}
 (q_i,q_j) \in E \Rightarrow  \forall d \in \mathbb{N}, \ (q_j,q_i) \notin E^d
\end{gather}
\end{enumerate}

\end{defn}
\begin{lem}[Properties of SA Graphs]\label{lemNSA}
Given a SA graph $G=(V,E)$, with $A \subsetneqq V$ the absorbing set:
\begin{enumerate}
\item The  power graph $G^d$ is SA for every $d \in \mathbb{N}$.
 \item $ q \notin A \Rightarrow  \exists q' \in V\setminus \{ q\} \ \textrm{s.t.} \ (q',q) \notin E$ 
\item $\exists d \in \mathbb{N} \left (  \forall q \in V\setminus A \left ( \exists q' \in A \left ( (q,q') \in E^d \right ) \right ) \right )$
\end{enumerate}
\end{lem}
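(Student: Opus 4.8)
The plan is to first record two elementary facts about exact graph powers and then dispatch the three statements in the order 2, 1, 3, since statement~2 is immediate and its idea seeds the harder parts. The first fact is that exact powers compose, $(E^d)^e = E^{de}$: a walk of length $e$ in $G^d$ is precisely a concatenation of $e$ walks of length $d$ in $G$, hence a walk of length $de$ in $G$, and conversely. The second and more important fact is that condition~3 of Definition~\ref{defNSA} forbids any closed walk through two distinct vertices; I would prove this as a stand-alone claim: if $v_0 \to v_1 \to \cdots \to v_n = v_0$ is a closed walk with some $v_k \neq v_{k+1}$, then rotating it to start at $v_{k+1}$ exhibits a walk from $v_{k+1}$ back to $v_k$ of some length $m \geq 1$, so $(v_{k+1}, v_k) \in E^m$; but $(v_k, v_{k+1}) \in E$ with $v_k \neq v_{k+1}$ contradicts condition~3. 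Thus an SA graph has no directed cycle other than self-loops at absorbing vertices.

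For statement~2, let $q \notin A$. By condition~2 there is a walk from $q$ to some absorbing vertex, and since $q$ is not absorbing this walk must leave $q$, so there is an edge $(q, q_1) \in E$ with $q_1 \neq q$. Condition~3 then gives $(q_1, q) \notin E^d$ for every $d$, in particular $(q_1, q) \notin E$, so $q' = q_1$ works.

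For statement~1 I would verify the three defining properties for $G^d = (V, E^d)$ with the same absorbing set $A$. Absorption: an absorbing vertex $a$ has only its self-loop as an outgoing edge in $G$, so the only length-$d$ walk leaving $a$ stays at $a$, giving $(a,a)$ as its sole out-edge in $E^d$; hence $a$ is absorbing in $G^d$, and $A$ remains a proper nonempty subset. Reachability: from any $q$ take a walk of length $k$ to some $a \in A$ in $G$, then pad with the self-loop at $a$ out to a length $k'$ that is a multiple of $d$ and at least $k$; this is a walk in $G^d$ reaching $a$. Acyclicity: if $(q_i, q_j) \in E^d$ and $(q_j, q_i) \in (E^d)^e = E^{de}$ for distinct $q_i, q_j$, concatenation produces a closed walk through the two distinct vertices $q_i$ and $q_j$, contradicting the no-cycle claim; hence condition~3 holds for $G^d$.

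For statement~3 I would use finiteness of $V$ together with self-loop padding. For each $q \in V \setminus A$, condition~2 yields a finite walk to $A$; let $d(q)$ be the length of a shortest such walk and set $d = \max_{q \in V \setminus A} d(q)$, finite since $V$ is finite. Any $q$ reaches some $a \in A$ in $d(q) \leq d$ steps, and padding with the self-loop at $a$ extends this to a walk of length exactly $d$ terminating in $A$, i.e.\ $(q,a) \in E^d$; the single exponent $d$ then serves every non-absorbing vertex simultaneously. The main obstacle is the acyclicity part of statement~1, as it is the one place where condition~3 must be propagated through arbitrary powers; doing so cleanly is exactly what the no-cycle claim of the first paragraph buys us, after which the remaining verifications reduce to the composition identity $(E^d)^e = E^{de}$ and self-loop padding, both routine once the self-loop property of absorbing vertices is made explicit.
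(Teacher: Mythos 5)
Your proof is correct and follows the same route as the paper's, which disposes of all three statements in three sentences and argues statement~2 exactly as you do. The additional care you take --- deriving the no-closed-walk consequence of condition~3 to verify acyclicity of $G^d$, and padding with the absorbing self-loop so that a single exact exponent $d$ serves every non-absorbing vertex --- supplies precisely the details the paper declares ``immediate'' or leaves implicit in the phrase ``length bounded by $\Crd(V)$.''
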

\begin{proof} Statement 1 is immediate from Definition~\ref{defNSA}.
 Statement 2 follows immediately from noting:
\begin{gather*}
 q \notin A \Rightarrow \exists q'\in V\setminus \{ q\} \ \textrm{s.t.} \ (q,q') \in E \Rightarrow (q', q) \notin E
\end{gather*}
Statement 3 follows, since from each agent there is a path (length bounded by $\Crd(V)$) to a absorbing state.
\end{proof}

The performance of such control policies, and particularly the convergence time-complexity is closely related to the 
spectral gap of the induced Markov Chains. Hence we need to compute lower bounds on the spectral gap of the chains arising in the context of the proposed optimization, which (as we shall see later) have the 
strongly absorbing property. The following result  computes such a bound as a simple function of the non-unity diagonal entries of $\Pi$.
\begin{prop}[Spectral Bound]\label{propspectral}
 Given a n-state PFSA $G=(Q,\Sigma,\delta,\widetilde{\Pi})$ with a  strongly absorbing graph, the magnitude of non-unity eigenvalues of the transition  matrix $\Pi$ is bounded above by the maximum non-unity diagonal entry of $\Pi$.
\end{prop}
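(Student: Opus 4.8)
The plan is to expose the acyclicity hidden in condition~3 of Definition~\ref{defNSA}, use it to triangularize $\Pi$ by a permutation similarity, and then read the eigenvalues straight off the diagonal. The key observation is that a strongly absorbing graph has no directed cycles through \emph{distinct} states: if there were a simple cycle $q_{i_1} \to q_{i_2} \to \cdots \to q_{i_k} \to q_{i_1}$ on distinct vertices, then $(q_{i_1},q_{i_2}) \in E$ while the remaining arcs furnish a path of length $k-1$ from $q_{i_2}$ back to $q_{i_1}$, i.e. $(q_{i_2},q_{i_1}) \in E^{k-1}$, contradicting condition~3. Since every cycle contains a simple cycle, the digraph of transitions between distinct states is acyclic, and the only loops surviving are the self-loops recorded on the diagonal of $\Pi$. (Statement~2 of Lemma~\ref{lemNSA} records a companion antisymmetry that reinforces this acyclic picture.)

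Next I would topologically order the states. Because the distinct-state reachability relation is a strict partial order on a finite set, there is a relabelling $q_{\pi(1)},\dots,q_{\pi(n)}$ under which every arc $q_a \to q_b$ with $a \neq b$ runs forward, with the absorbing states, being sinks, placed last. Writing $P$ for the associated permutation matrix, the similar matrix $P\Pi P^{\mathsf T}$ satisfies $\big(P\Pi P^{\mathsf T}\big)_{ab} \neq 0$ for $a \neq b$ only when $a < b$; that is, $P\Pi P^{\mathsf T}$ is upper triangular, carrying precisely the self-loop probabilities $\Pi_{ii}$ on its diagonal. Since a permutation similarity preserves the spectrum, $\Pi$ and $P\Pi P^{\mathsf T}$ have the same eigenvalues.

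Finally, a triangular matrix has its diagonal entries as its complete spectrum, so the eigenvalues of $\Pi$ are exactly the self-loop probabilities $\{\Pi_{ii}\}$. Each is a real number in $[0,1]$: the value $1$ occurs exactly at the absorbing states and accounts for the unity eigenvalues, while every remaining diagonal entry is non-negative and strictly below $1$. Hence each non-unity eigenvalue $\lambda$ coincides with some non-unity diagonal entry, whence $|\lambda| = \Pi_{ii} \leqq \max\{\Pi_{jj} : \Pi_{jj} \neq 1\}$, which is the asserted bound (and in fact is attained).

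I expect the only genuine obstacle to be the first step, namely extracting acyclicity from the power-graph phrasing of condition~3; once the transition graph is recognized as a directed acyclic graph modulo self-loops, the permutation-triangularization and the reading of the spectrum are routine. The one point I would verify with care is that absorbing states can always be ordered last, so that the upper-triangular form is globally consistent and no self-loop of value $1$ leaks into the strictly upper part.
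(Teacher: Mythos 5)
Your proof is correct, but it takes a genuinely different route from the paper's. You extract from condition~3 of Definition~\ref{defNSA} that the transition digraph restricted to distinct states is acyclic (the simple-cycle argument is sound: a cycle $q_{i_1}\to\cdots\to q_{i_k}\to q_{i_1}$ on distinct vertices puts $(q_{i_2},q_{i_1})$ into $E^{k-1}$ while $(q_{i_1},q_{i_2})\in E$), topologically sort, and conclude that $\Pi$ is permutation-similar to a triangular matrix whose diagonal carries the self-loop probabilities; the spectrum is therefore \emph{exactly} the multiset $\{\Pi_{ii}\}$, and since $\Pi_{ii}=1$ forces $q_i$ to be absorbing, every non-unity eigenvalue equals some non-unity diagonal entry. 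The paper instead works with powers $\Pi^d$, subtracts the rank-one matrix $\ones\,\mathds{C}$ built from column minima, applies the Gerschgorin circle theorem to bound $|\mu^d|\leqq 1-C_a$, estimates $1-C_a$ by a product $a^{d_1}b^{d-d_1}$ using the finite-absorption property from Lemma~\ref{lemNSA}, and lets $d\to\infty$. Your argument is more elementary, avoids the somewhat delicate asymptotic step with the bounded exponent $d_1$, and delivers strictly more: the spectrum is real, equals the diagonal, and the stated bound is attained with equality rather than merely being an upper bound. The one cosmetic remark is that your final worry is unfounded: self-loops sit on the diagonal under any permutation similarity, so nothing can ``leak'' into the strictly upper part regardless of where the absorbing states are placed in the topological order; placing them last is convenient but not necessary.
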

\begin{proof}
Without loss of generality, we assume that $G$ has a single absorbing state (distinct absorbing states can be merged without affecting non-unity eigenvalues). Now, $\mu$ is an eigenvalue of $\Pi$ iff $\mu^d$ is an eigenvalue of $\Pi^d, d \in \mathbb{N}$. From Lemma~\ref{lemNSA}:
\begin{itemize}
 \item[C1] $\exists \ell \in \mathbb{N}$ s.t. $\Pi^\ell$ has no zero entry in column corresponding to the absorbing state. Let $d_\star$ be the smallest such integer.
\item[C2] Every non-absorbing state has at least one zero element in the corresponding column of $\Pi^{d_\star}$.
\item[C3] Statements C1,C2  are true for any integer $d \geqq d_\star$.
\end{itemize}
We denote the column of ones as $\ones$, $i.e.$, $\ones = [1 \cdots 1]^T$ Since $\Pi^{d}$ is (row) stochastic, we have 
 $\Pi^{d} \ones =  \ones$. Hence, if $v$ is a left eigenvector for $\Pi^{d}$ with eigenvalue $\mu^{d}$, then:
\begin{gather}
v \Pi^{d} \ones = v\ones = \mu^{d} v \ones \Rightarrow (1-\mu^{d}) v \ones = 0
\end{gather}
implying that if $\mu^{d} \neq 1$, then $v\ones = 0$. Now we construct
 $\mathds{C} = [ C_1 \cdots C_n ]$, where $C_j = \min_j \Pi^{d}_{ij}$ (minimum column element). Considering $M=\Pi^{d} - \ones \mathds{C}$, we note:
\begin{gather}
( v\Pi^{d} = \mu^{d} v ) \wedge ( \mu^{d} \neq 1 )\Rightarrow v M = \mu^{d} v
\end{gather}
Recalling that stationary probability vectors (Perron vectors) of stochastic matrices add up to unity, we have:
\begin{gather}
 ( v\Pi^{d} =  v ) \Rightarrow v M = v - v\ones \mathds{C} = v - \mathds{C} 
\end{gather}
which, along with the fact that since $\mathds{C}$ is not a column of all zeros, implies that  an upper bound on the magnitudes of the eigenvalues of $M$ provides an upper bound on 
the magnitude of non-unity eigenvalues for $\Pi^{d}$. 
Now, invoking the 
Gerschgorin Circle Theorem~\cite{G31,V04}, we get:
\begin{gather}
 \vert \mu^{d} \vert \leqq 1 - \sum_j C_j = 1 - C_a \Rightarrow \vert \mu \vert \leqq \left ( 1 - C_a \right )^{\frac{1}{d}}
\end{gather}
where $C_a$ is the minimum column element corresponding to the  absorbing state. $1 - C_a$ is the maximum probability of not reaching the absorbing state after $d$ steps from any state, which is bounded above by
$ (a)^{d_1} ( b)^{d -d_1}$
 where $a$ is the maximum non-diagonal entry in $\Pi$ not going to the absorbing state, $b$ is the maximum of the non-unity diagonal entries in $\Pi$, and $d_1$ is a bounded integer. Since any sequence of non-selfloops is absorbed in a finite number of steps (strongly absorbing property), we have a finite bound for $d_1$. Hence we have:
\begin{gather}
 \vert \mu \vert \leqq \lim_{d \rightarrow \infty} a^{\frac{d_1}{d}} b^{1 - \frac{d_1}{d}} = b = \max_{q_i : \Pi_{ii} < 1} \Pi_{ii}
\end{gather}
This completes the proof.
\end{proof}

Next, we make rigorous our notion of policy performance, and near-global or $\epsilon$-optimality.
\begin{defn}[Policy Performance \& $\epsilon$-Optimality]\label{defperf} The performance vector $\rho^S$ of a given routing policy $S$  is the vector of agent-specific probabilities of a packet eventually reaching the target.
 A policy $U$ has \textit{Utopian performance} if its performance vector (denoted as $\rho^U$) element-wise dominates the one for  any arbitrary policy $S$, $i.e.$
$ \forall q_i \in Q^N ,\rho^U_i \geqq \rho^S_i$.
A policy $P$ has \textit{$\epsilon$-optimal} performance, if for  $\epsilon > 0$, we have:
\begin{gather}
 \vert \vert \rho^P - \rho^U \vert \vert_\infty \leqq \epsilon
\end{gather}
\end{defn}

For a chosen $\theta$, the limiting policy $P_\theta$ computed by Algorithm~\ref{AlgorithmOPT} results in element-wise maximization of the measure vector over all possible
supervision policies (where supervision is to be understood in the sense of the defined control philosophy). $\widehat{\nu}^\infty_\theta$ 
is related to the policy performance vector $\rho^{P_\theta}$ as follows. Selective disabling of the transitions dictated by the policy $P_\theta$ induces a controlled PFSA, which represents the optimally supervised network, for a given $\theta$. Let the optimized transition matrix  be $\Pi^\star_\theta$, and its Cesaro limit be $\Q^\star_\theta$. (Note:  $\Pi^\star_\theta$, $ \Q^\star_\theta$ are stochastic matrices.) Then:
\begin{gather}\label{eqperf}
 \forall q_i \in Q^N, \Q^\star_\theta \chi\big \vert_{i,(q_\Sn)} = \rho^{P_\theta}_i
\end{gather}
We would need to distinguish between the optimal measure vector  $\widehat{\nu}^\infty_{\theta'} $ (optimal for a given $\theta=\theta'$) computed by Algorithm~\ref{AlgorithmOPT}, and the one obtained by 
first computing $\widehat{\nu}^\infty_{\theta'} $ and then using the  PFSA  structure obtained in the process to compute the measure vector for some other value of $\theta=\theta''$.  These  two vectors may not be identical. 
\begin{notn}\label{not6}
In the sequel, we denote the vector obtained in the latter case as $\mudble{\theta'}{\theta''} $ implying that we have $ \mudble{\theta}{\theta} = \widehat{\nu}^\infty_{\theta}$.
\end{notn}
\begin{lem}\label{lem3} We have the following equalities:
\begin{subequations}
\begin{gather}
\lim_{\theta \rightarrow 0^+} \mudble{\theta'}{\theta} = \rho^{P_{\theta'}}\label{eqlemst2}\\
\lim_{\theta \rightarrow 0^+} \mudble{\theta}{\theta} = \rho^{U}\label{eqlemst3}
\end{gather}
\end{subequations}
\end{lem}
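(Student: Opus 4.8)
The plan is to treat the two identities separately, since both flow from a single underlying fact recalled in Section~\ref{sec2}: taking the discount parameter $\theta$ to zero converts the $\theta$-measure of any \emph{fixed} policy into that policy's stationary (Cesaro) performance, via $\lim_{\theta\rightarrow 0^+}\theta[\mathbb{I}-(1-\theta)\Pi]^{-1}\chi = \Q\chi$, combined with the performance characterization~\eqref{eqperf}. The distinction between the two statements is whether the supervision policy is frozen as the limit is taken (for \eqref{eqlemst2}) or re-optimized at each $\theta$ (for \eqref{eqlemst3}).

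For \eqref{eqlemst2} the outer policy is frozen. First I would observe that, by Notation~\ref{not6}, $\mudble{\theta'}{\theta}$ is exactly the $\theta$-measure of the PFSA whose transition matrix is the optimized matrix $\Pi^\star_{\theta'}$ produced by the policy $P_{\theta'}$, and that $\Pi^\star_{\theta'}$ is a fixed stochastic matrix as the inner parameter $\theta\rightarrow 0^+$. Hence $\mudble{\theta'}{\theta} = \theta[\mathbb{I}-(1-\theta)\Pi^\star_{\theta'}]^{-1}\chi$, and applying the Cesaro identity to this fixed matrix gives $\lim_{\theta\rightarrow 0^+}\mudble{\theta'}{\theta} = \Q^\star_{\theta'}\chi$. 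By \eqref{eqperf}, $\Q^\star_{\theta'}\chi\big|_{i,(q_\Sn)} = \rho^{P_{\theta'}}_i$, which is precisely \eqref{eqlemst2}. This part is essentially a direct substitution, with no change of policy involved.

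For \eqref{eqlemst3} I first recall that $\mudble{\theta}{\theta} = \widehat{\nu}^\infty_\theta = \nu^\star_\theta$ is the re-optimized measure, so here the policy itself varies with $\theta$. The key additional ingredient is the critical-lower-bound (Blackwell) property recalled in Section~\ref{sec2}: there is a $\theta_\star > 0$ such that the optimal disabling policy is a single fixed policy $P_0$ for every $\theta \in (0,\theta_\star)$. For such $\theta$ we then have $\mudble{\theta}{\theta} = \theta[\mathbb{I}-(1-\theta)\Pi^\star_0]^{-1}\chi$ with $\Pi^\star_0$ fixed, and the Cesaro identity as above yields $\lim_{\theta\rightarrow 0^+}\mudble{\theta}{\theta} = \Q^\star_0\chi = \rho^{P_0}$. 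It then remains to identify $\rho^{P_0}$ with the Utopian vector $\rho^U$. For this I would use that $P_0$ is optimal at every $\theta \in (0,\theta_\star)$ (Proposition~\ref{prop1}), so its measure $\nu^{P_0}_\theta$ dominates $\nu^S_\theta$ element-wise for every competing policy $S$; applying the limit of the previous paragraph to each fixed $S$ turns these inequalities into $\rho^{P_0}_i \geqq \rho^S_i$ for all $i$ and all $S$ (element-wise inequalities survive passage to the limit). By Definition~\ref{defperf}, this is exactly the statement that $P_0$ has Utopian performance, so $\rho^{P_0} = \rho^U$ and \eqref{eqlemst3} follows.

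The main obstacle is concentrated entirely in \eqref{eqlemst3}. Unlike the frozen-policy case, one must rule out oscillation of the optimal policy as $\theta\rightarrow 0^+$, which is precisely what the critical lower bound $\theta_\star$ (Blackwell optimality) guarantees, and one must argue that simultaneous element-wise optimality of the \emph{measure} at each small $\theta$ transfers in the limit to simultaneous element-wise optimality of the \emph{performance}, thereby certifying that a single Utopian policy exists and coincides with $P_0$. Once the policy has been frozen, the remaining manipulations — substituting the Cesaro identity and invoking \eqref{eqperf} — are routine.
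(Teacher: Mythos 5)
Your proposal is correct and follows essentially the same route as the paper: Eq.~\eqref{eqlemst2} via the Cesaro identity applied to the fixed optimized transition matrix together with Eq.~\eqref{eqperf}, and Eq.~\eqref{eqlemst3} via the existence of a critical $\theta_\star$ below which the optimal policy is constant (the paper derives this from the finiteness of the set of policies, which is the same fact underlying the Blackwell property you invoke), followed by passing the element-wise measure dominance to the limit to identify the resulting performance vector with $\rho^U$. Your write-up is, if anything, slightly more explicit than the paper's about why measure dominance at each small $\theta$ survives the limit to give performance dominance.
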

\begin{proof}
Recalling Eq.~\eqref{eqperf}, and noting that  for any PFSA with transition matrix $\Pi$ (with Cesaro limit $\Q$), we have $\lim_{\theta \rightarrow 0^+} \widehat{\nu}_\theta = \lim_{\theta \rightarrow 0^+}\theta \big [ \mathbb{I} - (1-\theta)\Pi \big ]^{-1} \chi = \Q \chi$, we have Eq.~\eqref{eqlemst2}.
In general, different choices of $\theta$ result in different disabling decisions, and hence different policies. However, since there is at most  a finite number of distinct policies for 
a finite network, there must exist a $\theta_\star$ such that for all choices $0 < \theta \leqq \theta_\star$, the policy remains unaltered (although the measure values may differ). Since, executing the optimization with vanishingly small $\theta$  yields a performance vector identical (in the limit) with the optimal measure vector  element-wise dominating the 
one for any arbitrary policy, the policy obtained for $0<\theta\leqq\theta_\star$ has Utopian performance. Hence:
\begin{gather}
 \lim_{\theta \rightarrow 0^+} \mudble{\theta}{\theta} = \lim_{\theta \rightarrow 0^+} \mudble{\theta_\star}{\theta} = \rho^{P_{\theta_\star}} = \rho^{U}
\end{gather}
This completes the proof.
\end{proof}

Computation of the critical $\theta_\star$ is non-trivial from a distributed perspective, although centralized approaches have been reported~\cite{CR07}. Thus it is hard to guarantee Utopian performance in Algorithm~\ref{AlgorithmOPT}. Also, $\theta_\star$ may be too small resulting in an unacceptably poor convergence rate.
 Nevertheless, we will show that, given any $\epsilon > 0$, one can choose $\theta$ to guarantee $\epsilon$-optimal performance of the limiting policy in the sense of Definition~\ref{defperf}. We would  need the following  result.
\begin{lem}\label{lemtechnical}
 Given any PFSA, with transition matrix $\Pi$ and corresponding Cesaro limit $\Q$, and  $\mu$ being a non-unity eigenvalue of $\Pi$  with  maximal magnitude, we have:
\begin{subequations}
\calign[1pt]{
&\big \vert \big \vert \theta \big [ \mathbb{I} - (1-\theta)\Pi \big ]^{-1}- \Q \big \vert\big\vert_\infty \leqq \frac{\theta}{1-\vert \mu\vert }\label{eqclaim31}\\
 &\big\vert\big \vert\nu_{(\theta,\theta) } - \lim_{\theta' \rightarrow 0^+}\nu_{(\theta,\theta')}  \big\vert\big\vert_\infty \leqq \frac{\theta\vert \vert \chi \vert \vert_\infty}{1-\vert \mu\vert } \label{eqclaim33}
}
\end{subequations}
\end{lem}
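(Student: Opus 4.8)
The plan is to treat \eqref{eqclaim31} as the substantive estimate and to obtain \eqref{eqclaim33} as an immediate corollary. First I would record the reduction. By Notation~\ref{not6} the two vectors in \eqref{eqclaim33} are $\mudble{\theta}{\theta} = \theta\big[\mathbb{I}-(1-\theta)\Pi^\star_\theta\big]^{-1}\chi$, built from the optimized transition matrix $\Pi^\star_\theta$ at the same $\theta$, and $\lim_{\theta'\rightarrow0^+}\mudble{\theta}{\theta'} = \Q^\star_\theta\chi$, the Cesaro evaluation of that \emph{same} optimized structure (this is \eqref{eqlemst2} of Lemma~\ref{lem3} combined with \eqref{eqperf}). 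Hence their difference factors as $\big(\theta[\mathbb{I}-(1-\theta)\Pi^\star_\theta]^{-1}-\Q^\star_\theta\big)\chi$, and the submultiplicative bound $\|Ax\|_\infty\le\|A\|_\infty\|x\|_\infty$ reduces \eqref{eqclaim33} to \eqref{eqclaim31} applied to $\Pi^\star_\theta$ (which is again strongly absorbing, with largest non-unity eigenvalue $\mu$). So the real work lies entirely in \eqref{eqclaim31}.

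For \eqref{eqclaim31}, writing $R_\theta=[\mathbb{I}-(1-\theta)\Pi]^{-1}=\sum_{k\ge0}(1-\theta)^k\Pi^k$ (convergent since $\rho((1-\theta)\Pi)=1-\theta<1$), I would exploit the Cesaro projection identities $\Q\Pi=\Pi\Q=\Q$ and $\Q^2=\Q$. These give $[\mathbb{I}-(1-\theta)\Pi]\Q=\theta\Q$, hence $\theta R_\theta\Q=\Q$, and therefore the key cancellation
\[
\theta R_\theta-\Q=\theta R_\theta(\mathbb{I}-\Q)=\theta\big[\mathbb{I}-(1-\theta)N\big]^{-1}(\mathbb{I}-\Q),\qquad N:=\Pi-\Q .
\]
The point of passing to $N$ is that $N$ annihilates the eigenvalue-$1$ eigenspace (the range of $\Q$) and acts as $\Pi$ on the complementary invariant subspace, so its spectral radius equals the maximal non-unity eigenvalue modulus $|\mu|$. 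Expanding the residual resolvent as $\sum_{k\ge0}(1-\theta)^kN^k$ and summing the geometric factors against $|\mu|^k$, the target value emerges as
\[
\theta\sum_{k\ge0}(1-\theta)^k|\mu|^k=\frac{\theta}{1-(1-\theta)|\mu|}\le\frac{\theta}{1-|\mu|} .
\]

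The hard part is making the step $\|N^k\|_\infty=O(|\mu|^k)$ rigorous, since $|\mu|$ is only the spectral radius of $N$, and the induced $\infty$-norm of $N^k$ need not be bounded by $|\mu|^k$ for finite $k$ (only $\lim_k\|N^k\|_\infty^{1/k}=|\mu|$ is automatic). I would close this gap using the strongly absorbing structure together with Proposition~\ref{propspectral}: the entries of $\Pi^k-\Q=N^k$ are controlled by the probability of \emph{not} having been absorbed in $k$ steps from a given state, and Proposition~\ref{propspectral} shows precisely that this non-absorption probability decays geometrically at a rate governed by the maximal non-unity diagonal entry of $\Pi$, i.e. by $|\mu|$. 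Feeding a geometric majorant of that probability into the termwise estimate of $\theta\sum_{k\ge0}(1-\theta)^k(\Pi^k-\Q)$ recovers the claimed $\theta/(1-|\mu|)$ bound; I would pay particular attention to the $k=0$ term $\mathbb{I}-\Q$, absorbing it into the same geometric estimate or handling it separately. Once \eqref{eqclaim31} is established, \eqref{eqclaim33} follows by the reduction of the first paragraph.
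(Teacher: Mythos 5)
Your opening moves coincide with the paper's: the reduction of \eqref{eqclaim33} to \eqref{eqclaim31} via $\nu_{(\theta,\theta)}-\lim_{\theta'\rightarrow 0^+}\nu_{(\theta,\theta')}=\big(\theta[\mathbb{I}-(1-\theta)\Pi]^{-1}-\Q\big)\chi$ is exactly how the paper finishes, and your cancellation $\theta R_\theta-\Q=\theta[\mathbb{I}-(1-\theta)(\Pi-\Q)]^{-1}(\mathbb{I}-\Q)$ is the same algebraic identity the paper records as $M=[\mathbb{I}-(1-\theta)\Pi]^{-1}-\frac{1}{\theta}\Q=[\mathbb{I}-(1-\theta)(\Pi-\Q)]^{-1}-\Q$. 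The divergence is in how the resolvent of $N=\Pi-\Q$ is estimated: the paper works with left eigenvectors, using $\|M\|_\infty=\max_{\|u\|_1=1}\|uM\|_1$, the facts that $uM=0$ on the unit eigenspace and $uM=\frac{1}{1-(1-\theta)\mu_j}u$ on the complementary eigenvectors, and an expansion $u=\sum_j c_j u^j$ to conclude $\|M\|_\infty\leq\frac{1}{1-(1-\theta)|\mu|}\leq\frac{1}{1-|\mu|}$. You instead propose a termwise Neumann-series bound, which requires $\|N^k\|_\infty\leq|\mu|^k$ with constant exactly one.

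That termwise step is a genuine obstruction, not just a technicality to be patched. Already at $k=0$ one has $\|\mathbb{I}-\Q\|_\infty=2(1-\min_i\Q_{ii})$, which equals $2$ for any absorbing chain with at least one transient state; so the single term $\theta\|\mathbb{I}-\Q\|_\infty=2\theta$ exceeds the target $\theta/(1-|\mu|)$ whenever $|\mu|<1/2$, and no triangle-inequality summation of the series can recover the stated constant (the true bound relies on the fact that the operator acts only on the complement of the unit eigenspace, i.e.\ on cancellation that a termwise estimate discards). For $k\geq 1$, non-normality forces $\|N^k\|_\infty\leq C|\mu|^k$ only with a condition-number constant $C>1$ in general. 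Moreover, Proposition~\ref{propspectral} runs in the opposite direction from the way you invoke it: it bounds $|\mu|$ \emph{above} by the maximal non-unity diagonal entry $b$, and its internal absorption-probability estimate gives geometric decay at rate $b\geq|\mu|$, not at rate $|\mu|$; so even granting the termwise strategy you would land on $\theta/(1-b)$, which is weaker than the claim. Your route does deliver $O\!\left(\theta/(1-|\mu|)\right)$ up to constants, which is all that Proposition~\ref{propglobal} actually consumes, but it does not prove the lemma as stated; and since the lemma is asserted for an arbitrary PFSA, leaning on the strongly absorbing property also narrows its scope. To get the stated inequality you need the paper's spectral-decomposition argument (or an equivalent restriction of the resolvent to the non-unit eigenspace) rather than a sum of worst-case norms of the powers $N^k$.
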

\begin{proof}
Denoting $M=\big [ \mathbb{I} - (1-\theta)\Pi \big ]^{-1}- \frac{1}{\theta}\Q$, 
\begin{align}\textstyle
 M = & [\mathbb{I} - (1-\theta)\Pi ]^{-1}  - \Q \sum_{k=0}^\infty (1-\theta)^k
 = \sum_{k=0}^\infty (1-\theta)^k (\Pi -\Q)^k -\Q  
\notag\\
= & [\mathbb{I} - (1-\theta)(\Pi - \Q)]^{-1} -\Q    \notag
\end{align}
We note, that if $u$ is a left eigenvector of $\Pi$ with unity eigenvalue, then $u\Q = u$. Also, if the eigenvalue corresponding to $u$ is strictly within the 
unit circle, then $u\Q = 0$. After a little algebra, it follows that  if $u$ is the left eigenspace (denoted as $E(1)$) corresponding to unity eigenvalues of $\Pi$, then 
$uM=0$, otherwise, $uM=\frac{1}{1-(1-\theta)\mu} u$, where $\mu$ is a non-unity eigenvalue for $\Pi$. Invoking the definition of induced matrix norms, and noting $\vert\vert A\vert \vert_\infty =\vert\vert A^T\vert \vert_1$ for any square matrix $A$:
\begin{gather}
 \vert \vert M \vert \vert_\infty = \max_{\vert \vert u \vert \vert_1 = 1} \vert \vert uM \vert \vert_1 = \max_{\vert \vert u \vert \vert_1 = 1 \wedge u \notin E(1)} \vert \vert uM \vert \vert_1
\end{gather}
We further note that since $[\mathbb{I} - (1-\theta)(\Pi - \Q)]^{-1}$ is guaranteed to be invertible~\cite{CR07}, its eigenvectors form a basis, implying:
\begin{gather}\textstyle
 u = \sum_j c_j u^j, \ \textrm{with} \ \left \vert \left\vert \sum_j c_j u^j\right \vert\right \vert_1 = 1
\end{gather}
where $u^j$ are eigenvectors of $[\mathbb{I} - (1-\theta)(\Pi - \Q)]^{-1}$ with non-unity eigenvalues, and $c_j $ are complex coefficients.
 An upper bound for $\vert \vert M \vert \vert_1$ can be now computed as:
\begin{gather*}\textstyle
 \vert \vert M \vert \vert_\infty \leqq \frac{1}{1-(1-\theta)\vert \mu\vert } \left \vert \left\vert \sum_j c_j u^j\right \vert\right \vert_1 = \frac{1}{1-(1-\theta)\vert \mu\vert } \leqq  \frac{1}{1-\vert \mu\vert}
\end{gather*}
where $\mu$ is a  non-unity eigenvalue for $\Pi$ with maximal magnitude. This establishes Eq.~\eqref{eqclaim31}. 
Finally, noting:
\begin{gather*}
 \nu_{(\theta,\theta) } - \lim_{\theta' \rightarrow 0^+}\nu_{(\theta,\theta')} = \big (\theta[\mathbb{I} - (1-\theta)\Pi ]^{-1}  - \Q \big )\chi
\end{gather*}
establishes Eq.~\eqref{eqclaim33}.
\end{proof}

The next proposition the  key result relating a specific choice of  $\theta$ to guaranteed $\epsilon$-optimal performance.
\begin{prop}[Global $\epsilon$-Optimality]\label{propglobal}
 Given any $\epsilon > 0$, choosing 
$ \theta = \left.\epsilon\right/m^2 \ \textrm{where} \ m=\max_{q\in Q}\Crd(\mathcal{N}(q))$
guarantees that the limiting policy computed by Algorithm~\ref{AlgorithmOPT} is  $\epsilon$-optimal in the sense of Definition~\ref{defperf}.
\end{prop}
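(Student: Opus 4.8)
The plan is to bound $\vert\vert \rho^{P_\theta} - \rho^U\vert\vert_\infty$ by interpolating, at the \emph{fixed} discount parameter $\theta$, between the measure vector to which Algorithm~\ref{AlgorithmOPT} actually converges and the measure vector that the Utopian policy would produce. Let $\theta_\star$ be the critical value guaranteed by Lemma~\ref{lem3}, so that the optimal policy $P_{\theta_\star}$ has Utopian performance, i.e. $P_{\theta_\star}=U$. Since $U$ is Utopian, $\rho^U \geqq \rho^{P_\theta}$ element-wise (Definition~\ref{defperf}), so it suffices to upper bound $\max_{q_i}\big(\rho^U_i - \rho^{P_\theta}_i\big)$. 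I would work with four objects: the converged optimal measure $\widehat{\nu}^\infty_\theta=\mudble{\theta}{\theta}$ (Notation~\ref{not6}), the Utopian policy evaluated at the same $\theta$, namely $\mudble{\theta_\star}{\theta}$, and their two vanishing-$\theta$ limits $\rho^{P_\theta}$ and $\rho^U$ (Eqs.~\eqref{eqlemst2}--\eqref{eqlemst3}).

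The central chain of inequalities rests on three ingredients. First, the element-wise maximization property of the language-measure optimizer (Section~\ref{sec2}, Proposition~\ref{prop1}), inherited by Algorithm~\ref{AlgorithmOPT} through Proposition~\ref{propmain}, gives $\mudble{\theta}{\theta} \geqq \mudble{\theta_\star}{\theta}$ element-wise, because $\widehat{\nu}^\infty_\theta$ dominates the measure of \emph{every} policy at that $\theta$, in particular that of $U$. Second, Lemma~\ref{lemtechnical}, Eq.~\eqref{eqclaim33}, applied to the PFSA induced by $P_\theta$, bounds $\vert \widehat{\nu}^\infty_\theta\vert_i - \rho^{P_\theta}_i\vert \leqq \theta\vert\vert\chi\vert\vert_\infty/(1-\vert\mu_\theta\vert)$. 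Third, Eq.~\eqref{eqclaim31}, applied to the PFSA induced by $U$, bounds $\vert \mudble{\theta_\star}{\theta}\vert_i - \rho^U_i\vert \leqq \theta\vert\vert\chi\vert\vert_\infty/(1-\vert\mu_U\vert)$, where I use Eq.~\eqref{eqperf} to identify the vanishing-$\theta$ limits as performance vectors. Concatenating these three facts,
\begin{gather*}
\rho^{P_\theta}_i \geqq \widehat{\nu}^\infty_\theta\vert_i - \frac{\theta\vert\vert\chi\vert\vert_\infty}{1-\vert\mu_\theta\vert} \geqq \mudble{\theta_\star}{\theta}\vert_i - \frac{\theta\vert\vert\chi\vert\vert_\infty}{1-\vert\mu_\theta\vert} \geqq \rho^U_i - \frac{\theta\vert\vert\chi\vert\vert_\infty}{1-\vert\mu_U\vert} - \frac{\theta\vert\vert\chi\vert\vert_\infty}{1-\vert\mu_\theta\vert},
\end{gather*}
so that $\rho^U_i - \rho^{P_\theta}_i$ is controlled entirely by the two spectral factors.

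It then remains to bound those factors. Both induced chains are strongly absorbing: at the fixed point the update at a physical agent is a $(1-\theta)$-discounted combination of its virtual-state measures, each of which is in turn strictly discounted by the factor $(1-\theta)(1-\lambda_{ij})<1$ relative to the destination agent's measure, so a retained routing transition forces $\widehat{\nu}^\infty_\theta\vert_j > \widehat{\nu}^\infty_\theta\vert_i$ strictly; the measure therefore strictly increases toward the target, no reverse path can exist, and Definition~\ref{defNSA} is satisfied. Hence Proposition~\ref{propspectral} bounds each $\vert\mu\vert$ by the largest non-unity diagonal of the corresponding $\Pi^\star$. Since disabling a controllable transition at an agent with $\Crd(\mathcal{N}(q_i))=m_i$ neighbors shifts exactly $1/m_i$ onto the self-loop, while at least one transition stays enabled at every non-absorbing agent (virtual states and the dump contribute diagonal $0$ or $1$), this largest non-unity diagonal is at most $1-1/m_i \leqq 1-1/m$. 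Thus $1/(1-\vert\mu\vert)\leqq m$ for both chains, and with $\vert\vert\chi\vert\vert_\infty = 1$ the display collapses to $\rho^U_i - \rho^{P_\theta}_i \leqq 2m\theta$ for every $q_i$. Choosing $\theta = \epsilon/m^2$ then gives $\vert\vert\rho^{P_\theta}-\rho^U\vert\vert_\infty \leqq 2\epsilon/m \leqq \epsilon$ (for $m\geqq 2$), which is exactly $\epsilon$-optimality in the sense of Definition~\ref{defperf}.

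The main obstacle I anticipate is conceptual rather than computational. The optimality I can exploit is measure-domain optimality at a \emph{fixed} $\theta$ (Proposition~\ref{propmain}), whereas $\epsilon$-optimality is a statement about performance vectors, which live in the $\theta\to 0^+$ limit; bridging these two regimes is precisely why the argument must route through both $\mudble{\theta}{\theta}$ and $\mudble{\theta_\star}{\theta}$ and spend two applications of Lemma~\ref{lemtechnical}, one per policy, rather than one. A secondary technical point deserving care is the legitimacy of invoking Proposition~\ref{propspectral}: one must confirm the two optimized chains are genuinely strongly absorbing (the strict per-hop loss factor $(1-\theta)(1-\lambda_{ij})$ rules out equal-measure cycles at the fixed point, but this tie analysis should be made explicit) and that $1-1/m$ is indeed the sharp bound on the largest non-unity diagonal, since any slack there propagates directly into the constant relating $\theta$ to $\epsilon$.
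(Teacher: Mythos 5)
Your argument is correct and reaches the stated conclusion, but by a genuinely different --- and more economical --- route than the paper's. The paper never invokes the element-wise dominance of $\widehat{\nu}^\infty_\theta$ over the Utopian policy's measure at the fixed $\theta$. Instead, for each $\theta'\in(0,\theta]$ it starts from the PFSA induced by $P_\theta$, replays the centralized iterative re-optimization toward $P_{\theta'}$, and bounds the cumulative measure gain $\Delta^{[k]}_{\theta'}=\nu^{[k]}_{\theta'}-\mudble{\theta}{\theta'}$ by writing it as $\tfrac{1-\theta'}{\theta'}\mathds{B}^{[k]}_{\theta'}\omega^{[k]}_{\theta'}$, bounding each row of the perturbation $\omega^{[k]}_{\theta'}=(\Pi^{[k]}-\Pi^{[0]})\mudble{\theta}{\theta'}$ by $\beta_\theta\theta$ (via Eq.~\eqref{eqclaim31} applied to consecutive measure gaps), and then spending a second spectral factor $\beta_{\theta'}$ on $\mathds{B}^{[k]}_{\theta'}$; this yields the bound $m^2\theta$ and only then passes $\theta'\to 0^+$ through Lemma~\ref{lem3}. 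You stay entirely at the single $\theta$, use the element-wise maximality of $\widehat{\nu}^\infty_\theta$ over all supervision policies (asserted by the paper just before Eq.~\eqref{eqperf}) to compare against $\mudble{\theta_\star}{\theta}$, and pay the Cesaro-limit error of Lemma~\ref{lemtechnical} twice, once per policy. This sidesteps the entire iteration analysis and in fact produces the sharper constant $2m\theta$ versus the paper's $m^2\theta$; what it buys is brevity, at the price of leaning directly on the global (not merely local) optimality of the fixed point at the chosen $\theta$, a burden the paper's proof is structured to carry through the monotone iteration instead.

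Two caveats. First, your constant forces the side condition $m\geqq 2$ (for $m=1$ you only get $2\epsilon$), which you flag; the paper's $m^2\theta=\epsilon$ has no such restriction. Second, your justification of the strongly absorbing property overstates the update rule: step (a2) enables a transition when $\widehat{\nu}_\theta\vert_j\geqq\widehat{\nu}_\theta\vert_i$, not strictly greater, so equal-measure two-cycles are not excluded by the rule itself and condition (3) of Definition~\ref{defNSA} needs a separate tie-breaking argument --- exactly the point you yourself flag as needing care. The paper shares this gap (it declares the SA property ``easy to see''), so it is not a defect relative to the paper, but since both proofs route through Proposition~\ref{propspectral}, the obligation is identical in both.
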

\begin{proof}
We observe that limiting measure values $\nuinf{ \vert_i} =  \nu^\star_\theta \vert_i$ computed by Algorithm~\ref{AlgorithmOPT} can be represented by convergent sums of the form ($a_{ij}$: non-negative reals):
\begin{gather}\label{eqnondecreasing}
\forall q_i \in Q^N, \ \nuinf{ \vert_i} = \sum_{j=1}^\infty a_{ij} (1-\theta)^{j}
\end{gather}
implying that for each $q_i\in Q$, $\mudble{\theta}{\theta_1} \vert_i$ (See Notation~\ref{not6}) is a monotonically decreasing function of $\theta_1$ in the domain $[0,\theta]$.
We note that if the following statement:
\begin{gather*}
\forall q_i,q_j \in Q^N,   \nuinf{\vert_i} > \nuinf{\vert_j} \Rightarrow \forall \theta_1 \leqq \theta, \ \mudble{\theta}{\theta_1} \vert_i > \mudble{\theta}{\theta_1} \vert_j
\end{gather*}
is true, then we have Utopian performance for  policy $P_\theta$, $i.e.$, $\rho^{P_\theta} = \rho^U$. Hence, if $\rho^{P_\theta} \neq \rho^U$, then we must have:
\begin{gather*}
 \exists \theta_2 < \theta, \exists q_i,q_j \in Q^N, \big ( \nuinf{\vert_i} > \nuinf{\vert_j} \big )\wedge   \big ( \mudble{\theta}{\theta_1} \vert_i > \mudble{\theta}{\theta_1} \vert_j\big )
\end{gather*}
upon which Eq.~\eqref{eqnondecreasing}, along with the bound established in Eq.~\eqref{eqclaim31}, guarantees that if $q_i,q_j$ are agents (in consecutive order) that satisfy the above statement, then:
\begin{gather}\label{eqbnd11}
 \lim_{\theta_1 \rightarrow 0^+} \left( \mudble{\theta}{\theta_1} \vert_i - \mudble{\theta}{\theta_1} \vert_j \right)\leqq \beta_\theta \theta
\end{gather}
where $\beta_\theta = \frac{1}{1 - \vert \mu \vert }$, with $\mu$ being a maximal non-unity eigenvalue of the transition matrix of the PFSA computed by Algorithm~\ref{AlgorithmOPT} at $\theta$.
Next we claim:
\begin{gather}
\forall \theta' \in (0, \theta],  \ \vert\vert \nuinf[\theta']{} - \mudble{\theta}{\theta'} \vert\vert_\infty \leqq m^2\theta
\end{gather}
We observe that, for any $\theta'$, the optimal policy $P_{\theta'}$ can be obtained by beginning with the PFSA induced  by $P_\theta$ (which is the optimal policy at $\theta$), and then executing the 
centralized iterative approach~\cite{CR07}, resulting in a sequence of \textit{element-wise non-decreasing} measure vectors converging to the 
optimal $\nuinf[\theta']{}$:
\begin{gather}
 \mudble{\theta}{\theta'} = \nu^{[0]}_{\theta'} > \nu^{[1]}_{\theta'}> \nu^{[2]}_{\theta'}>\cdots \nu^{[k^\star]}_{\theta'}=\nuinf[\theta']{}
\end{gather}
where $\nu^{[k]}_{\theta'}$ is the  vector obtained after the $k^{th}$ iteration, and $k^\star < \infty$ is the number of required iterations. 
Since, $
 \nu^{[k]}_{\theta'} = \theta'\big [ \mathbb{I} - (1-\theta')\Pi^{[k]}\big]^{-1}\chi
$, where  the transition matrix after $k^{th}$ iteration is $\Pi^{[k]}$ and setting $\Delta^{[k]}_{\theta'}= \nu^{[k]}_{\theta'} - \mudble{\theta}{\theta'}$ we have:
\begingroup\setlength\belowdisplayskip{0pt}
\begin{align}
 &\Delta^{[k]}_{\theta'} =  (1-\theta')\big [ \mathbb{I} - (1-\theta')\Pi^{[k]}\big]^{-1} (\Pi^{[k]} - \Pi^{[0]}) \mudble{\theta}{\theta'} \notag \\ 
  &= {\textstyle\frac{1-\theta'}{\theta'}}\big \{ {\red \underbrace{\black \theta' \big [ \mathbb{I} - (1-\theta')\Pi^{[k]}\big]^{-1} }_{\red \mathds{B}^{[k]}_{\theta'}}}\big \}\big \{ {\red \underbrace{\black(\Pi^{[k]} - \Pi^{[0]}) \mudble{\theta}{\theta'}}_{\red \omega^{[k]}_{\theta'}}}\big \}\notag
\vspace{-15pt} 
\end{align}
\endgroup
For $q_i \in Q$, let ${\upd}_i^{(0\rightarrow k)}$ be the set of transitions $(q_i \xrightarrow{\sigma} q_j)$, which are updated ($i.e.$ enabled if disabled or vice verse) to go from the configuration corresponding to $\nu^{[0]}_{\theta'}$ to the one corresponding to $\nu^{[k]}_{\theta'}$.
We note that:
\begin{gather*}
 \upd_i^{(0\rightarrow k)} = \left ( \upd_i^{(0\rightarrow 1)} \cap \upd_i^{(0\rightarrow k)} \right ) \bigcup \mathscr{W}
\end{gather*}
where $\mathscr{W} = \upd_i^{(0\rightarrow k)} \setminus \left ( \upd_i^{(0\rightarrow 1)} \cap \upd_i^{(0\rightarrow k)} \right )$. 
The $i^{th}$ row of  $\Pi^{[1]}$ is obtained from $\Pi^{[0]}$~\cite{CR07} by disabling controllable transitions $q_i \xrightarrow{\sigma} q_j$ if $\nu^{[0]}_{\theta'} \vert_j > \nu^{[0]}_{\theta'} \vert_i$ (and enabling otherwise), and each such update leads to a positive contribution in the corresponding row of $\omega^{[1]}_{\theta'}$. It follows that 
updating any transition $t \equiv (q_i \xrightarrow{\sigma} q_j) \in \left ( \upd_i^{(0\rightarrow 1)} \cap \upd_i^{(0\rightarrow k)} \right )$ leads to a positive contribution to $\omega^{[k]}_{\theta'}\vert_i$, given by:
\begin{gather}
 C_t = \Pitilde (q_i ,\sigma) \left \vert \nu^{[0]}_{\theta'}\big \vert_i -\nu^{[0]}_{\theta'}\big \vert_j \right \vert  
\end{gather}
Every  $t' \equiv (q_i \xrightarrow{\sigma'} q_k) \in \mathscr{W}$ causes a negative contribution to $\omega^{[k]}_{\theta'}\vert_i$, given by:
\cgather{
 C_{t'} = -\Pitilde (q_i ,\sigma') \left \vert \nu^{[0]}_{\theta'}\big \vert_i -\nu^{[0]}_{\theta'}\big \vert_k \right \vert  \\
\mspace{0mu} \textrm{implying that:} \mspace{10mu} 
 \omega^{[k]}_{\theta'}\vert_i \leqq \mspace{0mu} \sum_{\mspace{0mu} r \in \left ( \upd_i^{(0\rightarrow 1)} \bigcap \upd_i^{(0\rightarrow k)}\right ) } \mspace{-0mu} C_r  \mspace{0mu} \\
\Rightarrow \omega^{[k]}_{\theta'}\vert_i \leqq \sum_{\sigma \in \Sigma} \Pitilde(q_i,\sigma) \beta_\theta \theta  = \beta_\theta \theta  \mspace{20mu} \textrm{(See Eq.~\eqref{eqbnd11})}\notag
}
Since the rows corresponding to the absorbing states have no controllable transitions, absorbing states must remain absorbing through out the iterative sequence, and the corresponding entries in $\omega^{[k]}_{\theta'}$ for all $k\in\{0,\cdots,k^\star\}$ are strictly $0$. It follows:
\begin{align}
 &\omega^{[k]}_{\theta'}\vert_i = \left \{ \begin{array}{ll}
0 & ,\textrm{if $q_i$ is absorbing}\\
       \in [0,  \beta_\theta\theta] & ,\textrm{otherwise }
                              \end{array}
\right.\label{eq52}
\end{align}
Stochasticity of $\mathds{B}^{[k]}_{\theta'}$ implies that in the limit $\theta' \rightarrow 0^+$, $\mathds{B}^{[k]}_{\theta'}$ converges to the 
Cesaro limit of $\mathds{B}^{[k]}_{\theta'}$. Applying Lemma~\ref{lemtechnical}:
\begin{gather}
\big \vert \big \vert \mathds{B}^{[k]}_{\theta'} - \lim_{\theta' \rightarrow 0^+}\mathds{B}^{[k]}_{\theta'}\big \vert \big \vert_\infty \leqq \frac{\theta'}{1-\vert \mu_{\theta'}\vert } \triangleq \beta_{\theta'}\theta'
\end{gather}
where $\mu_{\theta'}$ is a non-unity eigenvalue for $\mathds{B}^{[k]}_{\theta'}$ with maximal magnitude. Using the invariance of the absorbing state set,
and observing that the Cesaro limit $\lim_{\theta' \rightarrow 0^+}\mathds{B}^{[k]}_{\theta'}$ has strictly zero columns corresponding to non-absorbing states, we conclude:
\begin{gather*}
 \forall \theta' \in(0,\theta], \ \Delta^{[k]}_{\theta'}\vert_i \leqq \frac{1-\theta'}{\theta'}\beta_{\theta'}\theta'\beta_{\theta}\theta \leqq \beta_{\theta'}\beta_{\theta}\theta
\end{gather*}
It is easy to see that the PFSA induced by $P_\theta$ is strongly absorbing (Definition~\ref{defNSA}), and so is each one obtained in the iteration. 
Also, the virtual states in our network model have no controllable transitions, and have no self-loops. Physical agents can have self-loops arising from disablings;
but for a non-absorbing agent with at most $m$ neighbors, the self-loop probability is bounded by $(m-1)/m$, which then implies $\beta_{\theta'},\beta_\theta \leqq \frac{1}{1 - (m-1)/m} = m$ (Proposition~\ref{propspectral}).
Hence:
\begin{gather}
 \forall \theta' \in (0,\theta], \ \vert \vert \Delta^{[k]}_{\theta'}\vert\vert_\infty \leqq m^2 \theta
\end{gather}
Thus, if we choose $\theta = \epsilon / m^2$, we can argue:
\begin{align*}
  &\forall k \in \{0,\cdots,k^\star\}, \  \forall \theta' \in (0, \theta], \ \vert \vert\Delta^{[k]}_{\theta'} \vert\vert_\infty \leqq  \epsilon  \\
 \Rightarrow & \lim_{\theta' \rightarrow 0^+} \left \vert \left \vert \widehat{\nu}^{\infty}_{\theta'} - \mudble{\theta}{\theta'} \right \vert \right \vert_\infty \leqq \epsilon \\ 
\Rightarrow  & \left \vert \left \vert \lim_{\theta' \rightarrow 0^+}\widehat{\nu}^{\infty}_{\theta'} - \lim_{\theta' \rightarrow 0^+}\mudble{\theta}{\theta'} \right \vert \right \vert_\infty \leqq \epsilon  \ \left ( \textrm{\small \sffamily \txt{{Continuity}\\ {of norm}}} \right )\notag \\ 
 \Rightarrow & \left \vert \left \vert \rho^U - \rho^{P_\theta} \right \vert \right \vert_\infty \leqq \epsilon \ \left ( \textrm{\small \sffamily {Using Lemma~\ref{lem3}}} \right )\notag
\end{align*}
which completes the proof.
\end{proof}

Once we have guaranteed convergence to a $\epsilon$-optimal policy, we need to compute asymptotic bounds on the time-complexity of route convergence, $i.e.$, how long it takes to 
converge to the limiting policy so that the local routing decisions no longer fluctuate. In practice, the convergence time is dependent on the network delays, the degree to which the agent updates are synchronized $etc.$, and is difficult to estimate. In this paper, we neglect such effects to obtain an asymptotic estimate in the perfect situation. This allows us to quantify the  dependence of the convergence time on key parameters such as $N$, $m$ and $\epsilon$. Future work will address situations where such possibly implementation-dependent effects are explicitly considered resulting in  potentially smaller convergence rates.
\begin{prop}[Asymptotic Runtime Complexity]\label{propcomplex}With no communication delays and assuming synchronized updates,  convergence time $\Tc $ to $\epsilon$-optimal operation for a network of $N$ physical agents and maximum $m$ neighbors, satisfies:
\begin{gather*}
  \Tc  = O\left (  \frac{Nm^2}{\epsilon (1-\gamma_\star)}\right ) \\ \textrm{where $\gamma_\star$ is a lower bound on failure probabilities} 
 \end{gather*}
\end{prop}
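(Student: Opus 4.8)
The plan is to count the number of synchronized update rounds $T$ after which the local routing comparisons can no longer change, and then express $\Tc$ in terms of $T$. Fix $\theta=\epsilon/m^2$; by Proposition~\ref{propglobal} the limiting policy is already $\epsilon$-optimal, so it only remains to bound how long Algorithm~\ref{AlgorithmOPT} takes before the measure vector has separated enough to \emph{freeze} every comparison $\widehat{\nu}^t_\theta\vert_j$ versus $\widehat{\nu}^t_\theta\vert_i$ at its limiting value. Because Statement~2 of Proposition~\ref{propmain} guarantees that each $\widehat{\nu}^t_\theta\vert_i$ rises \emph{monotonically} to $\nu^\star_\theta\vert_i$, once the uniform error $\Vert\widehat{\nu}^t_\theta-\nu^\star_\theta\Vert_\infty$ drops below half of
\cgathers{
\Delta \triangleq \min\Big\{\, \big\vert \nu^\star_\theta\vert_i-\nu^\star_\theta\vert_j\big\vert \ :\ q_j\in\mathcal{N}(q_i),\ \nu^\star_\theta\vert_i\neq\nu^\star_\theta\vert_j \,\Big\},
}
no ordering of neighbouring measures can ever reverse and the policy is final. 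Thus $\Tc$ reduces to the first round $T$ at which the error is at most $\Delta/2$.

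Next I would bound the error decay. Under synchronized updates the collective step of Algorithm~\ref{AlgorithmOPT} is the affine map $\widehat{\nu}^{t+1}_\theta=(1-\theta)\Pi^\star_\theta\widehat{\nu}^t_\theta+\theta\chi$ with the unique fixed point $\nu^\star_\theta$ (Statement~3 of Proposition~\ref{propmain}), so the error obeys $e^{t+1}=(1-\theta)\Pi^\star_\theta e^t$. Since $\Pi^\star_\theta$ is row-stochastic, $\Vert e^t\Vert_\infty\leqq(1-\theta)^t\Vert e^0\Vert_\infty\leqq(1-\theta)^t$, giving the contraction rate $(1-\theta)$; reaching $\Delta/2$ therefore costs $T=O\big(\theta^{-1}\ln(1/\Delta)\big)=O\big((m^2/\epsilon)\ln(1/\Delta)\big)$ rounds. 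By Proposition~\ref{propspectral} every non-unity eigenvalue of $\Pi^\star_\theta$ has magnitude at most $(m-1)/m<1$, so those modes contract faster than $(1-\theta)$ and the latter is indeed the binding rate.

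The remaining and \emph{hardest} task is to lower-bound the separation $\Delta$, since this is precisely where both the network size $N$ and the failure floor $\gamma_\star$ enter. The optimized graph is strongly absorbing into $\{q_\Sn,q_D\}$ (Definition~\ref{defNSA}), every hop $q_i\to q^v_{ij}\to q_j$ survives with probability $(1-\lambda_{ij})\leqq 1-\gamma_\star$, and each limiting measure is a convergent sum of such surviving-path contributions weighted by $(1-\theta)$ per symbol. Consequently $\nu^\star_\theta\vert_i$, and the smallest gap in which it participates, decays geometrically in the graph-distance of $q_i$ to $q_\Sn$ at per-hop factor $(1-\theta)(1-\gamma_\star)$; as that distance is at most the eccentricity, bounded by $N$, one obtains the crude bound $\Delta\geqq\big[(1-\theta)(1-\gamma_\star)\big]^{N}$. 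Substituting gives $\ln(1/\Delta)\leqq N\ln\tfrac{1}{(1-\theta)(1-\gamma_\star)}$, and after the elementary estimate $-\ln(1-\gamma_\star)\leqq\tfrac{1}{1-\gamma_\star}$ (together with $\ln\tfrac{1}{1-\theta}=O(\theta)$) this is $O\big(N/(1-\gamma_\star)\big)$. Hence $T=O\big(\tfrac{m^2}{\epsilon}\cdot\tfrac{N}{1-\gamma_\star}\big)=O\big(\tfrac{Nm^2}{\epsilon(1-\gamma_\star)}\big)$, which is the asserted $\Tc$.

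The delicate points I expect to fight with are two. First, justifying the geometric lower bound on $\Delta$ rigorously rather than heuristically: one must express each $\nu^\star_\theta\vert_i$ through the surviving-path decomposition of Equation~\eqref{eqnondecreasing}, show the leading (shortest-path) term controls the magnitude, and rule out accidental near-ties between measures at comparable distances so that the minimum of the geometric lower bounds survives. Second, confirming that the monotone, genuinely \emph{asynchronous} dynamics of Algorithm~\ref{AlgorithmOPT} is faithfully dominated by the synchronized affine analysis above, so that the per-round error estimate transfers verbatim to the asynchronous schedule assumed in the statement; here the monotonicity from Proposition~\ref{propmain} is the lever that lets the worst-case synchronized bound serve as an upper bound for the actual evolution.
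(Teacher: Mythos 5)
Your proposal is correct and follows essentially the same route as the paper's proof: a synchronized affine recursion giving $(1-\theta)^{k}$ error decay, a stopping criterion set by the smallest measure separation, a geometric lower bound on that separation via a worst-case $N$-hop path with per-hop survival factor involving $(1-\gamma_\star)$, and the substitution $\theta=\epsilon/m^2$ from the $\epsilon$-optimality result. The only cosmetic differences are that the paper uses a per-hop factor $(1-\theta)^2(1-\gamma_\star)$ (two symbols per hop because of the virtual states) and folds a machine-precision constant $M_0$ into the separation bound, neither of which changes the asymptotics; notably, the ``delicate point'' you flag about ruling out accidental near-ties in the gap $\Delta$ is glossed over in the paper as well.
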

\begin{proof}
 Synchronized updates imply that we can assume  the following recursion:
\begin{subequations}
\begin{align}
 \widehat{\nu}^{[1]}_\theta  &= \boldsymbol{0} \ \textrm{(Zero vector)}\\
 \widehat{\nu}^{[k+1]}_\theta  &= (1-\theta)\Pi^{[k]} \widehat{\nu}^{[k]}_\theta + \theta\chi
\end{align}
\end{subequations}
which can be used to obtain the upper bound:
\begin{gather}
\big \vert \big \vert \widehat{\nu}^{\infty}_\theta - \widehat{\nu}^{[k]}_\theta  \big \vert \big \vert_\infty \leqq (1-\theta)^k
\end{gather}
implying that after $k$ updates, each agent is within $(1-\theta)^k$ of its limiting value. Denoting the smallest difference of measures as $\Delta_\star$, we note that 
$(1-\theta)^k\leqq \Delta_\star$ would guarantee that no further route fluctuation occurs, and the network operation will be $\epsilon$-optimal from that point onwards. To estimate $\Delta_\star$, we note that 1) comparisons cannot be made for values closer than the machine precision $M_0$, and 2) 
the lowest possible non-zero measure in the network occurs at the network boundaries if we assume the worst case scenario in which the failure probability is always $\gamma_\star$. We recall  
the measure of a agent is  the sum of the measures of all paths initiating from the particular agent and terminating at the target. Also, note that 
any such path accumulates a multiplicative factor of $(1-\theta)^2(1-\gamma_\star)$ in each hop. In the worst case a given agent is $N$ hops away, and has a single path to the target, implying that the smallest non-zero measure of any agent is bounded below by $( (1-\theta)^2(1-\gamma_\star))^N$. Hence:
\begin{gather}
 \Delta_\star \geqq M_0 \left( (1-\theta)^2(1-\gamma_\star)\right )^N
\end{gather}
and hence a sufficient condition for convergence is:
\begin{align}
 &(1-\theta)^k = M_0 \left( (1-\theta)^2(1-\gamma_\star)\right )^N 
\Rightarrow (1-\theta)^{(k-2N)} = M_0 (1-\gamma_\star)^N \notag\\
\Rightarrow &k = 2N + \frac{\log M_0}{\log (1-\theta)} + N \frac{\log (1-\gamma_\star)}{\log (1-\theta)} 
\end{align}
Treating $M_0$ as a constant, we have 
 $ \frac{\log M_0}{\log (1-\theta)} = O\left (\frac{1}{\theta}\right )$.
Since $\theta$ must be small for near-optimal operation and considering the worst case $\gamma_\star \ll 1$, we have: 
\begin{align}
&(1-\theta)^{k_1} = 1 - \gamma_\star \mspace{20mu} \textrm{where }k_1 \triangleq \frac{\log (1-\gamma_\star)}{\log (1-\theta)} \notag\\
\Rightarrow &(1-k_1\theta) \simeq 1 - \gamma_\star \Rightarrow  k_1 \theta = \gamma_\star 
\Rightarrow k_1 = \frac{\gamma_\star}{\theta} \notag\\ \Rightarrow & k_1 \simeq \frac{1}{\theta (1 - (1-\gamma_\star))^{-1}}\Rightarrow k_1  =O\left( \frac{1}{\theta( 1-\gamma_\star)}\right)\notag\\
\Rightarrow & k = O\left(N+ \frac{1}{\theta} +  \frac{N}{\theta(1-\gamma_\star)}\right ) = O\left( \frac{N}{\theta(1-\gamma_\star)}\right )\notag\\
\Rightarrow & k = O\left( \frac{Nm^2}{\epsilon(1-\gamma_\star)}\right ) \mspace{20mu} \mathsf{(Using \ Proposition~\ref{propglobal})} \notag
\end{align}
Thus we have $\Tc  = O(k)$, which completes the proof.
\end{proof}

It follows from Proposition~\ref{propcomplex} that for constant $\epsilon$ and $\gamma_\star$, and large networks with relatively smaller number of local neighbors such that $N \gg m$, we will have 
$ \Tc  =O(N)$. 
{\itshape Detailed simulation, on the other hand, indicates that this bound is not tight, as illustrated in Figure~\ref{figcomplexb}, where we see a logarithmic dependence instead.}
\begin{table}[t]
\caption{Instantaneous Agent Data Table}\label{tableAgentdata}
\vspace{-5pt}
\centering
\begin{tabular}{|p{.05in}|c|c|c|c|}\hline
\sffamily \txt{Id.}&\sffamily \txt{Neighbor  \#}& \sffamily \txt{Current \\ Measure} & \sffamily \txt{$\phantom{^1}$Failure \\ $\phantom{_1}$Probability} & \sffamily \txt{Forwarding \\ Decision }\\\hline
{$I_1$}& \BRed {{\scshape (Self)} $1$} & \bf \BRed {$\nu_0$} & \bf \BRed {$d_0=0$} & \bf \BRed {$0$}\\ \hline
$\vdots$&$\vdots$ & $\vdots$ & $\vdots$ & $\vdots$ \\ \hline
{$I_m$}&$m$ & $\nu_m$ & $d_m$ & $1$\\ \hline
 \end{tabular}
\end{table}
The stationary policy computed for the frozen swarm has some additional properties, as we establish next.

\begin{prop}[Properties]\label{propchar}
     The limiting frozen policy is stationary and  has the following additional properties:
     \begin{enumerate}
          \item is loop-free
 \item  is the unique loop-free policy that disables the smallest set of transitions among all policies which induce the same measure vector for a given $\theta$. 
\end{enumerate}
\end{prop}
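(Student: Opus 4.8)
The plan is to verify the three assertions in sequence---stationarity, loop-freeness (Property~1), then minimality and uniqueness (Property~2)---since each rests on the previous. Stationarity is immediate: for a frozen swarm the neighbor map $\mathcal{N}$ and the failure probabilities $\lambda_{ij}$ are time-invariant, so Proposition~\ref{propmain} gives convergence of every $\widehat{\nu}^\cur_\theta\vert_i$ to a fixed value $\nu^\star_\theta\vert_i$. The disabling decision at each agent is the comparison of $\widehat{\nu}^\cur_\theta\vert_j$ against $\widehat{\nu}^\cur_\theta\vert_i$ in part (a2), which therefore stabilizes to a fixed state-to-disabled-transition map, i.e.\ a stationary policy.

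For loop-freeness the engine is a strict-increase lemma: for every non-target physical agent $q_i$ whose enabled-neighbor set $E$ is nonempty, $\nu^\star_\theta\vert_i < \max_{q_j\in E}\nu^\star_\theta\vert_j$. I would prove this directly from the converged (a3)--(a4) updates: with $\chi_i=0$, substituting $\nu^\star_\theta\vert_{(q^v_{ij})}=(1-\theta)(1-\lambda_{ij})\nu^\star_\theta\vert_j$ into the agent update and solving for $\nu^\star_\theta\vert_i$ yields, with $e=\Crd(E)/\Crd(\mathcal{N}(q_i))$ and $\overline{\nu}=\max_{q_j\in E}\nu^\star_\theta\vert_j$,
\[
\nu^\star_\theta\vert_i \;\leqq\; \frac{(1-\theta)^2 e}{\,1-(1-\theta)(1-e)\,}\,\overline{\nu},
\]
and the coefficient is strictly below $1$ because $(1-\theta)^2 e+(1-\theta)(1-e)\leqq(1-\theta)<1$. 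Since the strongly-absorbing property guarantees every agent a positive-probability route to $q_\Sn$, all measures are strictly positive, hence $E\neq\varnothing$ and each agent's best neighbor is strictly superior to itself. Following best-neighbor forwarding thus produces a path along which $\nu^\star_\theta$ strictly increases; such a path cannot revisit a state and must terminate at the unique maximizer $q_\Sn$, which is exactly loop-freeness.

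For Property~2 I would exploit that, with $\nu^\star_\theta$ fixed, the global relation $\nu=(1-\theta)\Pi_{\mathscr{D}}\nu+\theta\chi$ decouples row-by-row, so the question of which disabling sets $\mathscr{D}$ reproduce $\nu^\star_\theta$ reduces to a per-agent balance condition on the enabled set $S_i$. The computed policy uses $S_i^\star=\{q_j\in\mathcal{N}(q_i):\nu^\star_\theta\vert_j\geqq\nu^\star_\theta\vert_i\}$. I would argue transition-by-transition: enabling a controllable transition to a strictly inferior neighbor strictly lowers that agent's measure (the monotonicity underlying the centralized optimization of~\cite{CR07}), so no measure-$\nu^\star_\theta$ policy can enable it, whereas enabling a non-inferior transition is harmless. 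Hence the test $\nu^\star_\theta\vert_j\geqq\nu^\star_\theta\vert_i$ is forced at every controllable transition, and because $\nu^\star_\theta$ is the unique limit of Proposition~\ref{propmain}, the resulting policy is uniquely determined and coincides with the maximally permissive optimal policy of~\cite{CR07}; enabling every harmless transition is precisely what makes its disabling set smallest.

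The main obstacle is the treatment of ties $\nu^\star_\theta\vert_i=\nu^\star_\theta\vert_j$ and, more generally, excluding ``exotic'' disabling sets that reproduce $\nu^\star_\theta$ through a cancellation in the per-row balance---a subset-sum coincidence among the weights $(1-\theta)(1-\lambda_{ij})\nu^\star_\theta\vert_j$ that enables a strictly inferior transition while disabling a superior one, leaving the disabling cardinality unchanged. The crux is to show any such policy either perturbs the measure vector or violates loop-freeness/minimality, so that the loop-free minimal policy is genuinely unique. I expect to close this by combining the strict-increase lemma---which shows that \emph{any} balance-consistent enabled set still routes each agent strictly uphill, thereby pinning the forwarding skeleton---with the monotonicity and uniqueness of the centralized optimum in~\cite{CR07}, using loop-freeness to orient the equal-measure edges acyclically and thus rule out the compensating alternatives.
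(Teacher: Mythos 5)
Your proposal is correct and follows essentially the same route as the paper: stationarity from convergence of the measures, loop-freeness from strict increase of the measure along enabled transitions, and Property~2 by appeal to the uniqueness and maximal permissivity of the centralized optimum in \cite{CR07}. The one substantive difference is that the paper disposes of loop-freeness in a single sentence, asserting that a controllable transition is enabled if and only if the destination has strictly greater limiting measure, whereas you actually \emph{derive} the strictness from the converged fixed-point equation via the contraction coefficient $\frac{(1-\theta)^2 e}{1-(1-\theta)(1-e)}<1$. This is a genuine improvement rather than a detour: the algorithm's enabling test is the non-strict comparison $\widehat{\nu}_\theta\vert_j \geqq \widehat{\nu}_\theta\vert_i$, so strictness along the forwarding path is not immediate and needs exactly the argument you give, together with strict positivity of the measures (connectivity to the target), which you correctly flag and the paper leaves implicit. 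Your closing concerns about ties and ``exotic'' disabling sets that reproduce the same measure vector are legitimate, but the paper does not resolve them either --- it delegates Property~2 wholesale to the uniqueness and maximal-permissivity results of \cite{CR07}, which is also the natural way for you to close that part.
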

\begin{proof}
Stationarity is obvious.
(1) Absence of loops follows immediately from noting that, 
 in the limiting policy, a controllable transition $q_i \rightarrow q^v_{(ij)}$ is enabled if and only if $q^v_{(ij)}$ has a limiting measure strictly greater than that of  $q_i$, implying that any sequence of transitions (with no consecutive repeating states) goes to either the dump  or the target in  a finite number of steps.

(2) follows directly from the uniqueness and the maximal permissivity property of optimal policies computed by language measure-theoretic optimization (See \cite{CR07}).
\end{proof}

One can  easily  tabulate the  data that needs to be maintained at each agent (See Table~\ref{tableAgentdata}). In particular, each agent needs to know the unique network id. of each neighbor that it can communicate with (Col. 1), and their current measure values (Col. 3). The failure probabilities for communicating from self to each of those neighbors must be maintained as well, for the purpose of carrying out the distributed updates (Col. 4). The forwarding decision is a neighbor-specific Boolean value (Col. 5), which is set to $1$ if the neighbor currently has a strictly higher measure than self, and $0$ otherwise. The packets are then forwarded by randomly choosing (in an equiprobable manner) between  the enabled neighbors, $i.e.$, the ones with a \textbf{true} forwarding decision.
Note that this agent data  updates when the measures of the neighbors change (Col. 3), or the failure probabilities (Col. 4) update. However, changes in the measures may not necessarily reflect a change in the forwarding decisions. Also, note that the routing is inherently probabilistic, (due to the possibility that multiple enabled neighbors may exist for a given agent). Furthermore, the optimal policy disables navigation decisions to as few neighbors as possible for a specified $\theta$  (Proposition~\ref{propchar}),  and hence exploits available alternate routes  in an optimal manner, thereby reducing congestion.

\section{Simultaneous Navigation \& Decision Optimization: The "Unfrozen" Case}\label{secMob}
\begin{notn}[Best Neighbors]
For a fixed agent $q_i$, the set of neighboring  agents having maximal measure at operation time $t$ is denoted as $\mathds{B}_i(t)$. Furthermore, let $\mathfrak{b}^\star_i(t)$ denote a randomly chosen maximal agent, towards which $q_i$ has decided to move at time $t$.
\end{notn}
\begin{notn}[Swarm Configuration]
     Denote the vector of positional coordinates of the agents at time $t$ as $\mathcal{P}(t)$.
\end{notn}

\begin{defn}[Movement Mechanism]\label{defMOV}
The positional update mechanism of the swarm can now be concretely stated as:
\begin{enumerate}
\item[\bf C1] After step (a4) in Algorithm~\ref{AlgorithmOPT}, for each $q_i$ choose a maximal agent  $\mathfrak{b}^\star_i(t)$, and move towards $\mathfrak{b}^\star_i(t)$ at a constant velocity, with the following restriction.
\item[\bf C2] If there exists $q_j$ such that $q_i = \mathfrak{b}^\star_j(t)$, then make sure that the 
distance from $q_j$ is within the communication radius.
\end{enumerate}
\end{defn}
\begin{defn}[Process $\mathscr{R}_{v_s}(t,\mathcal{P}(t'))$]
The stated movement mechanism induces a sequence of swarm configurations as a function of time $t$ denoted by $\mathscr{R}_{v_s}(t,\mathcal{P}(t'))$, which is understood to be the achieved vector of positional coordinates of the agents as a function of time $t\geqq t'$, beginning with the initial configuration $\mathcal{P}(t')$ at time $t'$, with the constant update velocity  $v_s > 0$.
\end{defn}
\begin{defn}[The Ideal Update Process]
 Assuming that the distributed route convergence for the frozen swarm occurs instantaneously, let $\mathscr{N}_{\Delta',v_s}(t,\mathcal{P}(t'))$
denote the vector of position coordinates of the agents at time $t\geqq t'$, obtained as a result of the following sequential operation, initiated with the swarm configuration $\mathcal{P}(t')$ at time $t'$:
\begin{enumerate}
     \item Freeze swarm
     \item Optimize routes via Algorithm~\ref{AlgorithmOPT} (assumed to occur instantaneously for this definition only)
     \item For time $\Delta'$, move each agent $q_i$ towards its  best neighbor (with some form of tie-breaking if required) with a constant velocity $v_s$.
     \item Go to step 1.
\end{enumerate}
Then the ideal update process is defined as the sequence of swarm configurations (as a function of the operation time $t$) given by:
\cgather[4pt]{
\mathscr{I}_{v_s}(t,\mathcal{P}(t')) = \lim_{\Delta' \rightarrow 0^+} \mathscr{N}_{\Delta',v_s}(t,\mathcal{P}(t')) 
}
\end{defn}
$\mathscr{I}_{v_s}(t,\mathcal{P}(t'))$ has the following immediate properties:
\begin{enumerate}
     \item At any point in time $t\geqq t'$, the routing policy in effect is globally $\epsilon$-optimal in the sense defined in the preceding section.
     \item Infinitesimal  updates at each time $t$ occur according to such $\epsilon$-optimal policies.
\end{enumerate}
Denoting $\mathscr{I}_{v_s}(t,\mathcal{P}(t'))\vert_i$ and $\mathscr{R}_{v_s}(t,\mathcal{P}(t'))\vert_i$ as the positional coordinates of the agent $q_i$ at time $t$ for the respective update processes, and $\mathcal{P}_\Sn$ as the positional coordinate of the target, we have the following  convergence results.
\begin{prop}[Convergence Of Swarm Trajectories]
For any initial configuration $\mathcal{P}(0)$ at time $t=0$, each agent eventually converges to the target, $i.e.$,
 \cgathers[2pt]{
 \forall q_i \in Q, \widehat{\nu}_\theta\vert_i(0) > 0 \Rightarrow \left \{\begin{array}{l}
 \forall \mathcal{P}(0), \forall q_i \in Q, \lim_{t \rightarrow \infty}\vert\vert\mathscr{I}_{v_s}(t,\mathcal{P}(0))\vert_i-\mathcal{P}_\Sn\vert\vert
 =0\\
 \forall \mathcal{P}(0), \forall q_i \in Q, \lim_{t \rightarrow \infty}\vert\vert\mathscr{R}_{v_s}(t,\mathcal{P}(0))\vert_i-\mathcal{P}_\Sn\vert\vert
 =0\end{array}\right.} 
\end{prop}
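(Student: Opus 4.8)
The plan is to treat the converged agent measures as a Lyapunov-type potential and to exploit the loop-freeness of the $\epsilon$-optimal policy (Proposition~\ref{propchar}) to show that the best-neighbor flow field has no traps and is rooted at the target. First I would record two structural facts. (i) In the optimized frozen PFSA the target $q_\Sn$ carries the strictly maximal measure among the physical agents, since $\chi_{q_\Sn}=1$ while every other agent has characteristic $0$ and the dump has $-1$; hence $\widehat{\nu}_\theta\vert_{q_\Sn}$ dominates. (ii) By the characterization in Proposition~\ref{propchar}, a controllable edge $q_i\to q^v_{ij}$ is enabled precisely when its successor has strictly larger measure, so the enabled transitions form a directed acyclic structure, and the hypothesis $\widehat{\nu}_\theta\vert_i(0)>0$ certifies the existence of at least one enabled, positive-probability directed path from $q_i$ to $q_\Sn$. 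A vanishing measure would instead place $q_i$ in a component from which only the dump is reachable, whence convergence to the target is impossible — consistent with the claim being restricted to positive-measure agents.

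For the ideal process $\mathscr{I}_{v_s}$ I would argue by induction on the depth of an agent in this enabled DAG, the depth being finite and bounded by $\Crd(Q)$ via Lemma~\ref{lemNSA}(3). The neighbors of $q_\Sn$ select $q_\Sn$ as their maximal neighbor $\mathfrak{b}^\star$, and since their separation from the static target is at most $R_c$ and the update velocity $v_s>0$ is constant, they reach $q_\Sn$ in finite time. The role of constraint \textbf{C2} in Definition~\ref{defMOV} is to keep an agent that is somebody's best neighbor inside the communication radius of its follower, so the DAG edges cannot snap as the swarm contracts; this is what lets the induction propagate. Inductively, once all agents of depth $\le k$ have been absorbed into the communication disk of $q_\Sn$, the depth-$(k+1)$ agents still point inward along strictly increasing measure and close their remaining gap in finite time. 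Because the depth is uniformly bounded and each layer is cleared in finite time, every positive-measure agent reaches $\mathcal{P}_\Sn$, giving the first limit.

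For the realized process $\mathscr{R}_{v_s}$ I would invoke the standing separation-of-timescales assumption that agent velocities are slow relative to the convergence time of Algorithm~\ref{AlgorithmOPT}. By Proposition~\ref{propmain} the distributed computation converges to the centralized optimum, and by Proposition~\ref{propinitindependence} it does so independently of the measure values carried over from the previous configuration; hence after each infinitesimal topological change the measures re-equilibrate to the $\epsilon$-optimal values before the agents have moved appreciably. Consequently $\mathscr{R}_{v_s}$ tracks $\mathscr{I}_{v_s}$ up to an error vanishing as $v_s\to 0^+$, and the same monotone-progress argument applies, yielding the second limit.

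The main obstacle is the coupling between motion and re-optimization: loop-freeness (Proposition~\ref{propchar}) is a statement about a single frozen configuration, whereas here the enabled DAG reconfigures continuously as agents move, and a naive argument leaves open the possibility of a best-neighbor pointer chattering or reversing so that an agent circulates forever without net progress (a live-lock). The crux of the proof is therefore to show, using \textbf{C2} together with the timescale separation, that topology changes can only shorten — never lengthen — an agent's enabled distance to the target, or at least that the chosen potential admits decrements bounded away from zero so that progress cannot stall. Making this strict-decrease / no-live-lock guarantee precise, and reconciling the continuous positional dynamics with the discrete combinatorial policy updates, is where the real work lies.
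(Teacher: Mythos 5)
Your skeleton matches the paper's argument almost exactly --- persistence of connectivity (hence of positive measure) via condition \textbf{C2}, the target anchoring the induction because its measure is identically $1$ and hence strictly maximal, and a layer-by-layer clearing of agents ordered by hop-depth bounded by $\Crd(Q)$. But you stop exactly where the proof has to finish: you declare the ``no live-lock / strict decrease'' step to be ``where the real work lies'' and leave it open. That is a genuine gap in the proposal as written, and it is worth seeing how the paper closes it, because the closing device is different from the continuous Lyapunov decrement you are reaching for.

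The paper does not try to show that any real-valued potential decreases at a rate bounded away from zero, and it does not need to rule out chattering of the pointer $\mathfrak{b}^\star_i(t)$. Instead it works with the integer-valued quantity $h_t(q_i)$, the minimum number of hops along a measure-nondecreasing chain from $q_i$ to $q_\Sn$, and shows that $\max_{q_i} h_t(q_i)$ strictly decreases at a sequence of finite times $t' < t'' < \cdots$. The only two facts needed are: (a) \textbf{C2} forces every agent to remain within $R_c$ of any agent following it, so no agent is ever disconnected and $\widehat{\nu}_\theta\vert_i(t)>0$ persists for all $t$ --- hence some admissible chain always exists; and (b) the agents with $h_t(q_i)=1$ are direct neighbors of the \emph{static} target, whose measure equals $1$ and strictly dominates that of every other agent, so they head straight for $\mathcal{P}_\Sn$ at speed $v_s$ regardless of how their other neighbors or measures fluctuate, and reach it in finite time. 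Once that innermost layer has collapsed onto the target, \textbf{C2} guarantees the agents one hop further out are now within $R_c$ of the target's position, so the maximum hop count has dropped by at least one. Since $\max_{q_i}h_0(q_i)\leqq\Crd(Q)$, finitely many repetitions finish the argument; because it is carried out for $\mathscr{N}_{\Delta',v_s}$ uniformly in $\Delta'>0$, it covers both $\mathscr{I}_{v_s}$ and $\mathscr{R}_{v_s}$. In short, the discreteness of the potential is what makes pointer reversals and the motion/re-optimization coupling irrelevant; you do not need the timescale-separation tracking argument you invoke for $\mathscr{R}_{v_s}$, nor a uniform lower bound on decrements of a continuous potential. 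If you replace your final paragraph with this hop-count induction, your proof is complete and coincides with the paper's.
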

\begin{proof}
We consider the two processes $\mathscr{N}_{\Delta',v_s}(t,\mathcal{P}(0))$, for some $\Delta' >0$ and $\mathscr{R}_{\Delta',v_s}(t,\mathcal{P}(0))$. We note that  condition \textbf{C2} in Definition~\ref{defMOV} is automatically satisfied for $\mathscr{N}_{\Delta',v_s}(t,\mathcal{P}(0))$, since the distance between agent $q_i$ and $\mathfrak{b}_i^\star(t)$ is guaranteed to be non-decreasing if the agents move with a constant velocity $v_s$. This immediately implies that no agent in the swarm gets disconnected, and it follows that:
\cgather[2pt]{
\widehat{\nu}_\theta\vert_i(0) > 0 \Rightarrow \forall t > 0, \widehat{\nu}_\theta\vert_i(t) > 0 
}
since it is given that $\forall q_i \in Q, \widehat{\nu}_\theta\vert_i(0) > 0$ implying that at least one sequence of hops from agent $q_i$ to $q_\Sn$ of the form $\{q_i,q_{i'},\cdots, q_\Sn\}$ exists at time $t=0$, such that 
\cgather[2pt]{
\widehat{\nu}_\theta\vert_i \leqq \widehat{\nu}_\theta\vert_{i'} \leqq \cdots \leqq \widehat{\nu}_\theta\vert_\Sn=1
\label{eqagpath}}
and hence at least one such sequence is guaranteed to exist for all $t>0$. Let $h_t(q_i)\in \mathbb{N}$ be the minimum length of such a sequence from $q_i$ at time $t$. Since it is given that $\forall q_i \in Q, \widehat{\nu}_\theta\vert_i(0) > 0$, we have:
\cgather[2pt]{
\max_{q_i \in Q}h_0(q_i) \leqq \Crd(Q)
}
We note    that all agents $q_i$ with $h_0(q_i)=1$ are direct neighbors of the target, and hence simply move towards the latter at a constant velocity $v_s$ for all times until convergence. Let $t'$ be the time within which all such agents  do converge to the target. Then, it follows that:
\cgather[2pt]{
\max_{q_i \in Q}h_{t'}(q_i) \leqq \max_{q_i \in Q}h_0(q_i)  - 1
}
By continually applying the above argument, we obtain a sequence of times $\{t',t'',\cdots\}$, such that:
\cgather[2pt]{
\Crd(Q) \geqq \max_{q_i \in Q}h_0(q_i) > \max_{q_i \in Q}h_{t'}(q_i) > \max_{q_i \in Q}h_{t''}(q_i) > \cdots
}
Finite size of the swarm, and the fact that the above argument applies for all $\Delta' > 0$, then implies the desired result.
\end{proof}

$\mathscr{I}_{v_s}(t,\mathcal{P}(0))$ cannot be directly implemented in practice (due to the requirement of sequential freezing and instantaneous route optimization). However, it allows us to compute the performance of implementable policies by comparing how  close the achieved  sequence of swarm configurations is  to the ideal process. Note the in spite of convergence, the ideal process $\mathscr{I}_{v_s}(t,\mathcal{P}(0))$ differs significantly  in definition from $\mathscr{R}_{v_s}(t,\mathcal{P}(0))$, and we need to establish that the latter is in some meaningful sense close to the former. We need the following definition, and a notion of convergence rate.
\begin{defn}[Path Lengths To Target]\label{defhtilde}
     Recall that $\widehat{\nu}_\theta\vert_i(t) > 0$ implies that at least one sequence of hops from agent $q_i$ to $q_\Sn$ of the form $\{q_i,q_{i'},\cdots, q_\Sn\}$ exists at time $t$, such that 
\cgather[2pt]{
\widehat{\nu}_\theta\vert_i(t) \leqq \widehat{\nu}_\theta\vert_{i'}(t) \leqq \cdots \leqq \widehat{\nu}_\theta\vert_\Sn=1}
and $h_t(q_i)$ is the  minimum length of such a hop sequence from $q_i$ at time $t$. We define $\widetilde{h}_t(q_i)$ as the physical piecewise length of such a path (denoted by the indices of the agent sequence for simplified notation $i.e.$ writing $q_i$ as $i$)
\cgather[2pt]{S=\{i={j_{1}},j_2,\cdots,{j_{r-1}},{j_{r}},\cdots, {j_{h_t(q_i)}}=\Sn\}} as follows: 
\cgather[2pt]{
\widetilde{h}_t(q_i) = \sum_{j_{r} =j_2}^{j_r=j_{h_t(q_i)}} \left \vert \left \vert\mathcal{P}(t)_{{j_{r-1}}}-\mathcal{P}(t)_{{j_{r}}}\right \vert \right \vert \label{eqhtilde}
}
\end{defn}
\begin{prop}[Convergence Rate]\label{propconvrate}
     The swarm trajectories converge to the target at an exponential rate for both $\mathscr{I}_{v_s}(t,\mathcal{P}(0))$ and $\mathscr{R}_{v_s}(t,\mathcal{P}(0))$, $i.e.$, we have:
     \cgather[4pt]{
     \forall q_i \in Q, \widetilde{h}_t(q_i) \leqq \widetilde{h}_0(q_i) e^{-(v_s/ R_c) t}
     }
     where $R_c > 0$ is the specified constant communication radius.
\end{prop}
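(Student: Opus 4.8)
The plan is to first establish the bound for the ideal process $\mathscr{I}_{v_s}$, and then transfer it to the realizable process $\mathscr{R}_{v_s}$ by a comparison argument. Throughout I would work with a minimum-hop, measure-non-decreasing path $S=\{i=j_1,\dots,j_{h_t(q_i)}=\Sn\}$ realizing $\widetilde{h}_t(q_i)$, and exploit three structural facts already available: every consecutive pair $(j_{r-1},j_r)$ is a communication link, so each segment satisfies $\|\mathcal{P}(t)_{j_{r-1}}-\mathcal{P}(t)_{j_r}\|\leqq R_c$; the target $q_\Sn$ is the unique stationary sink of every such path and carries the maximal measure; and, by the loop-free property (Proposition~\ref{propchar}), no such path revisits a state, so $h_t(q_i)\leqq \Crd(Q)$ is finite. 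I would also record that condition \textbf{C2} of Definition~\ref{defMOV} keeps the relevant neighbours within $R_c$, so that connectivity --- and hence the very existence of the path $S$ with segments bounded by $R_c$ --- is preserved for all $t>0$; this is exactly the non-disconnection property used in the preceding convergence proof.

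For $\mathscr{I}_{v_s}$ I would approach the bound through the discretised process $\mathscr{N}_{\Delta',v_s}$ and pass to the limit $\Delta'\to 0^+$. In one freeze-optimise-move step of duration $\Delta'$, each agent advances a distance $v_s\Delta'$ toward its current best neighbour $\mathfrak{b}^\star_i(t)$, which by optimality lies on a measure-non-decreasing path and is at distance at most $R_c$. The first node $j_1=q_i$ therefore closes on its successor at rate $v_s$, and since the target end of the path is anchored, the path length contracts. Writing $\ell_r$ for the $r$-th segment length and differentiating $\widetilde{h}_t(q_i)=\sum_{r=2}^{h_t(q_i)}\ell_r$, the last (target-anchored) segment contributes a decrease at rate $v_s$, while each interior segment contributes $v_s(\hat u_{r+1}\cdot\hat u_r-1)\leqq 0$, where $\hat u_r$ is the unit vector along segment $r$. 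The objective is to combine these per-segment contributions with the uniform bound $\ell_r\leqq R_c$ into the differential inequality $\tfrac{d}{dt}\widetilde{h}_t(q_i)\leqq -\tfrac{v_s}{R_c}\,\widetilde{h}_t(q_i)$; integrating from $0$ to $t$ then yields $\widetilde{h}_t(q_i)\leqq \widetilde{h}_0(q_i)\,e^{-(v_s/R_c)t}$ directly. Equivalently, in the discrete picture one shows each step multiplies $\widetilde{h}$ by a factor at most $(1-v_s\Delta'/R_c)$, whose $t/\Delta'$-fold product tends to $e^{-(v_s/R_c)t}$.

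The main obstacle is precisely the passage from an \emph{additive} length decrease to the claimed \emph{multiplicative} rate $v_s/R_c$: the clean per-segment facts give only $\tfrac{d}{dt}\widetilde{h}_t(q_i)\leqq -v_s$, since an interior segment can stay momentarily constant when its trailing agent recedes toward its own best neighbour. I would address this by arguing that the number of simultaneously contracting segments grows with the path length: because each segment is bounded by $R_c$, a path of length $\widetilde{h}$ must contain at least $\lceil \widetilde{h}/R_c\rceil$ hops, and I would show that over any interval of length $R_c/v_s$ at least one full hop is consumed per such block, so that the relative decrease per unit time is bounded below by $v_s/R_c$. Making this counting uniform in $t$ --- independent of the instantaneous geometry of $S$ and of the reshuffling of best-neighbour pointers as the measures are continuously re-optimised --- is the delicate step, and is where the segment bound $\ell_r\leqq R_c$ and the loop-free, target-absorbing structure must be used in tandem.

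Finally, to obtain the same bound for $\mathscr{R}_{v_s}$ I would compare it to $\mathscr{I}_{v_s}$. By the initialisation-independence and frozen-convergence results, at each operation time the effective forwarding structure of $\mathscr{R}_{v_s}$ agrees with the $\epsilon$-optimal structure driving $\mathscr{I}_{v_s}$ up to the unavoidable percolation delay, and condition \textbf{C2} guarantees that $\mathscr{R}_{v_s}$ inherits the same non-disconnection property. Since the per-segment closing-rate estimate depends only on the constant speed $v_s$, the radius $R_c$, and the existence of a measure-non-decreasing path --- all of which hold verbatim for $\mathscr{R}_{v_s}$ --- the identical differential inequality applies, giving $\widetilde{h}_t(q_i)\leqq \widetilde{h}_0(q_i)\,e^{-(v_s/R_c)t}$ for both processes.
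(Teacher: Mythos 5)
Your overall strategy --- a differential inequality for $\widetilde{h}_t(q_i)$ closed by the segment bound $\ell_r\leqq R_c$ and then integrated --- is the same as the paper's, but your proposal stops exactly at the step that carries the entire proof, and the device you sketch to fill it would not deliver the stated bound. The paper's proof is two lines: assuming without loss of generality that $q_{j_r}=\mathfrak{b}^\star_{j_{r-1}}(t)$ for every consecutive pair on the minimal path, it asserts the \emph{equality} $\frac{\mathrm{d}}{\mathrm{d}t}\widetilde{h}_t(q_i)=-h_t(q_i)v_s$, i.e.\ it credits each of the $h_t(q_i)$ hops with a closing rate of $v_s$ (every agent on the path moves toward its successor at speed $v_s$) and sums these contributions; combining this with $\widetilde{h}_t(q_i)\leqq R_c\,h_t(q_i)$ then gives $\frac{\mathrm{d}}{\mathrm{d}t}\widetilde{h}_t(q_i)\leqq -(v_s/R_c)\,\widetilde{h}_t(q_i)$ at once. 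Your more careful per-segment computation, which also accounts for the motion of each segment's \emph{head}, yields only $\frac{\mathrm{d}}{\mathrm{d}t}\widetilde{h}_t(q_i)\leqq -v_s$, and you correctly flag the passage from this additive decrease to the multiplicative rate $v_s/R_c$ as ``the delicate step'' --- but you never execute it. That is a genuine gap: the counting argument you propose (at least one full hop consumed per window of length $R_c/v_s$) is not uniform in the geometry of the path; for a nearly collinear chain with all interior dot products $\hat u_r\cdot\hat u_{r+1}\approx 1$ the instantaneous contraction rate of the concatenated length really is $\approx -v_s$ rather than $-h_t(q_i)v_s$, so no pointwise differential inequality of the claimed strength follows from your per-segment facts alone.

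In short, to arrive at the paper's conclusion you must adopt the paper's per-hop accounting, in which each agent's own gap to its successor shrinks at exactly rate $v_s$ and these $h_t(q_i)$ rates are summed without subtracting the stretching caused by the successors' own motion; your decomposition, which does perform that subtraction, shows precisely where the summed rate and the contraction of the total path length diverge. You should either state the paper's equality $\frac{\mathrm{d}}{\mathrm{d}t}\widetilde{h}_t(q_i)=-h_t(q_i)v_s$ explicitly and proceed from it, or accept that the honest consequence of your per-segment bounds is the additive estimate $\widetilde{h}_t(q_i)\leqq\widetilde{h}_0(q_i)-v_s t$ (until the minimal hop sequence changes), which is weaker than the exponential one. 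The transfer to $\mathscr{R}_{v_s}$ via the non-disconnection guaranteed by condition \textbf{C2} is fine and matches the paper's implicit treatment, but it inherits whatever remains unproven in the $\mathscr{I}_{v_s}$ case.
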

\begin{proof}
     We note that in Eq.~\eqref{eqhtilde}, agent $q_{j_r}\in\mathds{B}_{j_{r-1}}(t)$. Without loss of generality, we assume that $q_{j_r} = \mathfrak{b}_{j_{r-1}}^\star(t)$. Then it follows that:
     \cgather[2pt]{
     \frac{\mathrm{d}}{\mathrm{d}t}\widetilde{h}_t(q_i)= -h_t(q_i)v_s\label{eqdiff1}
     }
     Next, we note the bound:
     \cgather[2pt]{
     \widetilde{h}_t(q_i) \leqq R_c h_t(q_i) \Rightarrow -h_t(q_i) \leqq -\frac{1}{R_c}\widetilde{h}_t(q_i)
     }
     Using in Eq.~\eqref{eqdiff1}, we obtain:
     \cgather[2pt]{
     \frac{\mathrm{d}}{\mathrm{d}t}\widetilde{h}_t(q_i)\leqq -\left (\frac{v_s}{R_c}\right )\widetilde{h}_t(q_i)\label{eqdiff2}
     }
     which completes the proof.
\end{proof}
\begin{defn}[Swarm Diameter]
     The swarm diameter $\mathds{D}_t(Q)$ is defined as:
     \cgather[2pt]{
     \mathds{D}_t(Q) = 2\max_{q_i \in Q} \left \vert\left \vert \mathcal{P}_{q_i}(t) - \mathcal{P}_\Sn \right \vert \right \vert 
     }
\end{defn}

\begin{cor}[Corollary To Proposition~\ref{propconvrate}]\label{corconvrate}
 $\mathds{D}_t(Q) \leqq 2\mathds{D}_0(Q) e^{-(v_s/ R_c) t}$
\end{cor}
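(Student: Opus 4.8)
The plan is to chain together three ingredients: a triangle-inequality bound relating each agent's Euclidean distance to the target to the piecewise path length $\widetilde{h}_t(q_i)$ of Definition~\ref{defhtilde}, the exponential decay of $\widetilde{h}_t(q_i)$ supplied by Proposition~\ref{propconvrate}, and a bound on the \emph{initial} path lengths $\widetilde{h}_0(q_i)$ in terms of the initial swarm diameter $\mathds{D}_0(Q)$. First I would observe that for any agent $q_i$ the straight-line distance to the target is dominated by the length of any connecting path, in particular the measure-monotone hop sequence underlying $\widetilde{h}_t(q_i)$; applying the triangle inequality to the telescoping sum in Eq.~\eqref{eqhtilde} gives
\[
\left\vert\left\vert \mathcal{P}_{q_i}(t) - \mathcal{P}_\Sn \right\vert\right\vert \leqq \widetilde{h}_t(q_i).
\]

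Next I would feed in Proposition~\ref{propconvrate}, namely $\widetilde{h}_t(q_i) \leqq \widetilde{h}_0(q_i)\, e^{-(v_s/R_c)t}$, to obtain, for every $q_i \in Q$,
\[
\left\vert\left\vert \mathcal{P}_{q_i}(t) - \mathcal{P}_\Sn \right\vert\right\vert \leqq \widetilde{h}_0(q_i)\, e^{-(v_s/R_c)t}.
\]
Taking the maximum over all agents and multiplying by $2$ reconstructs the left-hand side $\mathds{D}_t(Q)$ of the swarm-diameter definition, so the corollary reduces to establishing that $\max_{q_i \in Q}\widetilde{h}_0(q_i) \leqq \mathds{D}_0(Q)$, i.e. that the longest initial measure-monotone path does not exceed the initial diameter; substituting this into the maximized inequality yields $\mathds{D}_t(Q) \leqq 2\mathds{D}_0(Q)\, e^{-(v_s/R_c)t}$ exactly as claimed.

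I expect this last reduction to be the main obstacle, because a piecewise path can in principle be far longer than the straight-line span of the configuration, and the naive per-hop estimates are too weak: each hop joins two agents within the communication radius $R_c$ and hence is bounded by $\mathds{D}_0(Q)$, but summing over the $h_0(q_i)\leqq\Crd(Q)$ hops only yields $\widetilde{h}_0(q_i)\leqq\Crd(Q)\,\mathds{D}_0(Q)$, which is off by a factor of $\Crd(Q)$. The real content therefore lies in exploiting the monotone structure of the path recorded in Eq.~\eqref{eqagpath}: the strict increase of measures along the sequence forbids revisiting states and, through the $(1-\theta)$ and $(1-\lambda_{ij})$ factors accumulated per hop, forces successive agents to be strictly closer to the target in hop-count, so that the path advances toward the goal rather than wandering. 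I would try to convert this monotone, non-self-intersecting geometry, together with the factor of $2$ deliberately built into $\mathds{D}_0(Q)=2\max_{q_k}\vert\vert\mathcal{P}_{q_k}(0)-\mathcal{P}_\Sn\vert\vert$, into the clean aggregate estimate $\widetilde{h}_0(q_i)\leqq\mathds{D}_0(Q)$; once that geometric lemma is secured, the corollary follows immediately from the displayed chain above.
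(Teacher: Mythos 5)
Your overall route is the same as the paper's: the paper's entire proof of this corollary is the single observation $\mathds{D}_t(Q) \leqq 2\max_{q_i \in Q}\widetilde{h}_t(q_i)$ (straight-line distance to the target is dominated by the piecewise path length of Definition~\ref{defhtilde}), after which Proposition~\ref{propconvrate} is applied to each $\widetilde{h}_t(q_i)$. You reproduce exactly this chain, so on the first two of your three ingredients you and the paper coincide.

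The difference lies in the third ingredient, and here you have correctly identified the real issue but not resolved it. To land on the stated constant $2\mathds{D}_0(Q)$ one needs $\max_{q_i\in Q}\widetilde{h}_0(q_i)\leqq \mathds{D}_0(Q)$; the paper's ``follows immediately'' silently identifies the longest initial routing path with the initial diameter and never argues the point. Your proposal flags this as ``the main obstacle'' and then only gestures at a resolution, hoping that the measure-monotonicity of the hop sequence in Eq.~\eqref{eqagpath} forces the path to advance toward the goal rather than wander. That hope cannot be realized: monotonicity of the measures along a hop sequence constrains nothing about its Euclidean geometry. A swarm whose connectivity graph is a long serpentine or spiral chain around the target has a unique, hence automatically measure-monotone, hop sequence whose physical length $\widetilde{h}_0(q_i)$ exceeds $\mathds{D}_0(Q)$ by an unbounded factor; there the crude estimate $\widetilde{h}_0(q_i)\leqq R_c\, h_0(q_i)\leqq R_c\,\Crd(Q)$ is essentially tight, and no fixed constant such as the factor $2$ built into $\mathds{D}_0(Q)$ can absorb the discrepancy. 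So the step you isolated genuinely fails, in your proposal and in the paper alike: what the argument actually delivers is $\mathds{D}_t(Q)\leqq 2\,(\max_{q_i\in Q}\widetilde{h}_0(q_i))\,e^{-(v_s/R_c)t}$, i.e.\ exponential decay with a constant controlled by the longest initial measure-monotone path rather than by the initial diameter. Your diagnosis of where the proof is thin is sharper than the paper's own one-line justification, but the proposal as written does not close the gap, and the geometric lemma you propose to secure is false in general.
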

\begin{proof}
     Follows immediately from noting $\mathds{D}_t(Q) \leqq 2\max_{q_i \in Q} \widetilde{h}_t(q_i)$.
\end{proof}
\vspace{3pt}
\begin{cor}[Corollary To Proposition~\ref{propconvrate}]\label{corVsRcTg}Denoting an upper bound on the time required for all agents to converge to the target as $T_{conv}$, we have:
\begin{subequations}
 \cgather[2pt]{
 v_sT_{conv} \simeq Const. \textrm{ for constant $R_c$}\\
 R_c/ T_{conv}\simeq Const. \textrm{ for constant $v_s$}
 }
\end{subequations}
\end{cor}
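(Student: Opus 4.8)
The plan is to read off the claimed scaling relations directly from the exponential bound established in Corollary~\ref{corconvrate}, which states $\mathds{D}_t(Q) \leqq 2\mathds{D}_0(Q) e^{-(v_s/R_c)t}$. The only additional ingredient needed is a precise notion of what ``convergence'' means: since the right-hand side decays exponentially but never reaches zero in finite time, I would fix a small tolerance $\delta > 0$ (for instance, the spatial resolution below which an agent is deemed to have reached the target) and declare the swarm converged once its diameter falls below $\delta$. Because $\mathds{D}_t(Q) = 2\max_{q_i \in Q}\vert\vert \mathcal{P}_{q_i}(t) - \mathcal{P}_\Sn\vert\vert$, this is exactly the condition that every agent lies within $\delta/2$ of the target. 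An upper bound $T_{conv}$ on the convergence time is then obtained by saturating the bound of Corollary~\ref{corconvrate}, $i.e.$ by solving $2\mathds{D}_0(Q) e^{-(v_s/R_c)T_{conv}} = \delta$.

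Taking logarithms of this defining equation isolates the product of the decay rate and the convergence time:
\cgather[2pt]{
\frac{v_s}{R_c}\,T_{conv} = \log\!\left(\frac{2\mathds{D}_0(Q)}{\delta}\right) \triangleq K .
}
The crucial observation is that $K$ depends only on the initial swarm diameter $\mathds{D}_0(Q)$ and the fixed tolerance $\delta$, and is therefore independent of both the communication radius $R_c$ and the agent velocity $v_s$. Rearranging the identity $v_s T_{conv} = K R_c$ then yields both claimed relations: holding $R_c$ fixed gives $v_s T_{conv} = K R_c = \textrm{Const.}$, while holding $v_s$ fixed gives $R_c / T_{conv} = v_s / K = \textrm{Const.}$, which is exactly the assertion of the corollary.

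The main subtlety---and the only place where care is genuinely required---is in justifying the ``$\simeq$'' rather than an exact equality, and in checking that $K$ really is constant under the relevant comparison. Two points settle this. First, $T_{conv}$ is an \emph{upper} bound extracted by saturating an inequality, so the relations hold only up to the slack inherent in Corollary~\ref{corconvrate}; this is precisely why the statement is phrased with $\simeq$ rather than $=$. Second, when we vary $R_c$ with the initial configuration $\mathcal{P}(0)$ held fixed, $\mathds{D}_0(Q)$ is a purely geometric quantity that does not change, so $K$ is unaltered across the comparison; the analogous invariance is trivial when $v_s$ alone is varied. Once these two observations are in place, no further computation is needed, and the corollary follows immediately from the logarithmic manipulation above.
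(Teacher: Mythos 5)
Your argument is correct and follows essentially the same route as the paper: both proofs fix a small diameter tolerance, saturate the exponential bound of Corollary~\ref{corconvrate}, and take logarithms to obtain $v_sT_{conv}/R_c \simeq \textrm{Const}$. Your version is slightly more explicit in retaining $\mathds{D}_0(Q)$ inside the logarithm and in noting that the constant is unaffected when $R_c$ or $v_s$ is varied, but the underlying idea is identical.
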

\begin{proof}
     Replace the requirement of every agent converging to the target by one that requires almost all of them reaching the target, in the sense that the swarm diameter $\mathds{D}_{T_{conv}} \simeq f$, where $0<f\ll 1$. Using Corollary~\ref{corconvrate}, we have:
     \cgather[2pt]{
v_sT_{conv}/R_c \leqq \ln (2/f)     
     }
     Since $T_{conv}$ is an upper bound on the convergence time, we can replace the inequality:
     \cgather[2pt]{
v_sT_{conv}/R_c \simeq \ln (2/f) = Const.          
     }
     which implies the desired result.
\end{proof}

\begin{notn}
     We denote the set of unit vectors of velocity directions at time $t$ for the processes  $\mathscr{I}_{v_s}(t,\mathcal{P}(0))$ and $\mathscr{R}_{v_s}(t,\mathcal{P}(0))$ as $\partial\mathscr{I}_{v_s}(t,\mathcal{P}(0))$ and $\partial\mathscr{R}_{v_s}(t,\mathcal{P}(0))$ respectively.
\end{notn}
\vspace{3pt}

\begin{prop}[Asymptotic Deviation From Ideal Process]\label{propdev}
     For sufficiently small $v_s> 0$, we have:
     \cgather[2pt]{
     Prob\left (  \bigg \vert \bigg \vert 
     \partial \mathscr{R}_{v_s}(t,\mathcal{P}(0)) - \partial\mathscr{I}_{v_s}(t,\mathscr{R}_{v_s}(t,\mathcal{P}(0)))
     \bigg \vert \bigg \vert > 0
     \right ) \\= O\left (  \frac{R_c}{v_s} \left (  e^{(v_s/R_c) \mathds{T}_c}-1 \right ) e^{-(v_s/R_c) t}  \right )\label{eqawesome}
     }
     where $\mathds{T}_c$ is the convergence time of the frozen swarm.
\end{prop}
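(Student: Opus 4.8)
The plan is to read the probability in~\eqref{eqawesome} as that of a \emph{policy-staleness} event. At time $t$ both direction fields are evaluated at the single configuration $\mathscr{R}_{v_s}(t,\mathcal{P}(0))$, so $\partial\mathscr{R}_{v_s}$ and $\partial\mathscr{I}_{v_s}$ disagree precisely when the forwarding policy actually in force in the real process differs from the instantaneously $\epsilon$-optimal policy for that configuration. Because the real process never freezes and its measures need the frozen convergence time $\mathds{T}_c$ (Proposition~\ref{propcomplex}) to settle, the policy it is using at time $t$ is the one that was optimal for the configuration as it stood at most $\mathds{T}_c$ earlier. First I would prove the contrapositive: if, for every agent, the ideal best neighbour $\mathfrak{b}^\star_i$ and the enabled set $\mathds{B}_i$ stay constant throughout the lag window $[t-\mathds{T}_c,t]$, then the distributed update of Algorithm~\ref{AlgorithmOPT} has had a full convergence window to lock onto that policy, so the two direction fields coincide and the deviation is zero.

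Consequently a nonzero deviation at time $t$ forces at least one change of the ideal best-neighbour map somewhere in $[t-\mathds{T}_c,t]$, and the task reduces to bounding the probability of such a change. Next I would estimate the instantaneous rate at which $\mathfrak{b}^\star_i$ (equivalently, the ordering of the neighbour measures $\widehat{\nu}_\theta\vert_{(q^v_{ij})}$) can flip. Such a flip is driven solely by the motion of the agents, which proceeds at speed $v_s$ and only matters relative to the communication scale $R_c$. Writing $\alpha = v_s/R_c$ and invoking the exponential contraction of the piecewise path lengths $\widetilde{h}_t(q_i)\leqq \widetilde{h}_0(q_i)e^{-\alpha t}$ (Proposition~\ref{propconvrate}) and of the swarm diameter $\mathds{D}_t(Q)$ (Corollary~\ref{corconvrate}), the residual geometric ``reconfiguration budget'' that remains available to trigger a crossing shrinks like $e^{-\alpha u}$ at time $u$. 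This yields that the probability of a policy-change event being initiated in an infinitesimal window $[u,u+du]$ is $O(e^{-\alpha u})\,du$.

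Finally I would integrate this rate over the lag window and apply a union bound over the finitely many agents:
\[
  Prob(\text{deviation at }t)=O\!\left(\int_{t-\mathds{T}_c}^{t}e^{-\alpha u}\,du\right)
   =O\!\left(\tfrac{1}{\alpha}\,e^{-\alpha t}\big(e^{\alpha\mathds{T}_c}-1\big)\right),
\]
which is exactly $O\!\left(\frac{R_c}{v_s}\big(e^{(v_s/R_c)\mathds{T}_c}-1\big)e^{-(v_s/R_c)t}\right)$ after substituting $\alpha=v_s/R_c$. The hypothesis that $v_s$ is sufficiently small is used twice: it keeps condition \textbf{C2} of Definition~\ref{defMOV} satisfied so that no agent disconnects during a lag window (as in the preceding convergence proofs), and it lets the exponential-rate estimate be taken as a uniform upper bound over $u\in[t-\mathds{T}_c,t]$.

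The main obstacle I anticipate is making rigorous that the instantaneous rate of ideal policy changes is genuinely $O(e^{-\alpha u})$ and not merely bounded. One must translate a best-neighbour swap into the crossing condition that $\widehat{\nu}_\theta\vert_{(q^v_{ij})}$ overtakes a competitor, convert that crossing into a threshold on the geometric drift of the agents, and then dominate that threshold by the contracting scales $\widetilde{h}_u(q_i)$ and $\mathds{D}_u(Q)$. The delicate point is that the measures depend on the globally contracting geometry only through locally sensed distances, so one needs the exponential decay of $\widetilde{h}_u(q_i)$ to outpace any sharpening of the ordering near a crossing; controlling this uniformly across agents and across the continuum of configurations visited during the lag window, while absorbing the tie-breaking randomness of $\mathfrak{b}^\star_i$ into the $O(\cdot)$, is where the real work lies.
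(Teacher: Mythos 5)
Your proposal follows essentially the same route as the paper: the deviation event is reduced to the occurrence of a configuration-change event somewhere in the lag window $[t-\mathds{T}_c,t]$, the instantaneous rate of such events is bounded by $O(e^{-(v_s/R_c)u})$ via the exponential contraction results of Proposition~\ref{propconvrate} and Corollary~\ref{corconvrate}, and integrating that rate over the window yields the stated bound. The only real difference is cosmetic: the paper identifies the change events concretely as neighborhood-map (topology) updates and bounds their rate by the fraction of agent pairs at inter-agent distance $\simeq R_c$, dominated by the contracting swarm diameter --- precisely the step you flag as the delicate one, and which the paper also treats only at this heuristic level.
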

\begin{proof}
     We note that if the neighborhood maps and the failure probabilities do not change for the interval $[t-\mathds{T}_c,t]$, then the velocity vectors of the two processes will coincide. We assume that $v_s$  is small enough such that in the absence of topology updates, the velocity vectors for $\mathscr{R}_{v_s}(t,\mathcal{P}(0))$ coincide with that of  $\mathscr{I}_{v_s}(t,\mathscr{R}_{v_s}(t,\mathcal{P}(0)))$.
     Denoting the probability of  topology update at time $t$ as $\mathcal{T}(t)$, we note:
     \cgather[2pt]{
     Prob\left (  \bigg \vert \bigg \vert 
     \partial \mathscr{R}_{v_s}(t,\mathcal{P}(0)) - \partial\mathscr{I}_{v_s}(t,\mathscr{R}_{v_s}(t,\mathcal{P}(0)))
     \bigg \vert \bigg \vert > 0 \right )= \int_{t-\mathds{T}_c}^t \mathcal{T}(t')\mathrm{d}t' \label{eqint0}
     }
     $\mathcal{T}(t)$ however is bounded above by the fraction of agents $\psi(t)$ at time $t$ that have an inter-agent distance $\simeq R_c$, which is a necessary condition for such agents to affect a change in their neighborhood map.
     In particular, we have 
     \cgather[2pt]{
     \mathcal{T}(t) = O(\psi(t))
     }
     Since the swarm diameter $\mathds{D}_t(Q)$ is dominated by an exponentially decreasing function (See Corollary~\ref{corconvrate} to Proposition~\ref{propconvrate}), and inter-agent distance is bounded above by  $\mathds{D}_t(Q)$, we have:
     \cgather[2pt]{
     \psi(t) = O(e^{-(v_s/R_c) t})
     }
     The result then follows by standard algebra from Eq.~\eqref{eqint0}.
\end{proof}

Proposition~\ref{propdev} shows that the implementable process $\mathscr{R}_{v_s}(t,\mathcal{P}(0))$ starts coinciding, at least in probability, to the ideal update process for small agent velocities. Note, that as $v_s \rightarrow 0^+$ in Eq.~\eqref{eqawesome}, we have, as expected:
\cgather[2pt]{
\lim_{v_s\rightarrow 0} Prob\left (  \bigg \vert \bigg \vert 
     \partial \mathscr{R}_{v_s}(t,\mathcal{P}(0)) - \partial\mathscr{I}_{v_s}(t,\mathscr{R}_{v_s}(t,\mathcal{P}(0)))
     \bigg \vert \bigg \vert > 0
     \right )\notag \\= O\left ( R_c\lim_{v_s \rightarrow 0} \frac{ e^{(v_s/R_c) \mathds{T}_c}-1}{v_s} \right ) = O(\mathds{T}_c)
}
which reflects the fact that while $\mathscr{R}_{v_s}(t,\mathcal{P}(0))$ takes $O(\mathds{T}_c)$ time to converge, we assumed that $\mathscr{I}_{v_s}(t,\mathscr{R}_{v_s}(t,\mathcal{P}(0)))$ executes instantaneous route optimization. Similarly:
\cgather[2pt]{
\lim_{R_c\rightarrow \infty} Prob\left (  \bigg \vert \bigg \vert 
     \partial \mathscr{R}_{v_s}(t,\mathcal{P}(0)) - \partial\mathscr{I}_{v_s}(t,\mathscr{R}_{v_s}(t,\mathcal{P}(0)))
     \bigg \vert \bigg \vert > 0
     \right )  = O(\mathds{T}_c)
}
which indicates that in the case where no topology changes occur due to the fact that all agents are neighbors of each other, the ideal process is faster for the same reason. If there is no communication, then no  optimization is possible for either process:
\cgather[2pt]{
\lim_{R_c\rightarrow 0} Prob\left (  \bigg \vert \bigg \vert 
     \partial \mathscr{R}_{v_s}(t,\mathcal{P}(0)) - \partial\mathscr{I}_{v_s}(t,\mathscr{R}_{v_s}(t,\mathcal{P}(0)))
     \bigg \vert \bigg \vert > 0
     \right ) = 0
}

\section{Simulation Results}\label{sec6}
\subsection{Complexity of Optimization In The Frozen Swarm}
\begin{figure}[t]
\centering
\subfloat[]{\includegraphics[width=2.1in]{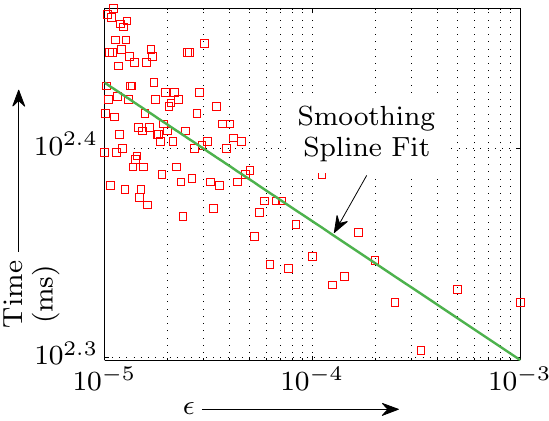}\label{figcomplexb}}
\subfloat[]{\includegraphics[width=2in]{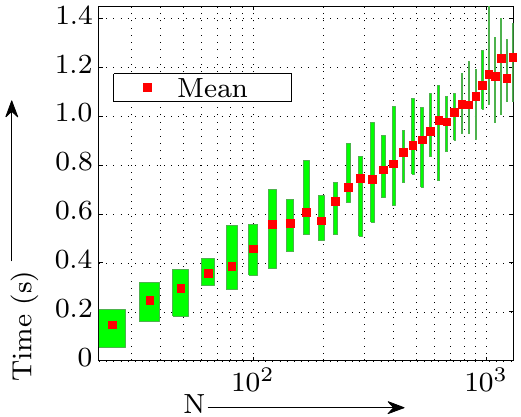}\label{figcomplexa}}\\
\subfloat[]{\includegraphics[width=2.1in]{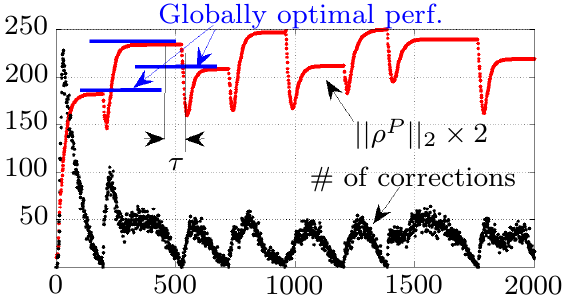}\label{figsim1a}}
\subfloat[]{\includegraphics[width=2.1in]{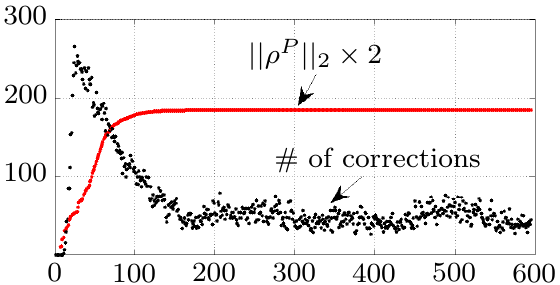}\label{figsim1b}}
\caption{ Convergence complexity for distributed  optimization of the frozen network: (a) illustrates  dependence  on network size. (b) captures the $O(1/\epsilon)$ dependence. Convergence dynamics: (c) rapid convergence to large random target movements  (d)robust response to  large zero-mean variations in the failure probabilities}\label{figsim1}
\end{figure}

Extensive simulations on NS2 network simulator are used to investigate how  convergence times scale as a function of the network size (Figure~\ref{figsim1}). $10^2$ random topologies were considered for each $N$ (increased from 25 to 1600), and the mean times along with the max-min bars are plotted in Figure~\ref{figcomplexa}.
Note that the abscissa is on a logarithmic scale, and the near linear nature of the plot indicates a logarithmic dependence of the convergence on network size, implying that the bound computed in Proposition~\ref{propcomplex} is possibly not tight. The dependence on $\epsilon$ shown in Figure~\ref{figcomplexb} (for $N=10^3$) is hyperbolic, as expected, leading to a near linear dependence after a smoothing spline fit on a log-log scale. Convergence times are estimated from NS2 output (using  802.11 standard).

Theoretical convergence results are illustrated in Figure~\ref{figsim1}(c-d),  generated on a $10^4$ frozen agent network. Figure~\ref{figsim1a} illustrates the variation of the number of route updates ($\#$ of forwarding decision corrections) and the norm of the performance vector $\rho^P$ (scaled up by a multiplicative factor of 2) when the target is moved around randomly at a slower time scale. Since $\rho^P$ is the vector of end-to-end success probabilities (See Definition~\ref{defperf}), its norm captures the degree of expected  throughput across the network. Note that target changes induce self-organizing corrections, which rapidly die down, with the performance converging close to the global optimal ($\epsilon=0.001$ was assumed in all the simulations). The failure probabilities are chosen randomly, and, on the average, held  constant  in the course of simulation illustrated in Figure~\ref{figsim1a} (zero mean Gaussian noise is added to illustrate robustness). Note that the seemingly large fluctuations in the performance norm is unavoidable; the interval $\tau$ is the what it approximately takes for information to percolate through the network, and hence  this much time is necessary at a minimum for decentralized route convergence. Figure~\ref{figsim1b} illustrates the effect of large zero-mean stochastic variations in the failure probabilities. Each agent estimates the failure probabilities from simple windowed average of the link-specific packet failures. We note that large sustained  fluctuations result in a sustained corrections in the forwarding decisions (which no longer goes to zero). However, the norm of the performance vector converges and holds steady. This clearly illustrates that the information percolation strategy induces a low-pass filter eliminating high-frequency  fluctuations. A small number of route fluctuations always occur (note the non-zero number of corrections), but this  does not induce significant performance variations.
\subsection{Simulation Studies On Mobile Swarms}
\begin{figure}[t]
\centering
     \subfloat[]{\includegraphics[width=2.3in]{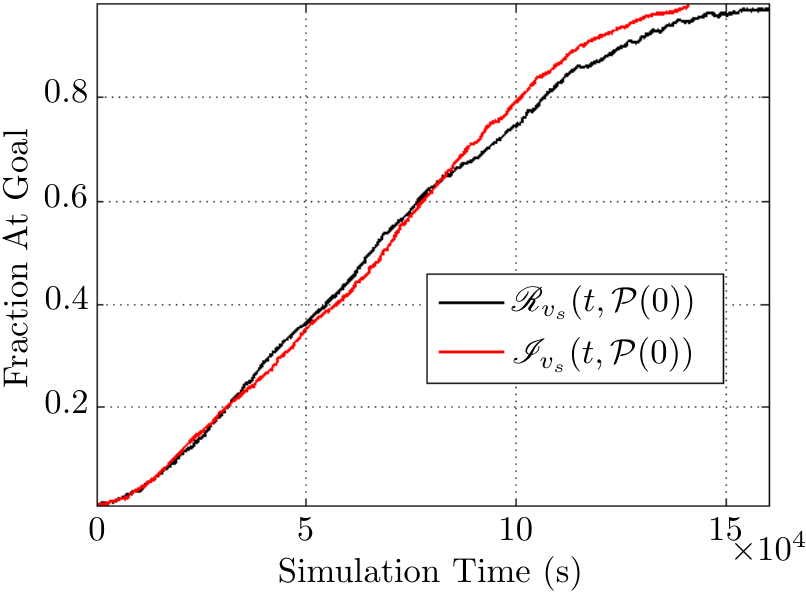}\label{figmobsima}}\hspace{10pt}
     \subfloat[]{\includegraphics[width=2.3in]{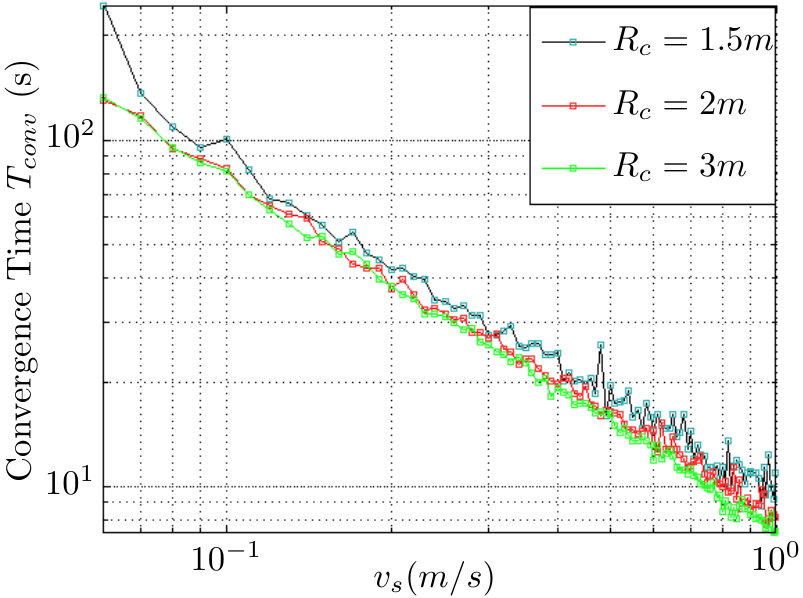}\label{figmobsimb}}\\
     \subfloat[]{\includegraphics[width=2.28in]{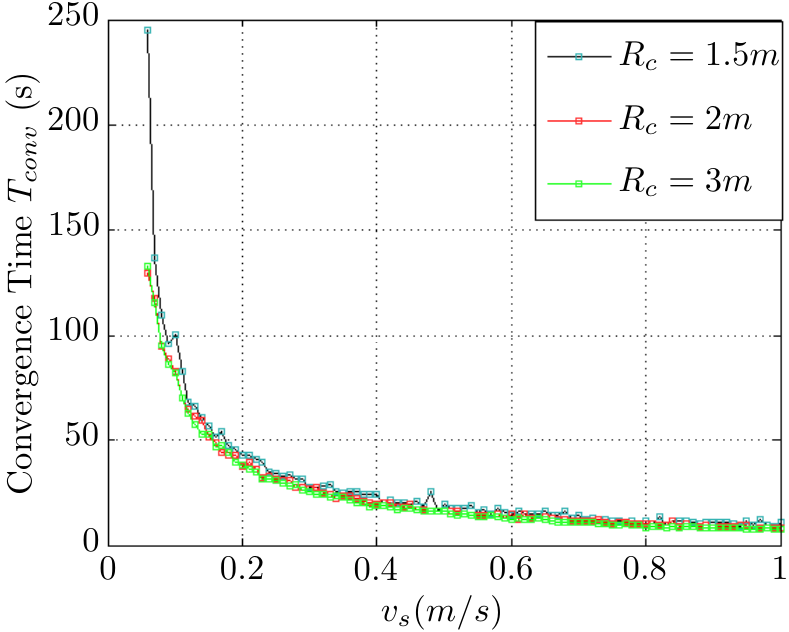}\label{figmobsimc}}\hspace{10pt}
     \subfloat[]{\includegraphics[width=2.25in]{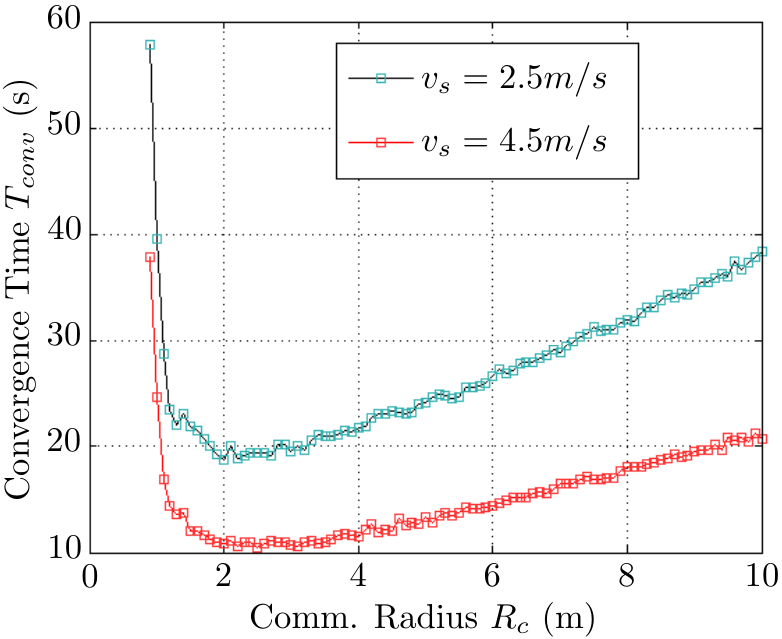}\label{figmobsimd}}
\caption{Simulation results validating theoretical development in Section~\ref{secMob}: (a)$\mathscr{I}_{v_s}(t,\mathcal{P}(0))$ and $\mathscr{R}_{v_s}(t,\mathcal{P}(0))$ matches closely w.r.t. the the fraction reaching target, (b) variation of $T_{conv}$ with velocity $v_s$ at constant communication radius $R_c$ on a log-log scale showing the predicted hyperbolic relationship (Corollary~\ref{corVsRcTg}), (c) same data on linear scale, (d) variation of $T_{conv}$ with $R_c$ at constant $v_s$ showing the predicted linear relationship. At low values of $R_c$ swarm begins to get disconnected resulting in rapid increase in $T_{conv}$ }\label{figmobsim}
     
\end{figure}
\begin{figure}[t]
     \includegraphics[width=5in]{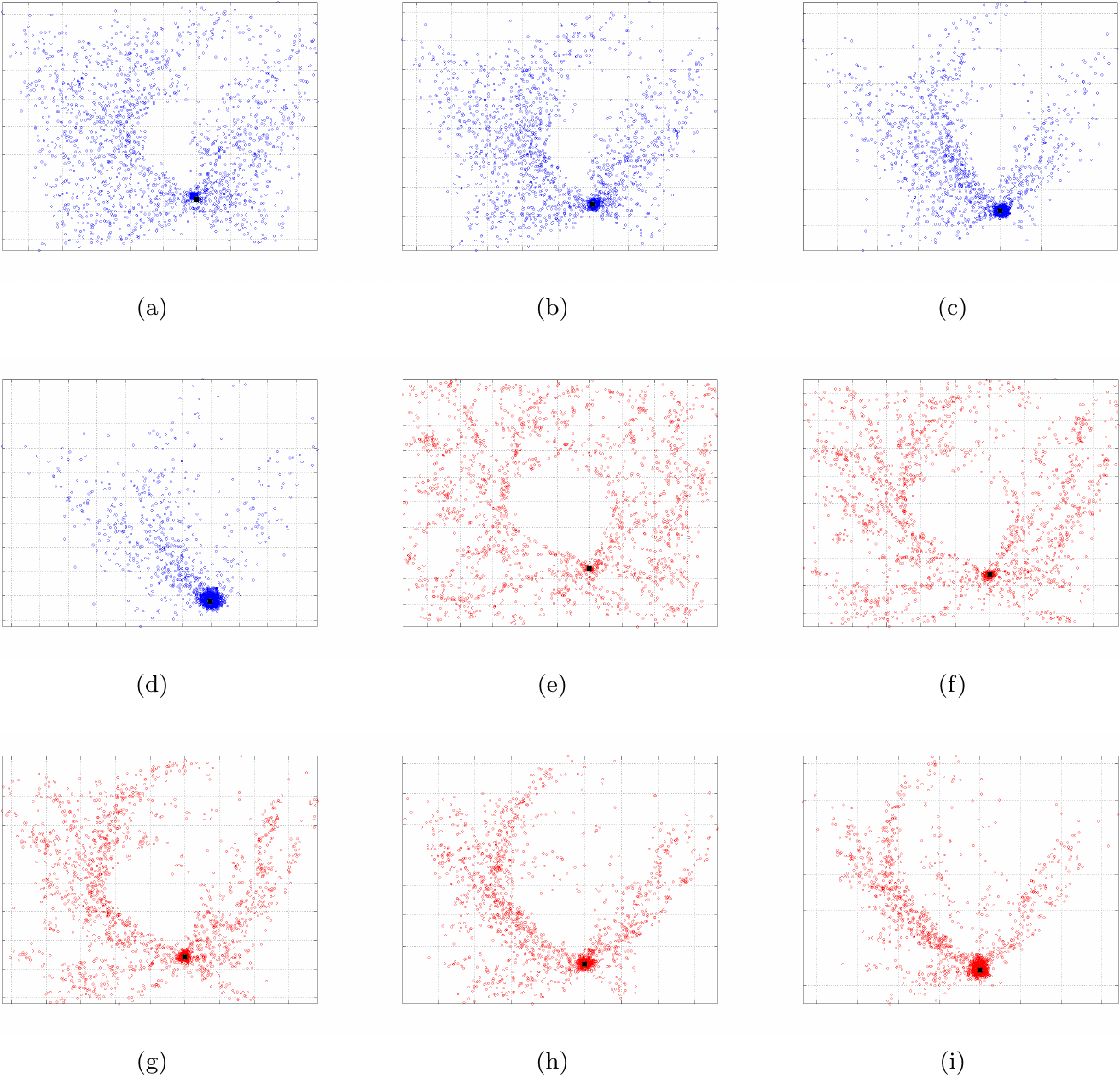}
     \caption{Effect of communication radius  $R_c$ on swarm trajectories: Plates (a)-(d) illustrate the case for high $R_c=3m$, whereas plates (e)-(i) illustrate the case for $R_c=0.1m$. Note the case for low $R_c$ develops high-traffic paths in the simulation, whereas for high $R_c$, the trajectories are more amorphous}\label{figsimcase5}
\end{figure}

First, we need to validate the theoretical development presented in Section~\ref{secMob}.
For that purpose, we consider a swarm of $10^4$ agents, with a single target. The swarms are initially distributed uniformly over a $100m \times 100m$ region. The failure probabilities are assumed to be linearly decreasing functions of the inter-agent distances. They also are given a random spatial dependence. The system is simulated in accordance to the described algorithms, at various values of the communication radius $R_c$, and the agent velocity $v_s$. Convergence time $T_{conv}$ is the time required for nearly all agents ($> 99.9\%$) to reach the target. The results are shown in Figure~\ref{figmobsim}.
Note that Figure~\ref{figmobsima} validates Proposition~\ref{propdev} in that the ideal process $\mathscr{I}_{v_s}(t,\mathcal{P}(0))$ closely matches the implementable process    
$\mathscr{R}_{v_s}(t,\mathcal{P}(0))$ with respect to the fraction of agents reaching the goal as a function of the simulation time. For our simulated system the ideal process $\mathscr{I}_{v_s}(t,\mathcal{P}(0))$ was obtained by freezing the swarm for $1000$ simulation ticks after every tick that caused any position updates. Since the frozen swarm is guaranteed to converge to the optimal routes (as shown in Section~\ref{sec4}), this procedure ensures that the achieved position updates closely approximate the ideal process. In logging simulation time, we ignored the time spent in the frozen optimization. Figure~\ref{figmobsimb}-\ref{figmobsimd} validates Corollary~\ref{corVsRcTg}, by showing that for constant communication radius $R_c$, the convergence time $T_{conv}$ does indeed vary hyperbolically with the $v_s$(Figure~\ref{figmobsimb},\ref{figmobsimc}), and for constant $v_s$, it increases linearly with the communication radius $R_c$ (Figure~\ref{figmobsimd}). As the communication radius is decreased, we eventually encounter a point where the swarm begins to get disconnected, which is reflected in the rapid increase of $T_{conv}$ at low values of $R_c$. If we identify $R_c$ with the amount of energy spent in communication, we note that there is an optimum value at which the $T_{conv}$ is minimized. The high-traffic paths generated in the swarm are very different for different communication radius. This is illustrated in Figure~\ref{figsimcase5} (with $10^4$ agents), where we note that for small $R_c$ we see the development of distinct paths.

\begin{figure}[t]
     \includegraphics[width=5in]{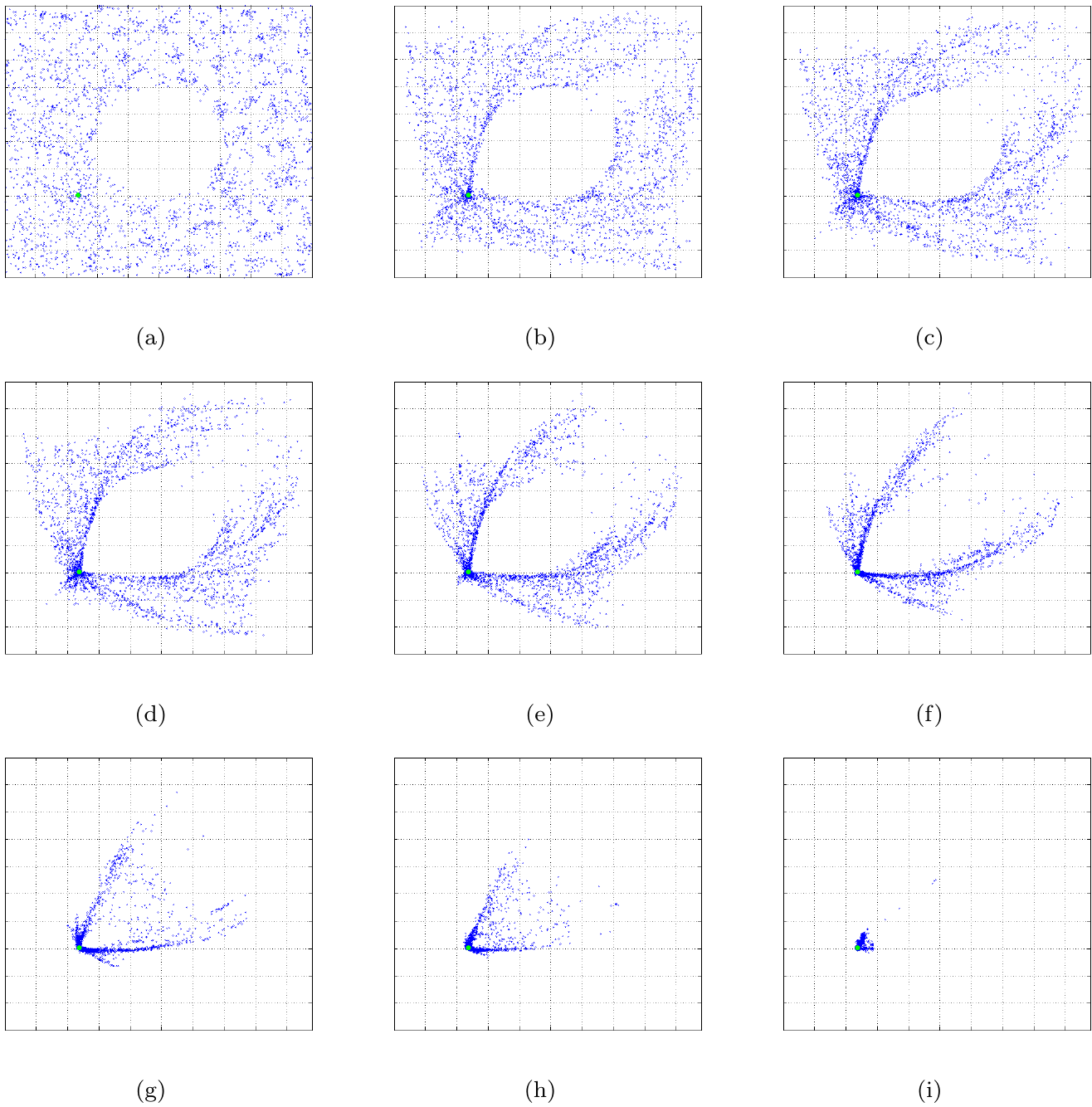}
     \caption{Illustrating effect of void in the initial agent distribution: the trajectories tend to avoid the void since every agent is following a neighbor leading to distinct paths}\label{figsimcase1}
\end{figure}

\begin{figure}[t]
     \includegraphics[width=5in]{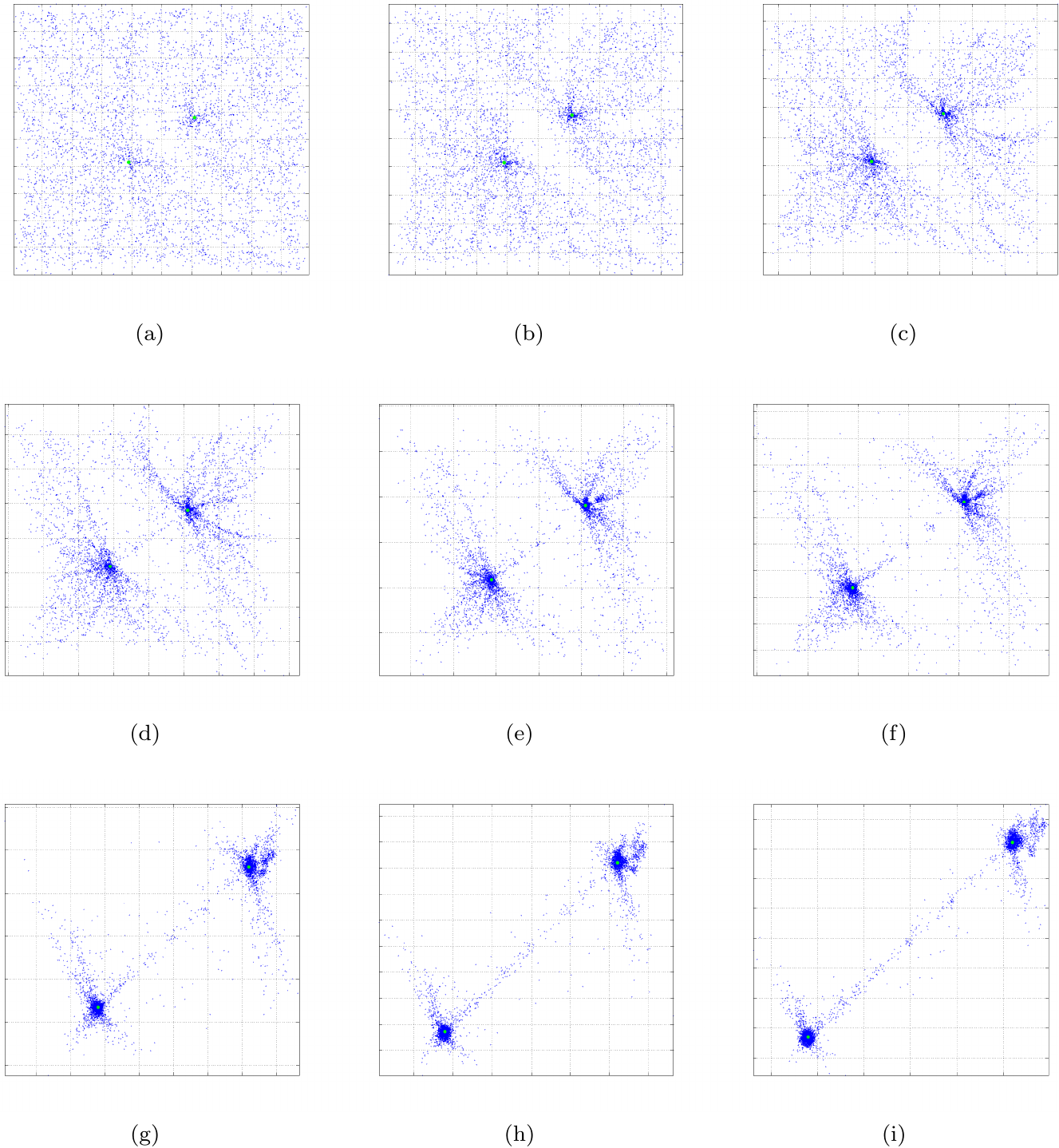}
     \caption{Multiple point targets: Note that the swarm automatically splits to move towards chosen targets; such choices are not made a priori but naturally emerge from the execution  }\label{figsimcase3}
\end{figure}

\begin{figure}[t]
     \includegraphics[width=5in]{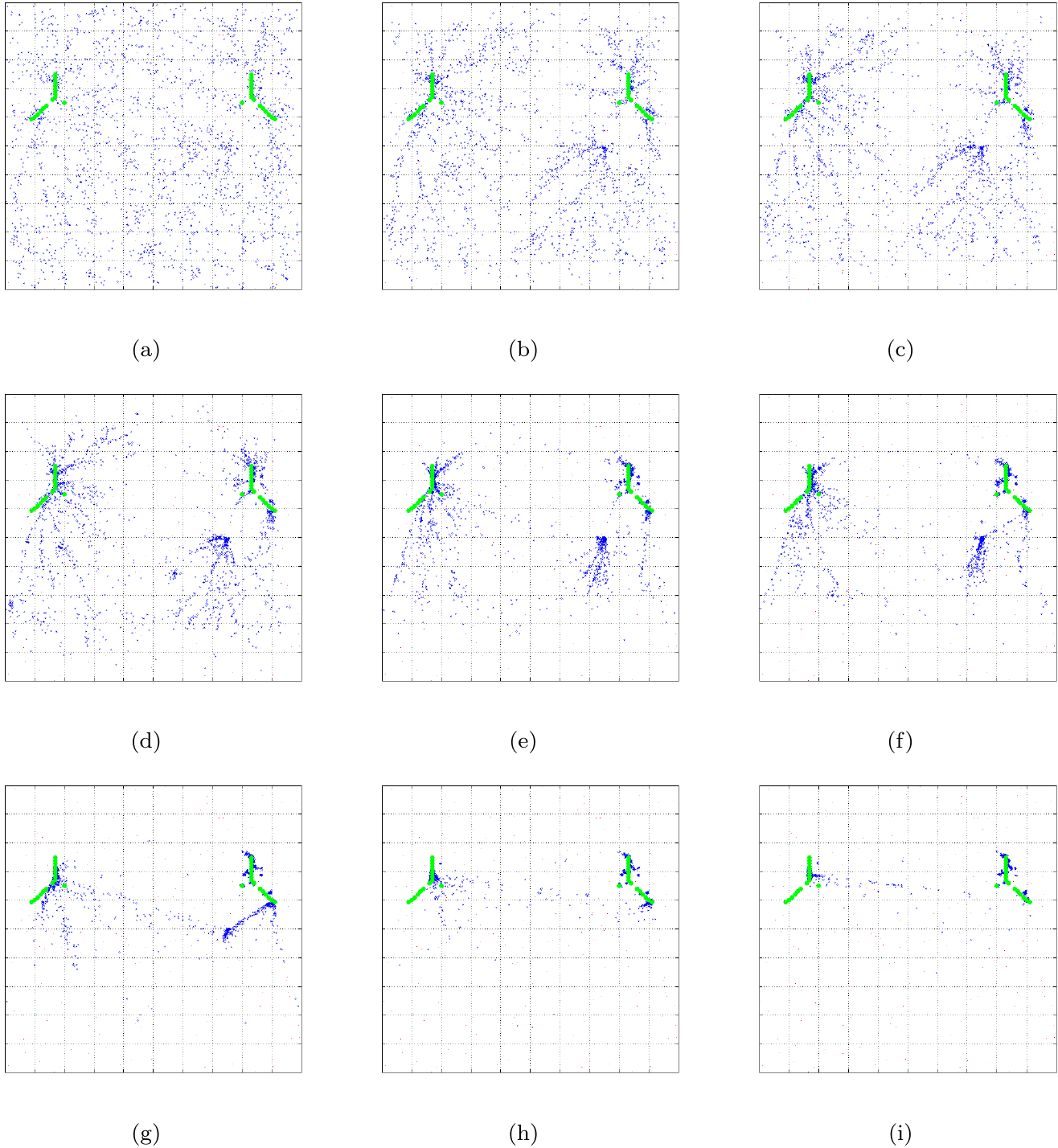}
     \caption{Illustrating multiple extended target regions: Note that the swarm automatically splits to move towards chosen targets (in green); as before such choices are not made a priori but naturally emerge from the execution  }\label{figsimcase2}
\end{figure}

\begin{figure}[t]
     \includegraphics[width=5in]{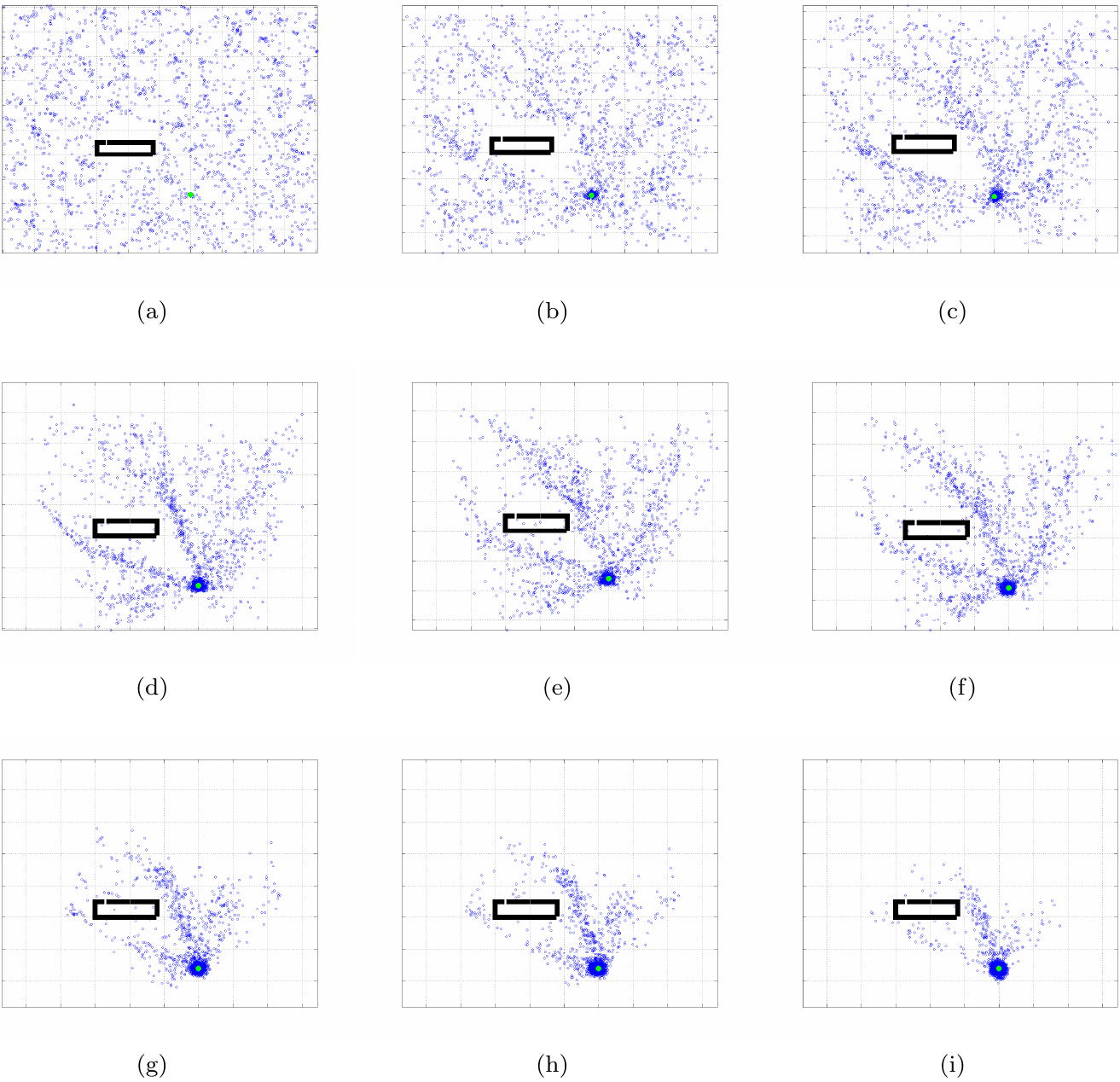}
     \caption{Presence of obstacles with obstacle positions are locally sensed and are not known a priori or globally. Note how the high traffic paths go around}\label{figsimcase4}
\end{figure}

\begin{figure}[t]
     \includegraphics[width=5in]{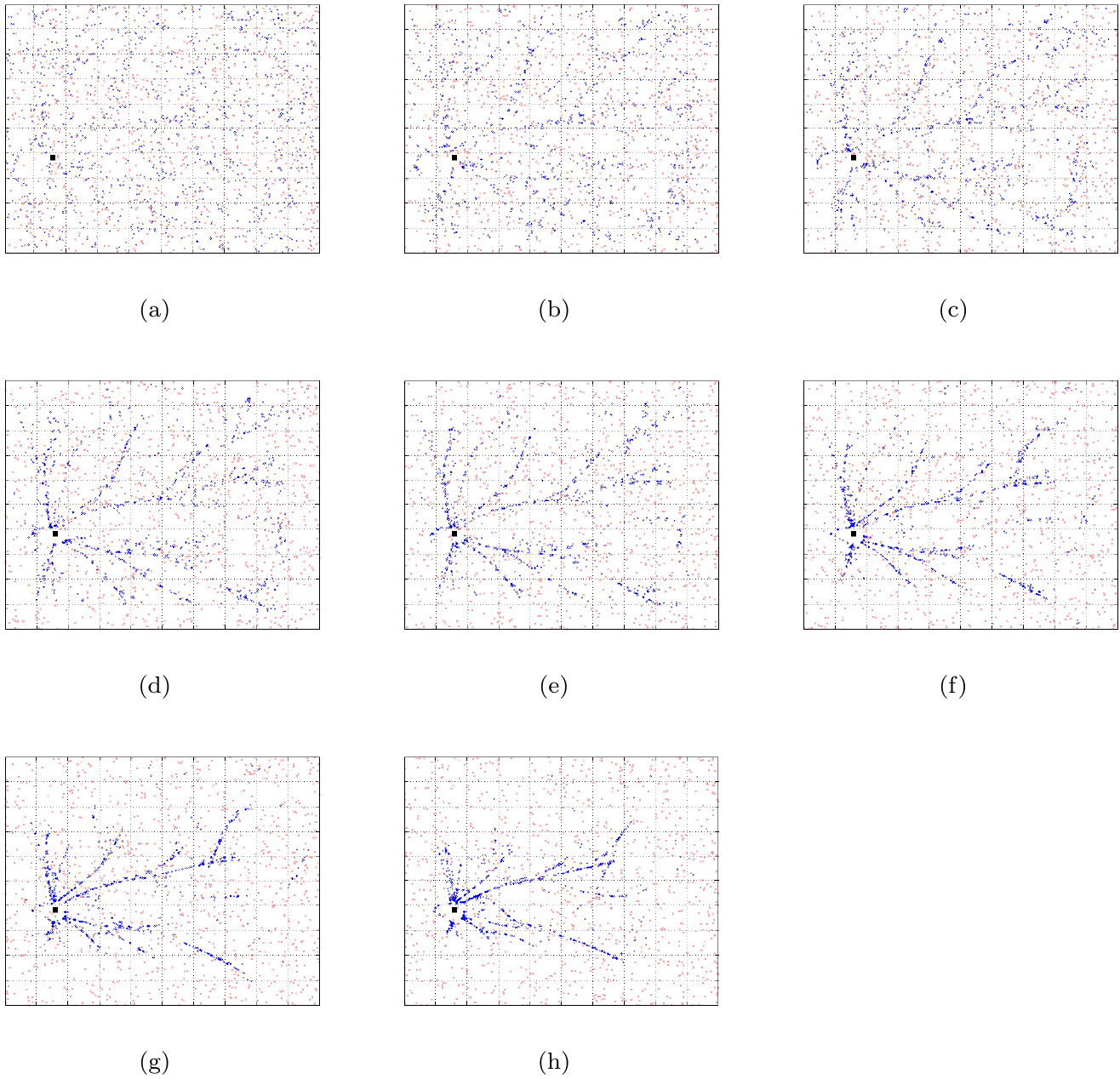}
     \caption{Presence of randomly moving agents (in red) which are not going to the target, but are willing to share information. Note how the blue agents organize to form paths }\label{figsimcase6}
\end{figure}
\subsection{Simulation Case Studies}
We present simulation results for a series of different scenarios. All simulations are done with a minimum of $10^4$ agents, and the communication radius $R_c$ and the swarm velocity $v_s$ is kept fixed at $0.1m$, and $2.5m/s$ unless otherwise stated. The initial agent distribution is uniform, as before, over a $100m \times 100m$ plane (with the exception of intentional voids).
\begin{enumerate}
     \item Figure~\ref{figsimcase1} illustrates the situation where there is a void in the initial agent distribution. We note how the high-traffic paths go around the void, resulting from the neighbor-following mechanism. This is intuitively correct, as the small communication radius implies that no information is available on the failure probabilities inside the void. As the simulation progresses, the two paths going around the void closes together, illustrating the fact as agents on the edge of the void incur inwards due to noisy position updates, the area with no agents diminishes.
     \item Figure~\ref{figsimcase3} illustrates the scenario where we have two point targets. Note how the swarm splits automatically, and moves towards a particular target. This splitting decision is not taken a priori, but naturally emerges from the execution.
     \item Figure~\ref{figsimcase4} illustrates a scenario with multiple extended targets. Note how the swarm trajectories form temporary accumulation points (can be seen towards the bottom of the target on the right), indicating the locations from where going towards both targets are equally advantageous. Random choice of best neighbors is sufficient, without any addition effort, to ensure that such accumulations are only temporary. 
     \item Figure~\ref{figsimcase5} illustrates the scenario with an obstacle, whose position needs to be locally sensed, and is not known a priori or globally.
     \item Figure~\ref{figsimcase6} illustrates the case with agents that are not interested in going to target, but are willing to share information $i.e.$ carry out the node-based computations, and relate the measure values to the neighbors. The difficulty is that such agents are not necessarily moving towards the goal, so following such an agent may prove detrimental. However, as these agents move in a direction which is not improving the measure gradient, their nodal updates begin diminishing their current measures. The simulation shows that a sparse number of agents (in blue) which intend to reach the target can accomplish this in the presence of the former type (in red). Note also that in this simulation the number of blue agents was chosen to be insufficient to form a connected network. Thus the information percolation is shown to be sufficient for the development of distinct paths to target.
\end{enumerate}

\section{Future Work}\label{sec7}
%
%
Future work will proceed in the following directions:
\begin{enumerate}
 \item Design explicit strategies for energy and congestion awareness within the proposed framework.  Note that each agent can regulate incoming traffic by deliberately reporting  lower values of its current self-measure to its neighbors: 
\begin{gather}\label{eqfactor}
\textrm{\small \sffamily \BRed Reported} \longrightarrow r^{[k]}_{\theta}\big \vert_i     = \zeta(q_i,k)\nu^{[k]}_{\theta}\big \vert_i \leftarrow \textrm{\small \sffamily \Dgreen Computed}
\end{gather}
where $\forall q_i \in Q, k \in[0,\infty), \zeta(q_i,k) \in [0,1]$ is a multiplicative factor which is modulated to have  decreasing values as  local congestion increases. Such modulation  forces automatic self-organization to compute alternate routes that tend to avoid the particular agent. The dynamics of such context-aware modulation  may  be non-trivial; while for slowly varying $\zeta(q_i,k)$, the convergence results presented here is expected to hold true, rapid fluctuations in  $\zeta(q_i,k)$  may be problematic.

\item We assumed that the link-specific failure probabilities are  estimated at the agents. Grossly incorrect estimations will translate to incorrect  routing decisions, and decentralized strategies for 
robust identification of these parameters need to be investigated at a greater depth. More specifically, the proposed algorithm needs to be augmented with learning schemes (possibly reinforcement learning based approaches) that estimate such failure probabilities.
\item Explicit design of implementation details such as packet headers, agent data structures and pertinent neighbor-neighbor communication protocols.
\item Hardware validation  on real systems.
\end{enumerate}

}
\bibliographystyle{IEEEtran}
\bibliography{BibLib,ref}

\begin{thebibliography}{10}
\providecommand{\url}[1]{#1}
\csname url@samestyle\endcsname
\providecommand{\newblock}{\relax}
\providecommand{\bibinfo}[2]{#2}
\providecommand{\BIBentrySTDinterwordspacing}{\spaceskip=0pt\relax}
\providecommand{\BIBentryALTinterwordstretchfactor}{4}
\providecommand{\BIBentryALTinterwordspacing}{\spaceskip=\fontdimen2\font plus
\BIBentryALTinterwordstretchfactor\fontdimen3\font minus
  \fontdimen4\font\relax}
\providecommand{\BIBforeignlanguage}[2]{{%
\expandafter\ifx\csname l@#1\endcsname\relax
\typeout{** WARNING: IEEEtran.bst: No hyphenation pattern has been}%
\typeout{** loaded for the language `#1'. Using the pattern for}%
\typeout{** the default language instead.}%
\else
\language=\csname l@#1\endcsname
\fi
#2}}
\providecommand{\BIBdecl}{\relax}
\BIBdecl

\bibitem{DG89}
J.~Deneubourg and S.~Goss, ``Collective patterns and decision-making,''
  \emph{Ethology, Ecology, and Evolution}, vol.~1, 1989.

\bibitem{Gage1992}
D.~Gage, ``Command and control for many-robot systems,'' \emph{In Unmanned
  Systems Magazine}, vol.~10, 1992.

\bibitem{Holl99}
\BIBentryALTinterwordspacing
O.~Holland and C.~Melhuish, ``Stigmergy, self-organization, and sorting in
  collective robotics,'' \emph{Artif. Life}, vol.~5, pp. 173--202, April 1999.
  [Online]. Available:
  \url{http://portal.acm.org/citation.cfm?id=338945.338956}
\BIBentrySTDinterwordspacing

\bibitem{Unsal1994}
C.~Unsal and J.~Bay, ``Spatial self-organization in large populations of mobile
  robots,'' in \emph{IEEE Int. Symp. on Intelligent Control}, 1994, pp.
  249--254.

\bibitem{Gage93}
D.~W. Gage, ``How to communicate with zillions of robots,'' in \emph{In
  Proceedings of SPIE Mobile Robots VIII}, 1993, pp. 250--257.

\bibitem{LB92}
M.~Lewis and G.~Bekey, ``The behavioral self-organization of nanorobots using
  local rules,'' in \emph{In Proc. 1992 IEEE/RSJ Int. Conf. Intelligent Robots
  and Systems}, 1992.

\bibitem{Payton01}
D.~Payton, R.~Estkowski, and M.~Howard, ``Compound behaviors in pheromone
  robotics,'' \emph{Robotics and Autonomous Systems}, vol.~44, no. 3-4, pp.
  229--240, 2001.

\bibitem{CR07}
I.~Chattopadhyay and A.~Ray, ``Language-measure-theoretic optimal control of
  probabilistic finite-state systems,'' \emph{Int. J. Control}, 2007.

\bibitem{CR08}
------, ``Structural transformations of probabilistic finite state machines,''
  \emph{International Journal of Control}, vol.~81, no.~5, pp. 820--835, May
  2008.

\bibitem{STMRK09}
T.~Schmickl, R.~Thenius, C.~Moeslinger, G.~Radspieler, S.~Kernbach,
  M.~Szymanski, , and K.~Crailsheim, ``Get in touch: cooperative decision
  making based on robot-to-robot collisions,'' \emph{Autonomous Agents and
  Multi-Agent Systems}, vol.~18, no.~1, pp. 133--155, 2009.

\bibitem{NCD08}
S.~Nouyan, A.~Campo, and M.~Dorigo, ``Path formation in a robot swarm -
  self-organized strategies to find your way home,'' \emph{Swarm Intelligence},
  vol.~2, no.~1, pp. 1--23, 2008.

\bibitem{HW09}
B.~Holldobler and E.~O. Wilson, \emph{The superorganism: the beauty, elegance,
  and strangeness of insect societies}.\hskip 1em plus 0.5em minus 0.4em\relax
  W.W. Norton \& Company, 2009.

\bibitem{Svennebring04}
J.~Svennebring and S.~Koenig, ``Building terrain-covering ant robots: A
  feasibility study,'' \emph{Autonomous Robots}, vol.~16, no.~3, pp. 313--332,
  2004.

\bibitem{WL99}
I.~Wagner, M.~Lindenbaum, and A.~Bruckstein, ``Distributed covering by ant-
  robots using evaporating traces,'' \emph{IEEE Transactions on Robotics and
  Automation}, vol.~15, 1999.

\bibitem{Couzin2005}
\BIBentryALTinterwordspacing
I.~D. Couzin, J.~Krause, N.~R. Franks, and S.~A. Levin, ``Effective leadership
  and decision-making in animal groups on the move,'' \emph{Nature}, vol. 433,
  no. 7025, pp. 513--516, Feb 2005. [Online]. Available:
  \url{http://dx.doi.org/10.1038/nature03236}
\BIBentrySTDinterwordspacing

\bibitem{Couzin2007}
\BIBentryALTinterwordspacing
I.~Couzin, ``Collective minds,'' \emph{Nature}, vol. 445, no. 7129, pp.
  715--715, Feb 2007. [Online]. Available:
  \url{http://dx.doi.org/10.1038/445715a}
\BIBentrySTDinterwordspacing

\bibitem{Nolfi00}
S.~Nolfi and D.~Floreano, \emph{Evolutionary Robotics: The
  Biology,Intelligence,and Technology}.\hskip 1em plus 0.5em minus 0.4em\relax
  Cambridge, MA, USA: MIT Press, 2000.

\bibitem{Bn00}
D.~S. Bernstein, R.~Givan, N.~Immerman, and S.~Zilberstein, ``The complexity of
  decentralized control of markov decision processes,'' in \emph{Proceedings of
  the Sixteenth Conference on Uncertainty in Artificial Intelligence
  (UAI-2000)}, 2000, pp. 32--37.

\bibitem{BGIZ02}
------, ``The complexity of decentralized control of markov decision
  processes,'' \emph{Math. Oper. Res.}, vol.~27, no.~4, pp. 819--840, 2002.

\bibitem{LGM01}
C.~Lusena, J.~Goldsmith, and M.~Mundhenk, ``Nonapproximability results for
  partially observable markov decision processes,'' \emph{Journal of Artificial
  Intelligence Research}, vol.~14, p. 2001, 2001.

\bibitem{KGZ07}
M.~Kumar, D.~Garg, and R.~Zachery, ``Multiple mobile agents control via
  artificial potential functions and random motion,'' in \emph{Proceedings of
  the ASME International Mechanical Engineering Congress and Exposition}.\hskip
  1em plus 0.5em minus 0.4em\relax Seattle, WA: ASME, November 2007.

\bibitem{SHK08}
S.~Sarkar, E.~Halland, and M.~Kumar, ``Mobile robot path planning using support
  vector machines,'' in \emph{ASME Dynamic Systems and Control
  Conference}.\hskip 1em plus 0.5em minus 0.4em\relax Ann Arbor, Michigan:
  ASME, 2008.

\bibitem{BK91-1}
J.~Borenstein and Y.~Koren, ``Potential field methods and their inherent
  limitations for mobile robot navigation,'' in \emph{Proceedings of the 1991
  IEEE International Conference on Robotics and Automation}, 1991, pp.
  1398--1404.

\bibitem{Volpe1990}
R.~Volpe and P.~Khosla, ``Manipulator control with superquadric artificial
  potential functions: theory and experiments,'' \emph{IEEE Transactions on
  Systems, Man, and Cybernetics,}, vol.~20, pp. 1423--1436, 1990.

\bibitem{Connolly1997}
C.~Connolly, ``Harmonic functions and collision probabilities,'' \emph{The
  International Journal of Robotics Research,}, vol.~16, pp. 497--507, 1997.

\bibitem{Vascak2007}
J.~Vascak, ``Navigation of mobile robots using potential fields and
  computational intelligence means,'' \emph{Acta Polytechnica Hungarica,},
  vol.~4, pp. 63--74, 2007.

\bibitem{KK93}
J.~Kim and P.~K. Khosla, ``Real-time obstacle avoidance using harmonic
  potential functions,'' \emph{IEEE Transactions on Robotics and Automation},
  vol.~8, no.~3, pp. 338--349, 1993.

\bibitem{S93}
K.~Sato, ``Deadlock-free motion planning using the laplace potential field,''
  \emph{Advanced Robotics}, vol.~5, no.~3, pp. 449--461, 1993.

\bibitem{AKH08}
Akishita, S.~Kawamura, and K.~Hayashi, ``New navigation function utilizing
  hydrodynamic potential for mobile robot,'' in \emph{IEEE Int. Workshop on
  Intelligent Motion Control}, 1990, pp. 413--417.

\bibitem{WM03}
S.~Waydo and R.~M. Murray, ``Vehicle motion planning using stream functions,''
  in \emph{Proc. IEEE Int. Conf. Robotics and Automation}, 2003, pp.
  2484--2491.

\bibitem{Shar1993}
F.~Sharifi and D.~Vinke, ``Integration of the artificial potential field
  approach with simulated annealing for robot path planning,'' 1993, pp.
  536--541.

\bibitem{Barraquand1992}
J.~Barraquand, B.~Langlois, and J.-C. Latombe, ``Numerical potential field
  techniques for robot path planning,'' \emph{IEEE Transactions on Systems,
  Man, and Cybernetics,}, vol.~22, pp. 224--241, 1992.

\bibitem{Barraquand1991}
J.~Barraquand and J.~Latombe, ``Robot motion planning: a distributed
  representation approach,'' \emph{The International Journal of Robotics
  Research,}, vol.~10, pp. 628--6449, 1991.

\bibitem{Park2003}
M.~Park and M.~Lee, ``A new technique to escape local minimum in artificial
  potential field based path planning,'' \emph{KSME International Journal,},
  vol.~17, pp. 1876--1885, 2003.

\bibitem{Li2000}
C.~Li, H.~Marcelo, H.~Krishnan, and L.~Yong, ``Virtual obstacle concept for
  local-minimum-recovery in potential-field based navigation,'' 2000, pp.
  983--988.

\bibitem{Bell2004}
G.~Bell and M.~Weir, ``Forward chaining for robot and agent navigation using
  potential fields,'' 2004, pp. 265--274.

\bibitem{Weir2006}
M.~Weir, A.~Buck, and J.~Lewis, ``A mind's eye approach to providing bug-like
  guarantees for adaptive obstacle navigation using dynamic potential fields,''
  \emph{Lecture Notes in Computer Science}, vol. 4095, pp. 239--250, 2006.

\bibitem{Borenstein1989}
J.~Borenstein and Y.~Koren, ``Real-time obstacle avoidance for fast mobile
  robots,'' \emph{IEEE Transactions on Systems, Man, and Cybernetics,},
  vol.~19, pp. 1179--1187, 1989.

\bibitem{Yun1997}
X.~Yun and K.~Tan, ``A wall-following method for escaping local minima in
  potential field based motion planning,'' in \emph{Proceedings of , 8th
  International Conference on Advanced Robotics}, 1997.

\bibitem{Mabrouk2008w}
M.~Mabrouk and C.~McInnes, ``An emergent wall following behaviour to escape
  local minima for swarms of agents,'' \emph{IAENG International Journal of
  Computer}, vol.~35, pp. 463--476, 2008.

\bibitem{HMU01}
J.~E. Hopcroft, R.~Motwani, and J.~D. Ullman, \emph{Introduction to Automata
  Theory, Languages, and Computation, 2nd ed.}\hskip 1em plus 0.5em minus
  0.4em\relax Addison-Wesley, 2001.

\bibitem{BR97}
R.~Bapat and T.~Raghavan, \emph{Nonnegative matrices and Applications}.\hskip
  1em plus 0.5em minus 0.4em\relax Cambridge University Press, 1997.

\bibitem{R88}
W.~Rudin, \emph{Real and Complex Analysis, 3rd ed.}\hskip 1em plus 0.5em minus
  0.4em\relax McGraw Hill, New York, 1988.

\bibitem{FS02}
E.~A. Feinberg and A.~Shwartz, Eds., \emph{Handbook of Markov Decision
  Processes: Methods and Algorithms}.\hskip 1em plus 0.5em minus 0.4em\relax
  Boston, MA: Kluwer, 2002.

\bibitem{Bertsekas1978}
D.~P. Bertsekas and S.~E. Shreve, \emph{Stochastic Optimal Control: The
  Discrete Time Case}.\hskip 1em plus 0.5em minus 0.4em\relax New York:
  Academic Press, 1978.

\bibitem{Bertsekas1987}
D.~P. Bertsekas, \emph{Dynamic Programming: Deterministic and Stochastic
  Models}.\hskip 1em plus 0.5em minus 0.4em\relax Englewood Cliffs, NJ:
  Prentice Hall, 1987.

\bibitem{RW87}
P.~J. Ramadge and W.~M. Wonham, ``Supervisory control of a class of discrete
  event processes,'' \emph{SIAM J. Control and Optimization}, vol.~25, no.~1,
  pp. 206--230, 1987.

\bibitem{Put90}
M.~L. Puterman, \emph{Markov decision processes in Handbooks in Operation
  Research and Management Science}, D.~P. Heyman and M.~J. Sobel, Eds.\hskip
  1em plus 0.5em minus 0.4em\relax Amsterdam: North–Holland, 1990.

\bibitem{GS75}
L.~G. Gubenko and E.~S. Statland, ``On controlled, discrete-time markov
  decision processes,'' \emph{Theory Probab. Math. Statist.}, vol.~7, p.
  47–61., 1975.

\bibitem{ABEFG93}
A.~Arapostathis, V.~S. Borkar, E.~Fernandez-Gaucherand, and M.~K. Ghosh,
  ``Discrete-time controlled markov processes with average cost criterion: A
  survey,'' \emph{SIAM J. Control and Optimization}, vol.~31, pp. 282--344,
  1993.

\bibitem{G31}
S.~Gerschgorin, ``\"{U}ber die abgrenzung der eigenwerte einer matrix,''
  \emph{Izv. Akad. Nauk. USSR Otd. Fiz.-Mat. Nauk}, vol.~7, pp. 749--754, 1931.

\bibitem{V04}
R.~S. Varga, \emph{Gerschgorin and His Circles}.\hskip 1em plus 0.5em minus
  0.4em\relax Germany: Springer, 2004.

\end{thebibliography}
\end{document}